\newcommand{\signequals}{\overset{\text{S}}{=}}
\newcommand{\BlackBox}{\rule{1.5ex}{1.5ex}}  % end of proof
    \renewenvironment{proof}{\par\noindent{\bf Proof\ }}{\hfill\BlackBox\\[2mm]}
    \newenvironment{proof}{\par\noindent{\bf Proof\ }}{\hfill\BlackBox\\[2mm]}
\newtheorem{theorem}{Theorem}%[section]
\newtheorem{proposition}[theorem]{Proposition}
\newtheorem{lemma}[theorem]{Lemma}
\newtheorem{example}[theorem]{Example}
\newtheorem{definition}[theorem]{Definition}
\newtheorem{assumption}[theorem]{Assumption}
\newtheorem{remark}[theorem]{Remark}
\newcommand{\alglinelabel}{%
  \addtocounter{ALC@line}{-1}% Reduce line counter by 1
  \refstepcounter{ALC@line}% Increment line counter with reference capability
  \label% Regular \label
}
\setlist{leftmargin=0.5cm,topsep=0cm,itemsep=0cm}
\newcommand{\qdpfixedpointlambda}{\hat{\eta}^{\pi}_{\lambda}}
\newcommand{\qdpfixedpointlambdaparams}{\hat{\theta}^{\pi}_{\lambda}}
\newcommand{\indexset}{J}
\newcommand{\projmap}{\mathrm{P}_{\indexset}}
\newcommand{\cone}{\mathrm{N}}
\begin{document}

\title{An Analysis of Quantile Temporal-Difference Learning}

\author{\name Mark Rowland \email markrowland@google.com \\
\addr Google DeepMind, London, UK
\AND
\name R\'emi Munos\\
\addr Google DeepMind, Paris, France
\AND
\name Mohammad Gheshlaghi Azar\\
\addr Google DeepMind, Seattle, USA
\AND
\name Yunhao Tang\\
\addr Google DeepMind, London, UK
\AND
\name Georg Ostrovski\\
\addr Google DeepMind, London, UK
\AND
\name Anna Harutyunyan\\
\addr Google DeepMind, London, UK
\AND
\name Karl Tuyls\\
\addr Google DeepMind, Paris, France
\AND
\name Marc G. Bellemare\\
\addr Reliant AI \& McGill University, Montr\'eal, Canada
\AND
\name Will Dabney \\
\addr Google DeepMind, Seattle, USA
}

\editor{Alexandre Proutiere}

\maketitle

\begin{abstract}%
We analyse quantile temporal-difference learning (QTD), a distributional reinforcement learning algorithm that has proven to be a key component in several successful large-scale applications of reinforcement learning.
Despite these empirical successes, a theoretical understanding of QTD has proven elusive until now.
Unlike classical TD learning, which can be analysed with standard stochastic approximation tools, QTD updates do not approximate contraction mappings, are highly non-linear, and may have multiple fixed points.
The core result of this paper is a proof of convergence to the fixed points of a related family of dynamic programming procedures with probability 1, putting QTD on firm theoretical footing.
The proof establishes connections between QTD and non-linear differential inclusions through stochastic approximation theory and non-smooth analysis.
\end{abstract}

\begin{keywords}
Reinforcement learning, temporal-difference learning, distributional reinforcement learning, stochastic approximation, differential inclusion.
\end{keywords}

\section{Introduction}

In distributional reinforcement learning, an agent aims to predict the full probability distribution over future returns it will encounter
\citep{morimura2010parametric,morimura2010nonparametric,bellemare2017distributional,bdr2022},
in contrast to predicting just the mean return, as in classical reinforcement learning \citep{sutton2018reinforcement}.
A widely-used family of algorithms for distributional reinforcement learning is based on the notion of learning \emph{quantiles} of the return distribution, an approach that originated with \citet{dabney2018distributional}, who introduced the quantile temporal-difference (QTD) learning algorithm. This approach has been particularly successful in combination with deep reinforcement learning, and has been a central component in several recent real-world applications, including sim-to-real stratospheric balloon navigation \citep{bellemare2020autonomous}, robotic manipulation \citep{bodnar2020quantile}, and algorithm discovery \citep{fawzi2022discovering}, as well as on benchmark simulated domains such as the Arcade Learning Environment \citep{bellemare13arcade,machado2018revisiting,dabney2018distributional,dabney2018implicit,yang2019fully} and racing simulation \citep{wurman2022outracing}.

Despite these empirical successes of QTD, little is known about its behaviour from a theoretical viewpoint. In particular, questions regarding the asymptotic behaviour of the algorithm (Do its predictions converge? Under what conditions? What is the qualitative character of the predictions when they do converge?) were left open. 
A core reason for this is that unlike classical TD, and other distributional reinforcement learning algorithms such as categorical temporal-difference learning \citep{rowland2018analysis,bdr2022}, the updates of QTD rely on asymmetric $L^1$ losses. As a result, these updates do not approximate the application of a contraction mapping, are highly non-linear (even in the tabular setting), and also may have multiple fixed points (depending on the exact structure of the reward distributions of the environment), and their analysis requires a distinct set of tools to those typically used to analyse temporal-difference learning algorithms.

In this paper, we prove the convergence of QTD---notably under weaker assumptions than are required in typical proofs of convergence for classical TD learning---establishing it as a sound algorithm with theoretical convergence guarantees, and paving the way for further analysis and investigation. The more general conditions stem from the structure of the QTD updates (namely, their boundedness), and the proof is obtained through the use of stochastic approximation theory with differential inclusions.

We begin by providing background on Markov decision processes, classical TD learning, and quantile regression in Section~\ref{sec:background}.
After motivating the QTD algorithm in Section~\ref{sec:qtd-and-qdp}, we describe the related family of quantile dynamic programming (QDP) algorithms, and provide a convergence analysis of these algorithms in Section~\ref{sec:dp-analysis}.
We then present the main result, a convergence analysis of QTD, in Section~\ref{sec:qtd-convergence}. The proof relies on the stochastic approximation framework set out by \citet{benaim2005stochastic}, arguing that the QTD algorithm approximates a continuous-time differential inclusion, and then constructing a Lyapunov function to demonstrate that the limiting behaviour of trajectories of the differential inclusion matches that of the QDP algorithms introduced earlier. Finally, in Section~\ref{sec:fixed-point-analysis}, we analyse the limit points of QTD, bounding their approximation error to the true return distributions of interest, and investigating the kinds of approximation artefacts that arise empirically.

\section{Background}\label{sec:background}

We first introduce background concepts and notation.

\subsection{Markov Decision Processes}

We consider a Markov decision process specified by finite state and action spaces $\mathcal{X}$ and  $\mathcal{A}$, transition kernel $P_\mathcal{X} : \mathcal{X} \times \mathcal{A} \rightarrow \mathscr{P}(\mathcal{X})$, reward distribution function $P_\mathcal{R} : \mathcal{X} \times \mathcal{A} \rightarrow \mathscr{P}_1(\mathbb{R})$, and discount factor $\gamma \in [0, 1)$. Here, $\mathscr{P}(\mathcal{X})$ is the set of probability distributions over the finite set $\mathcal{X}$, and $\mathscr{P}_1(\mathbb{R})$ is the set of probability distributions over $\mathbb{R}$ (with its usual Borel $\sigma$-algebra) with finite mean.

Given a policy $\pi : \mathcal{X} \rightarrow \mathscr{P}(\mathcal{A})$ and an initial state $x_0 \in \mathcal{X}$, an agent interacting with the environment using the policy $\pi$ generates a sequence of states, actions and rewards $(X_t, A_t, R_t)_{t =0}^\infty$, called a \emph{trajectory}, the joint distribution of which is determined by the transition dynamics and reward distributions of the environment, and the policy of the agent. More precisely, we have
\begin{itemize}
    \item $X_0 = x_0$, and for each $t \geq 0$:
    \item $A_t \mid X_{0:t}, A_{0:t-1}, R_{0:t-1} \sim \pi(\cdot|X_t)$;
    \item $R_t \mid X_{0:t}, A_{0:t}, R_{0:t-1} \sim P_\mathcal{R}(\cdot|X_t, A_t)$;
    \item $X_{t+1} \mid X_{0:t}, A_{0:t}, R_{0:t} \sim P_\mathcal{X}(\cdot|X_t, A_t)$.
\end{itemize}
The distribution of the trajectory is thus parametrised by the initial state $x_0$, and the policy $\pi$. To illustrate this dependency, we use the notation $\mathbb{P}^\pi_{x_0}$ and $\mathbb{E}^\pi_{x_0}$ to denote the probability distribution and expectation operator corresponding to this distribution, and will write $P^\pi(\cdot|x)$ for the joint distribution over a reward--next-state pair when the current state is $x$.

\subsection{Predicting Expected Returns and the Return Distribution}

The quality of the agent's performance on the trajectory is quantified by the \emph{discounted return}, or simply the \emph{return}, given by
\begin{align}\label{eq:random-return}
    \sum_{t =0}^\infty \gamma^t R_t \, .
\end{align}
The return is a random variable, whose sources of randomness are the random selections of actions made according to $\pi$, the randomness in state transitions, and the randomness in rewards observed. Typically in reinforcement learning, a single scalar summary of performance is given by the expectation of this return over all these sources of randomness. For a given policy, this is summarised across each possible starting state via the \emph{value function} $V^\pi : \mathcal{X} \rightarrow \mathbb{R}$, defined by
\begin{align}\label{eq:vpi}
    V^\pi(x) = \mathbb{E}^\pi_x\Bigg\lbrack \sum_{t =0}^\infty \gamma^t R_t \Bigg\rbrack \, .
\end{align}
Learning the value function of a policy $\pi$ from sampled trajectories generated through interaction with the environment is a central problem in reinforcement learning, referred to as the \emph{policy evaluation task}.

Each expected return is a scalar summary of a much more rich, complex object: the probability distributions of the random return in Equation~\eqref{eq:random-return} itself. \emph{Distributional reinforcement learning} \citep{bdr2022} is concerned with the problem of learning to predict the \emph{probability distribution} over returns, in contrast to just their expected value. Mathematically, the goal is to learn the return-distribution function $\eta^\pi : \mathcal{X} \rightarrow \mathscr{P}(\mathbb{R})$; for each state $x \in \mathcal{X}$, $\eta^\pi(x)$ is the probability distribution of the random return in Expression~\eqref{eq:random-return} when the trajectory begins at state $x$, and the agent acts using policy $\pi$. Mathematically, we have
\begin{align*}
    \eta^\pi(x) = \mathcal{D}^\pi_x\Bigg(
    \sum_{t \geq 0} \gamma^t R_t 
    \Bigg) \, ,
\end{align*}
where $\mathcal{D}^\pi_x$ extract the probability distribution of a random variable under $\mathbb{P}^\pi_x$.

There are several distinct motivations for aiming to learn these more complex objects. First, the richness of the distribution provides an abundance of signal for an agent to learn from, in contrast to a single scalar expectation. The strong performance of deep reinforcement learning agents that incorporate distributional predictions is hypothesised to be related to this fact \citep{dabney2018distributional,barth2018distributed,dabney2018implicit,yang2019fully}. Second, learning about the full probability distribution of returns makes possible the use of \emph{risk-sensitive} performance criteria; one may be interested in not only the expected return under a policy, but also the variance of the return, or the probability of the return being under a certain threshold.

Unlike the value function $V^\pi$, which is an element of $\mathbb{R}^{\mathcal{X}}$, and can therefore be straightforwardly represented on a computer (up to floating-point precision), the return-distribution function $\eta^\pi$ is not representable. Each object $\eta^\pi(x)$ is a probability distribution over the real numbers, and, informally speaking, probability distributions have infinitely many degrees of freedom. 
Distributional reinforcement learning algorithms therefore typically work with a subset of distributions that \emph{are} amenable to parametrisation on a computer \citep{bdr2022}. Common choices of subsets include categorical distributions \citep{bellemare2017distributional}, exponential families \citep{morimura2010parametric}, and mixtures of Gaussian distributions \citep{barth2018distributed}. Quantile temporal-difference learning, the core algorithm of study in this paper, aims to learn a particular set of quantiles of the return distribution, as described in Section~\ref{sec:qtd-and-qdp}.

\subsection{Monte Carlo and Temporal-Difference Learning}

To foreshadow our description and motivation of quantile temporal-difference learning, we recall a line of thinking that interprets the classical TD learning update rule as an approximation to Monte Carlo learning; this material is common to many introductory texts on reinforcement learning \citep{sutton2018reinforcement}, and we present it here to make a direct analogy with QTD. First, we may observe that, under the condition that all reward distributions have finite variance, $V^\pi(x)$ is the unique minimiser of the following loss function over $u \in \mathbb{R}$, the prediction of mean return at $x$:
\begin{align*}
    \mathcal{L}^\pi_x(u) = \frac{1}{2}\mathbb{E}^\pi_x\Big[\Big(u - \sum_{t=0}^\infty \gamma^t R_t\Big)^2\Big] \, .
\end{align*}
This well-known characterisation of the expectation of a random variable is readily verified by, for example, observing that the loss is convex and differentiable in $u$, and solving the equation $\partial_u \mathcal{L}^\pi_x(u) = 0$. This motivates an approach to learning $V^\pi(x)$ based on stochastic gradient descent on the loss function $\mathcal{L}^\pi_x$. We maintain an estimate  $V \in \mathbb{R}^{\mathcal{X}}$ of the value function, and each time a trajectory $(X_t, A_t, R_t)_{t \geq 0}$ beginning at state $x$ is observed, we can obtain an unbiased estimator of the negative gradient of $\mathcal{L}^\pi_x(V(x))$ as
\begin{align*}
    \sum_{t=0}^\infty \gamma^t R_t - V(x) \, ,
\end{align*}
and update $V(x)$ by taking a step in the direction of this negative gradient, with some step size $\alpha$:
\begin{align}\label{eq:mc-update}
    V(x) \leftarrow V(x) + \alpha \Big(\sum_{t=0}^\infty \gamma^t R_t - V(x)\Big) \, .
\end{align}

This is a \emph{Monte Carlo} algorithm, so called because it uses Monte Carlo samples of the random return to update the estimate $V$.

A popular alternative to this Monte Carlo algorithm is temporal-difference learning, which replaces samples from the random return with a \emph{bootstrapped} approximation to the return, obtained from a transition $(x, A, R, X')$ by combining the immediate reward $R$ with the current estimate of the expected return obtained at $X'$, resulting in the return estimate 
\begin{align}\label{eq:td-return-estimate}
    R + \gamma V(X') \, ,
\end{align}
and the corresponding update rule
\begin{align}\label{eq:td-update}
    V(x) \leftarrow V(x) + \alpha( R + \gamma V(X') - V(x)) \, .
\end{align}
While the mean-return estimator in Expression~\eqref{eq:td-return-estimate} is generally \emph{biased}, since $V(X')$ is not generally equal to the true expected return $V^\pi(X')$, it is often a lower-variance estimate, since we are replacing the \emph{random} return from $X'$ with an estimate of its expectation \citep{sutton1988learning,sutton2018reinforcement,kearns2000bias}.

This motivates the TD learning rule in Expression~\eqref{eq:td-update} based on the Monte Carlo update rule in Expression~\eqref{eq:mc-update}, with the understanding that this algorithm can be applied more generally, with access only to sampled transitions (rather than full trajectories), and may result in more accurate estimates of the value function, due to lower-variance updates, and the propensity of TD algorithms to ``share information'' across states. Note however that this does not \emph{prove} anything about the behaviour of temporal-difference learning, and a fully rigorous theory of the asymptotic behaviour emerged several years after TD methods were formally introduced \citep{sutton1984temporal,sutton1988learning,watkins1989learning,watkins1992q,dayan1992convergence,dayan1994td,jaakkola1994convergence,tsitsiklis1994asynchronous}.

\section{Quantile Temporal-Difference Learning and Quantile Dynamic Programming}
\label{sec:qtd-and-qdp}

We now present the main algorithms of study in this paper.

\subsection{Quantile Regression}\label{sec:quantile-regression}

To motivate QTD, we begin by considering how we might adapt a Monte Carlo algorithm such as that in Expression~\eqref{eq:mc-update} to learn about the distribution of returns, rather than just their expected value.
We cannot learn the return distribution in its entirety with a finite collection of parameters; the space of return distributions is infinite-dimensional, so we must instead be satisfied with learning an approximation of the return distribution by selecting a \emph{probability distribution representation} \citep[Chapter~5]{bdr2022}: a subset of probability distributions parametrised by a finite-dimensional set of parameters.
The approach of quantile temporal-difference learning is to learn an approximation of the form
\begin{align}\label{eq:quantile-rep}
    \eta(x) = \sum_{i=1}^m \frac{1}{m} \delta_{\theta(x, i)} \, ;
\end{align}
an equally-weighted mixture of Dirac deltas, for each state $x \in \mathcal{X}$. The quantile-based approach to distributional reinforcement learning aims to have the particle locations $(\theta(x, i))_{i=1}^m$ approximate certain \emph{quantiles} of $\eta^\pi(x)$.

\begin{definition}\label{def:quantile}
    For a probability distribution $\nu \in \mathscr{P}(\mathbb{R})$ and parameter $\tau \in (0, 1)$, the set of $\tau$-quantiles of $\nu$ is given by the set
    \begin{align*}
        \{ z \in \mathbb{R} : F_\nu(z) = \tau \} \cup \inf\{ y \in \mathbb{R} : F_\nu(y) > \tau \} \, ,
    \end{align*}
    where $F_\nu : \mathbb{R} \rightarrow [0, 1]$ is the CDF of $\nu$, defined by $F_\nu(t) = \mathbb{P}_{Z \sim \nu}(Z \leq t)$ for all $t \in \mathbb{R}$.
\end{definition}

Expanding on this definition, if the set $\{ z \in \mathbb{R} : F_\nu(z) = \tau \}$ is non-empty, then the $\tau$-quantiles are precisely the values $z$ such that $\mathbb{P}_{Z \sim \nu}(Z \leq z) = \tau$. If however this set is empty (which may arise when $F_\nu$ has points of discontinuity), then the quantile is the smallest value $y$ such that $\mathbb{P}_{Z \sim \nu}(Z \leq y) > \tau$.
Note also that if $F_\nu$ is strictly increasing, this guarantees uniqueness of each $\tau$-quantile for $\tau \in (0, 1)$; this is often a useful property in the analysis we consider later. These different cases are illustrated in Figure~\ref{fig:quantiles}. The generalised inverse CDF of $\nu$, $F^{-1}_\nu : (0, 1) \rightarrow \mathbb{R}$, is defined by
\begin{align*}
    F^{-1}_\nu(\tau) = \inf \{ y : F_\nu(y) \geq \tau \} \, ,
\end{align*}
and provides a way of uniquely specifying a quantile for each level $\tau$. In cases where there is not a unique $\tau$-quantile (see Figure~\ref{fig:quantiles}), $F^{-1}_\nu(\tau)$ corresponds to the \emph{left-most} or \emph{least} valid $\tau$-quantile. We also introduce the notation
\begin{align*}
    \bar{F}^{-1}_\nu(\tau) = \inf \{ y : F_\nu(y) > \tau \} \, ,
\end{align*}
which corresponds to the \emph{right-most} or \emph{greatest} $\tau$-quantile; notice the strict inequality that appears in the definition, in contrast to that of $F^{-1}_\nu(\tau)$. If $F^{-1}_\nu$ is continuous at $\tau$, then $F^{-1}_\nu(\tau) = \bar{F}^{-1}_\nu(\tau)$, as is the case for $\tau = \tau_1$ and $\tau = \tau_3$ in Figure~\ref{fig:quantiles}. However, if $F_\nu$ has a flat region for the value $\tau$ (as is the case for $\tau=\tau_2$ in Figure~\ref{fig:quantiles}), then $F^{-1}_\nu(\tau)$ and $\bar{F}^{-1}_\nu(\tau)$ are distinct, and correspond to the boundary points of this flat region.

\begin{figure}
    \centering
    \includegraphics[keepaspectratio,width=.6\textwidth]{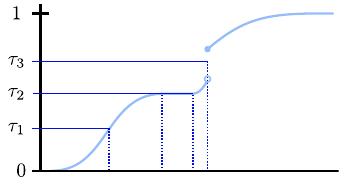}
    \caption{The three distinct scenarios that arise in defining quantiles. Firstly, there is a value $z_1$ for which $F_\nu(z_1)=\tau_1$ and at which $F_\nu$ is strictly increasing. Therefore $z_1$ is the unique $\tau_1$-quantile of $\nu$. Next, there is an interval $[z_2, z_2']$ on which $F_\nu$ equals $\tau_2$, therefore all elements in this interval are $\tau_2$-quantiles of $\nu$. Finally, there is no value $z$ such that $F_\nu(z) = \tau_3$, and the unique $\tau_3$-quantile is therefore defined by the infimum part of the definition.}
    \label{fig:quantiles}
\end{figure}

Algorithmically, we aim for $\theta(x, i)$ to approximate a $\tau_i$-quantile of $\eta^\pi(x)$, where $\tau_i = \nicefrac{2i-1}{2m}$.
To build learning algorithms that achieve this, we require an incremental algorithm that updates $\theta(x, i)$ in response to samples from the target distribution $\eta^\pi(x)$, which converges to a $\nicefrac{2i-1}{2m}$-quantile of $\eta^\pi(x)$.

Such an approach is available by using the quantile regression loss. We define the quantile regression loss associated with distribution $\nu \in \mathscr{P}(\mathbb{R})$ and quantile level $\tau \in (0, 1)$ as a function of $v$ by
\begin{align}\label{eq:qr-loss}
    \mathbb{E}_{Z \sim \nu}[( \tau \mathbbm{1}\{Z \geq v \} + (1-\tau) \mathbbm{1}\{ Z < v \} ) | Z - v |] \, .
\end{align}
This loss is the expectation of an asymmetric absolute value loss, in which positive and negative errors are weighted according to the parameters $\tau $ and $1-\tau$ respectively. Just as the expected squared loss encountered above encodes the mean as its unique minimiser, the quantile regression loss encodes the $\tau$-quantiles of $\nu$ as the unique minimisers; see, for example, \citet{koenker2005quantile} for further background. Thus, applying the quantile regression loss to the problem of estimating $\tau$-quantiles of the return distribution, we arrive at the loss 
\begin{align*}
    \mathcal{L}^{\tau, \pi}_x(v) = \mathbb{E}^\pi_x\Big[( \tau \mathbbm{1}\{\Delta \geq 0 \} + (1-\tau) \mathbbm{1}\{ \Delta < 0 \} ) | \Delta |\Big] \, , \quad \text{where } \Delta =   \sum_{t=0}^\infty \gamma^t R_t - v \, .
\end{align*}
Given an observed return $\sum_{t \geq 0} \gamma^t R_t$ from the state $x$, we therefore have that an unbiased estimator of the negative gradient\footnote{Technically speaking, we are assuming that differentiation and expectation can be interchanged here. Further, under certain circumstances the loss is only \emph{sub}-differentiable. As our principal goal in this section is to provide intuition for QTD, we do not comment further on these technical details here. The convergence results later in the paper deal with these issues carefully.} of this loss is
\begin{align*}
    \tau \mathbbm{1}\Big\{\sum_{t = 0}^\infty \gamma^t R_t \geq v \Big\} - (1-\tau) \mathbbm{1}\Big\{ \sum_{t = 0}^\infty \gamma^t R_t < v \Big\} \, ,
\end{align*}
which motivates an update rule of the form
\begin{align}\label{eq:mc-qr}
    \theta(x, i) \leftarrow \theta(x, i) + \alpha \Big( \tau_i \mathbbm{1}\Big\{\sum_{t = 0}^\infty \gamma^t R_t \geq \theta(x, i) \Big\} - (1-\tau_i) \mathbbm{1}\Big\{ \sum_{t = 0}^\infty \gamma^t R_t < \theta(x, i) \Big\} \Big) \, .
\end{align}
This can be rewritten as
\begin{align}\label{eq:mc-qr-simple}
    \theta(x, i) \leftarrow \theta(x, i) + \alpha \Big( \tau_i - \mathbbm{1}\Big\{\sum_{t = 0}^\infty \gamma^t R_t < \theta(x, i) \Big\} \Big) \, .
\end{align}
This is essentially the application of the stochastic gradient descent method for quantile regression to learning quantiles of the return distribution.

\subsection{Quantile Temporal-Difference Learning}
\label{sec:qtd}\label{sec:qtd-alg}

We can motivate and describe the quantile temporal-difference learning algorithm \citep{dabney2018distributional,bdr2022} by modifying the Monte Carlo algorithm in Expression~\eqref{eq:mc-qr} in a similar manner to the modification that led to the TD algorithm in Expression~\eqref{eq:td-update}. We replace the Monte Carlo return
\begin{align*}
    \sum_{t=0}^\infty \gamma^t R_t
\end{align*}
based on a full trajectory, with an approximate sample from the return distribution derived from an observed transition $(x, R, X')$, and the estimate $\eta(X')$ of the return distribution at state $X'$. If the return distribution estimate $\eta(X')$ takes the form
given in Equation~\eqref{eq:quantile-rep}, as is the case for the probability distribution representation considered here, then such a sample return is obtained as
\begin{align*}
    R + \gamma \theta(X', J) \, ,
\end{align*}
with $J$ sampled uniformly from $\{1,\ldots,m\}$. This yields the update rule
\begin{align*}
    \theta(x, i) \leftarrow \theta(x, i) + \alpha \Big( \tau_i - \mathbbm{1}\Big\{R + \gamma \theta(X', J) < \theta(x, i) \Big\} \Big) \, .
\end{align*}
We can consider also a variance-reduced version of this update, in which we average over updates performed under different realisations of $J$, leading to the update
\begin{align}\label{eq:qtd-update}
    \theta(x, i) \leftarrow \theta(x, i) + \frac{\alpha}{m} \sum_{j=1}^m \Big( \tau_i - \mathbbm{1}\Big\{R + \gamma \theta(X', j) < \theta(x, i) \Big\} \Big) \, .
\end{align}
This is precisely the quantile temporal-difference learning update, presented in Algorithm~\ref{alg:qtd} below, which underlies many recent successful applications of reinforcement learning at scale \citep{dabney2018distributional,dabney2018implicit,yang2019fully,bellemare2020autonomous,wurman2022outracing,fawzi2022discovering}. Similar to other temporal-difference learning algorithms, QTD updates its parameters $((\theta(x, i))_{i=1}^m : x \in \mathcal{X})$ on the basis of sample transitions $(x, r, x')$ generated through interaction with the environment via the policy $\pi$, comprising a state, reward, and next state.

\begin{algorithm}
    \begin{algorithmic}[1]
        \REQUIRE Quantile estimates $\theta \in \mathbb{R}^{\mathcal{X} \times [m]}$ , \\
        $\quad\quad\ \ \,$ Observed transition $(x, r, x')$ , \\
        $\quad\quad\ \ \,$ Learning rate $\alpha$.
        \STATE Set $\tau_i = \tfrac{2i-1}{2m}$ for each $i=1,\ldots,m$.
        \FOR{$i = 1,\ldots,m$}
            \STATE Set $\theta'(x, i) \leftarrow \theta(x, i) + \alpha \frac{1}{m} \sum_{j=1}^m \left\lbrack \tau_i - \mathbbm{1}\Big\{r + \gamma \theta(x', j) - \theta(x, i) < 0 \Big\} \right\rbrack $
        \ENDFOR
        \FOR{$i = 1,\ldots,m$}
            \STATE Set $\theta(x, i) \leftarrow \theta'(x, i)$
        \ENDFOR
        \STATE \textbf{return} $((\theta'(x, i))_{i=1}^m : x \in \mathcal{X})$
    \end{algorithmic}
    \caption{QTD update}
    \label{alg:qtd}
\end{algorithm}

Whilst the QTD update makes use of temporal-difference errors $r + \gamma \theta(x', j) - \theta(x, i)$, there are two key differences to the use of analogous quantities in classical TD learning. First, the TD errors influence the update only through their sign, not their magnitude. Second, the predictions at each state $(\theta(x, i))_{i=1}^m$ are indexed by $i$, and each update includes a distinct term $\tau_i$ (equal to $\nicefrac{2i-1}{2m}$). The presence of these terms causes the learnt parameters to make distinct predictions, as described in Section~\ref{sec:quantile-regression}. Practical implementations of QTD use these precise values for $\tau_i$, equally spaced out on $[0,1]$, as proposed by \citet{dabney2018distributional}.
Much of the analysis in this paper goes through straightforwardly for other values
of $\tau_i$, though we will see in Section~\ref{sec:fixed-point-analysis} that this choice is well motivated in that it provides the best bounds on distribution approximation.
The tabular QTD algorithm as described in Algorithm~\ref{alg:qtd} uses a factor $O(m)$ times more memory than an analogous classical TD algorithm, owing to the need to store multiple predictions at each state, though the scaling with the size of the state space is the same as for classical TD. For further discussion of the computational complexity of QTD, see \citet[Appendix~A.3;][]{rowland2023statistical}.
Much of the analysis in this paper goes through straightforwardly for other values of $\tau_i$, though we will see in Section~\ref{sec:fixed-point-analysis} that this choice is well motivated in that it provides the best bounds on distribution approximation.

The discussion above provides \emph{motivation} for the form of the QTD update given in Algorithm~\ref{alg:qtd}, and intuition as to why this algorithm might perform reasonably, and learn a sensible approximation to the return distribution. However, it stops short of providing an explanation of how the algorithm should be expected to behave, or providing any theoretical guarantees as to what the algorithm will in fact converge to. A core goal of the sections that follow is to answer these questions, and put QTD on firm theoretical footing.

\subsection{Motivating Examples}

Before undertaking an analysis of QTD, we pause to provide several numerical examples of its behaviour in small environments.
These examples provide further intuition for the characteristics of the algorithm, illustrate the breadth of qualitative behaviours it can exhibit, and provide motivation for the kinds of theoretical questions we might hope to answer.

\begin{example}
    Consider the chain MDP illustrated at the top of Figure~\ref{fig:example1}. The random return at each state is a sum of independent Gaussian random variables, and hence the return distribution at each state is Gaussian.
    The centre plot in Figure~\ref{fig:example1} illustrates the evolution of $m=5$ quantile estimates learnt by QTD, using a constant learning rate of 0.01, and updating all states at each update. The estimated quantile values eventually settle after around 6,000 updates, with small oscillations around this point.
    The bottom of Figure~\ref{fig:example1} compares the true return distribution at each state (in blue), with the approximation learnt by QTD (in black), and the approximation obtained with the true value of the five quantiles of interest (grey).
    The behaviour of QTD in this case raises several questions:
    Can it be shown that QTD is guaranteed to stabilise/converge around a certain point?
    Can a guarantee be given on the quality of the approximate distributions learnt by QTD?
\end{example}

\begin{figure}
    \centering
    \includegraphics[keepaspectratio,width=.75\textwidth]{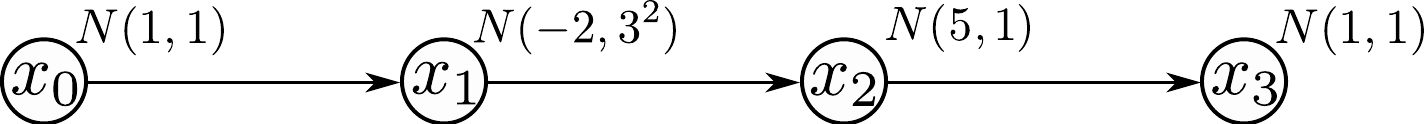}
    
    \includegraphics[keepaspectratio,width=.8\textwidth]{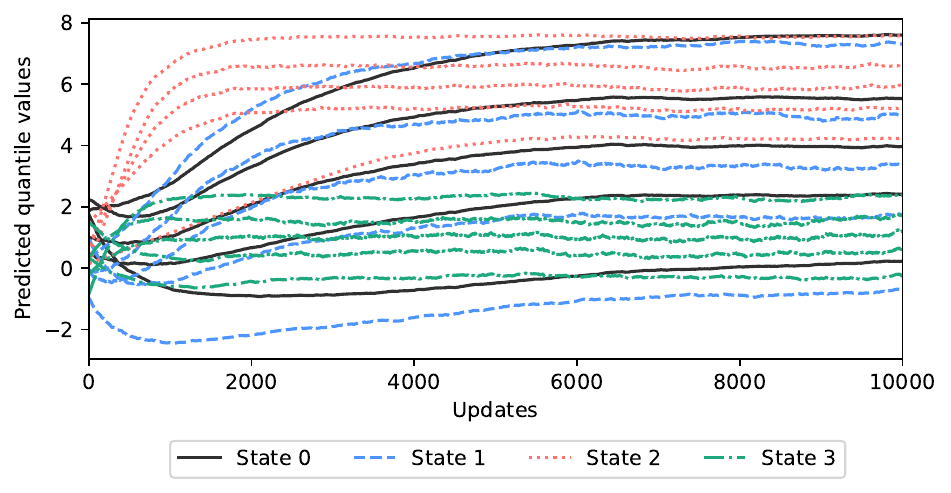}
    
    \null\hfill
    \includegraphics[keepaspectratio,width=.20\textwidth]{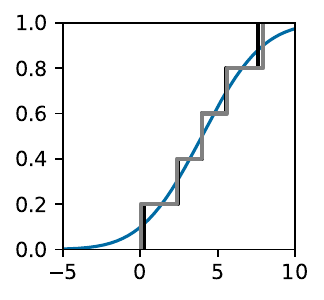}
    \hfill
    \includegraphics[keepaspectratio,width=.20\textwidth]{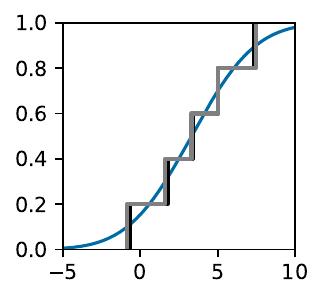}
    \hfill
    \includegraphics[keepaspectratio,width=.20\textwidth]{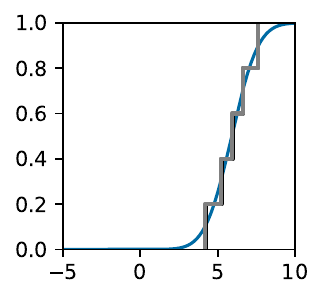}
    \hfill
    \includegraphics[keepaspectratio,width=.20\textwidth]{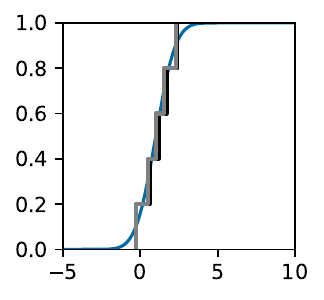}
    \hfill\null
    
    \null\hfill
    \includegraphics[keepaspectratio,width=.20\textwidth]{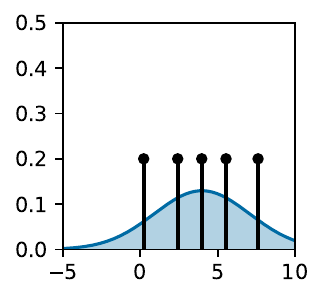}
    \hfill
    \includegraphics[keepaspectratio,width=.20\textwidth]{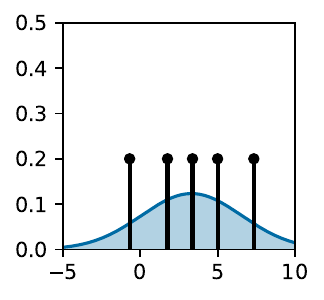}
    \hfill
    \includegraphics[keepaspectratio,width=.20\textwidth]{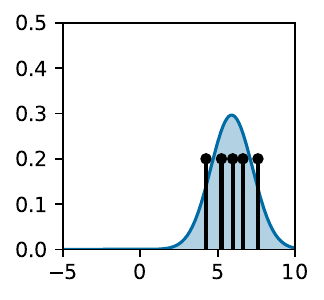}
    \hfill
    \includegraphics[keepaspectratio,width=.20\textwidth]{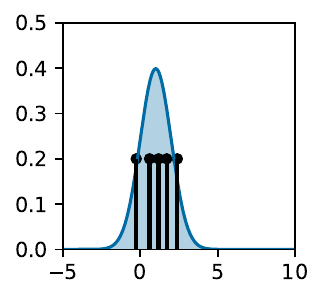}
    \hfill\null
    
    \caption{
    Top: A chain MDP with four states. Each transition yields a normally-distributed reward; from $x_3$, the episode ends. The discount factor is $\gamma = 0.9$.
    Centre-top: The progress of QTD, run with $m=5$ quantiles, over the course of 10,000 updates. The vertical axis corresponds to the predicted quantile values.
    Centre-bottom: The true CDF of the return distribution (blue) at each state, along with the final estimate produced by QTD (black), and the approximation produced by the quantiles of the return distribution (grey).
    Bottom: The PDF of the return distribution (blue) at each state, along with the final quantile approximation produced by QTD (black).
    }
    \label{fig:example1}
\end{figure}

\begin{example}\label{ex:2d}
    For a different perspective on the behaviour of QTD, consider a two-state MDP with transition dynamics as illustrated in the top-left of Figure~\ref{fig:example2}, and discount factor $\gamma = 0.5$. The reward obtained when transitioning from state $x_1$ is distributed as N(2, 1), and the reward obtained when transitioning from state $x_2$ is distributed as N(-1, 1); here, we write $N(\mu, \sigma^2)$ for the normal distribution with mean $\mu$ and variance $\sigma^2$. We consider the case of learning a single quantile (the median) at each of these two states, taking $m=1$; this allows us to plot the full phase space of the QTD algorithm in a two-dimensional plot.
    
    The top-right of Figure~\ref{fig:example2} shows a path taken by QTD under this MDP. In addition, the streamplot illustrates the direction of the \emph{expected} update that QTD undertakes at each point in phase space. We empirically observe convergence of the algorithm to a point. Additionally, the expected update direction changes smoothly; the result is a vector field that appears to point towards the point of convergence from all directions.
    
    The bottom-left of Figure~\ref{fig:example2} shows a path taken by QTD under a modified version of the MDP, in which the reward distributions N(2, 1) and N(-1, 1) are replaced with $\delta_2$ and $\delta_{-1}$, respectively. We observe that the algorithm still converges to a point, although the vector field of expected update directions is now piecewise constant, with discontinuities along several lines. This behaviour is typical of QTD; the less `smooth' the reward distributions in the MDP, the more abrupt the changes in behaviour we typically observe with QTD.
    
    Finally, we consider a modified version of the MDP in which all transition probabilities are $\nicefrac{1}{2}$, rewards from state $x_1$ are always $2$, and rewards from state $x_2$ are always $-1$. In this case, QTD no longer appears to converge to a point, but instead converges to the set bounded by the four grey lines appearing in the bottom-right of Figure~\ref{fig:example2}, and subsequently performing a random walk over this set. This collection of examples illustrates that QTD can exhibit a fairly wide family of behaviours depending on the characteristics of the environment. In particular, non-uniqueness of quantiles in reward distributions (corresponding to flat regions in reward distribution CDFs) can lead to multiple possible limit points, and discontinuities in reward distributions can lead to discontinuous changes in expected updates; by contrast, reward distributions that are absolutely continuous lead to smooth changes in expected dynamics.
\end{example}

\begin{figure}
    \centering
    
    \null\hfill
    \includegraphics[keepaspectratio,width=.38\textwidth,raise=1.7cm]{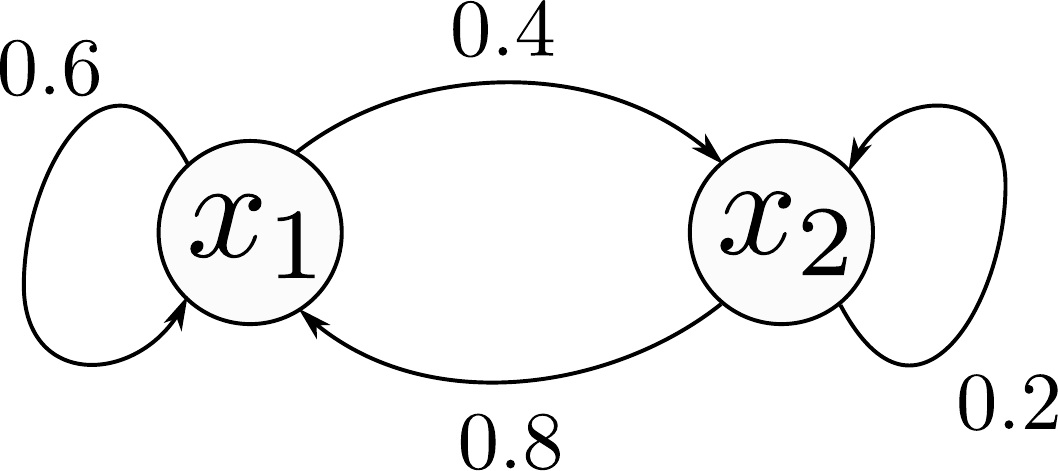}
    \hfill
    \includegraphics[keepaspectratio,width=.38\textwidth]{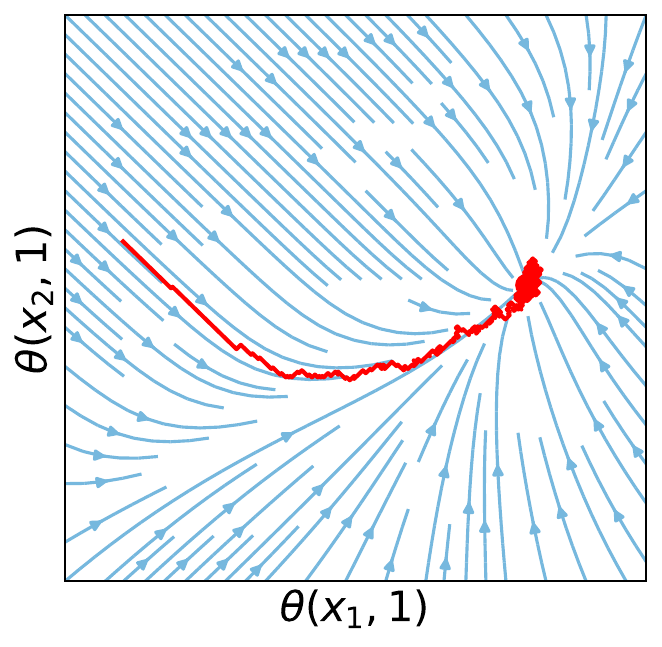}
    \hfill\null
    
    \null\hfill
    \includegraphics[keepaspectratio,width=.38\textwidth]{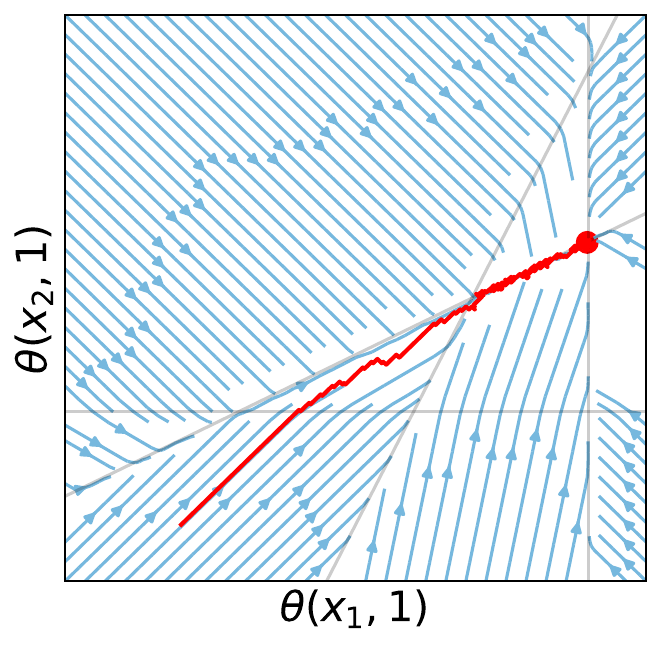}
    \hfill
    \includegraphics[keepaspectratio,width=.38\textwidth]{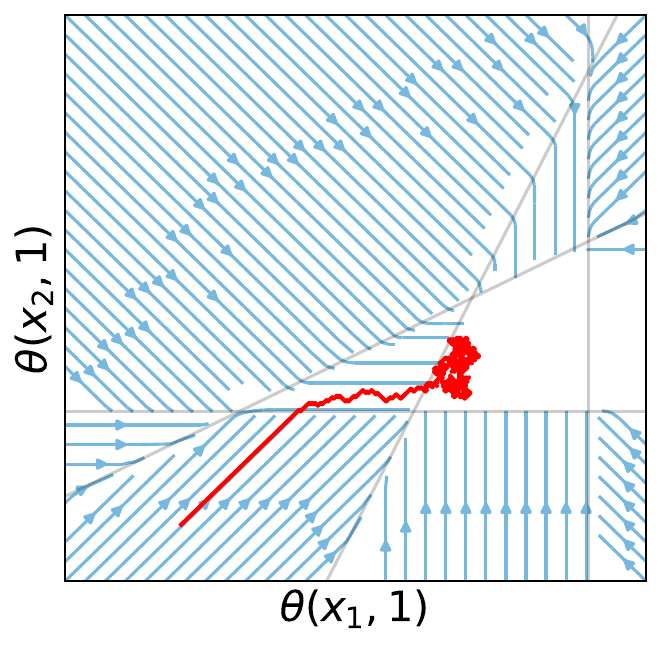}
    \hfill\null
    
    \caption{
    Top left: The example Markov decision process described in Example~\ref{ex:2d}.
    Top right: Example dynamics of QTD with $m=1$ in this environment, when reward distributions are Gaussian. Also included are the directions of expected update, in blue.
    Bottom left: Example dynamics and expected update directions when reward distributions are Dirac deltas.
    Bottom right: Example dynamics and expected updates with modified environment transition probabilities.}
    \label{fig:example2}
\end{figure}

\subsection{Quantile Dynamic Programming}\label{sec:dp}

Recall the QTD update given in Equation~\eqref{eq:qtd-update}. As described in Section~\ref{sec:qtd-alg}, this update serves, on average, to move $\theta(x, i)$ in the direction of the $\tau_i$-quantiles of the distribution of the random variable $R + \theta(X', J)$, where $(x, R, X')$ is a random transition generated by interacting with the environment using $\pi$, and $J \sim \text{Unif}(\{1,\ldots,m\})$.

Suppose we were able to update $\theta(x, i)$ not just with a single gradient step in this direction, but instead were able to update it to take on exactly this quantile value. This motivates a dynamic programming alternative to QTD, \emph{quantile dynamic programming} (QDP), which directly calculates these quantiles iteratively, in a similar manner to iterative policy evaluation in classical reinforcement learning \citep{bertsekas1996neuro}.

The mathematical structure of such an algorithm is given in Algorithm~\ref{alg:qdp}. This stops short of being an implementable algorithm, since we do not describe in what format the transition probabilities and reward distributions are available, which are required to evaluate the inverse CDFs that arise in the algorithm. 
However, for MDPs in which transition probabilities and reward distributions are available, QDP is an algorithmic framework of interest in its own right, and to this end we provide several concrete implementations in Appendix~\ref{sec:qdp-implementations}.

The QDP template in Algorithm~\ref{alg:qdp} is parametrised by the interpolation parameters $\lambda \in [0,1]^{\mathcal{X} \times [m]}$. These parameters control exactly which quantile is chosen when the desired quantile level $\tau_i$ corresponds to a flat region of the CDF for the distribution $\nu$ (the second case in Figure~\ref{fig:quantiles}). QDP was originally presented by \citet{bdr2022} in the case $\lambda(x, i) \equiv 0$; the presentation here generalises QDP to a family of algorithms, parametrised by $\lambda$.

Our interest in QDP stems from the fact that QTD can be viewed as approximating the behaviour of the QDP algorithms, without requiring access to the transition structure and reward distributions of the environment. In particular, we will show that under appropriate conditions, the asymptotic behaviour of QTD and QDP are equivalent: they both converge to the same limiting points. Figure~\ref{fig:qdp-ex} illustrates the behaviour of the QDP algorithm in the environment described in Example~\ref{ex:2d}; since the reward distributions in this example have strictly increasing CDFs, QDP behaves identically for all choices of interpolation parameters $\lambda$. QTD and QDP appear to have the same asymptotic behaviour, converging to the same limiting point. In cases where QTD appears to converge to a set, such as in the bottom-right plot of Figure~\ref{ex:2d}, the relationship is slightly more complicated, and there is a correspondence between the asymptotic behaviour of QTD and the family of dynamic programming algorithms parametrised by $\lambda$, as illustrated at the bottom of Figure~\ref{fig:qdp-ex}. Thus, to understand the asymptotic behaviour of QTD, we begin by analysing the asymptotic behaviour of QDP.

\begin{figure}
    \centering
    
    \null\hfill
    \includegraphics[keepaspectratio,width=.4\textwidth]{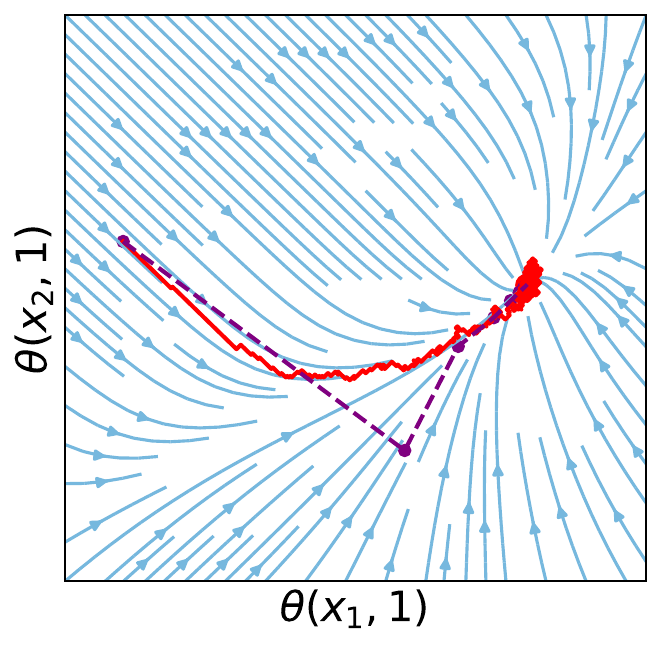}\hfill
    \includegraphics[keepaspectratio,width=.4\textwidth]{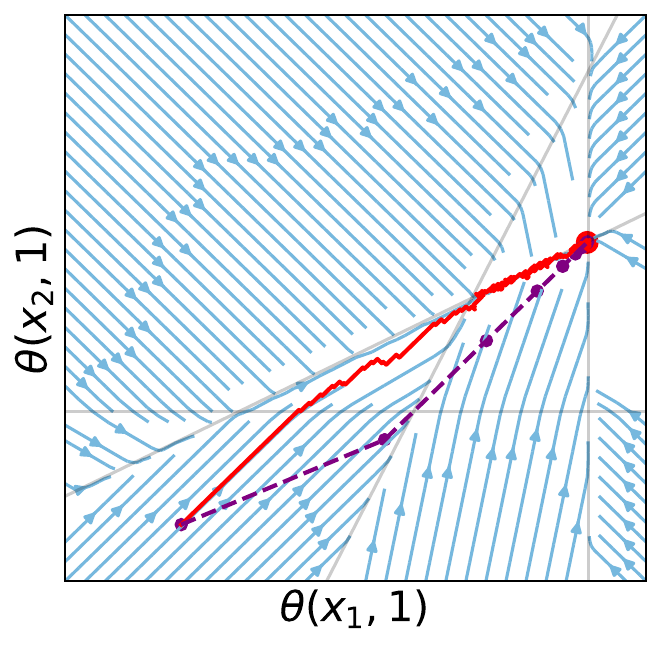}
    \hfill\null
    
    \null\hfill
    \includegraphics[keepaspectratio,width=.4\textwidth]{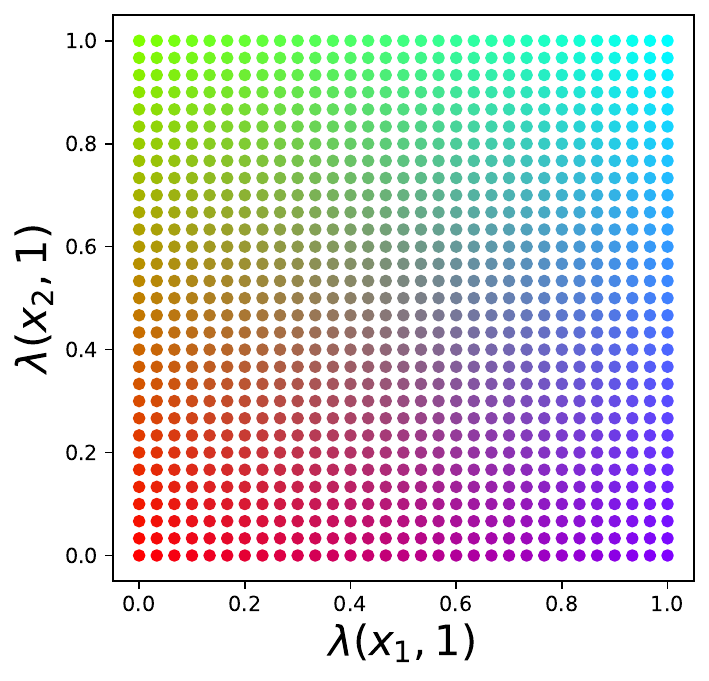}\hfill
    \includegraphics[keepaspectratio,width=.4\textwidth]{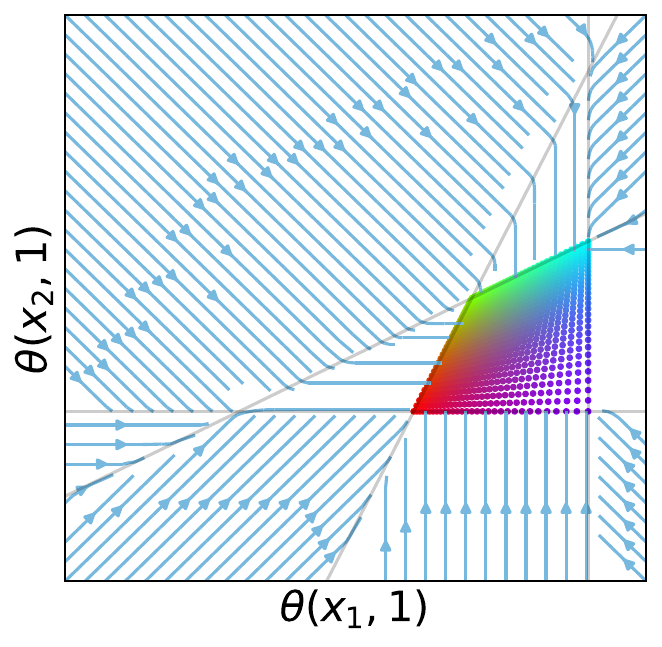}
    \hfill\null
    
    \caption{
    Top left: Illustration of QDP (dashed purple) and QTD (solid red) on the first MDP from Example~\ref{ex:2d}, with Gaussian rewards.
    Top right: Illustration of QDP and QTD on the second MDP from Example~\ref{ex:2d}, with deterministic rewards.
    Bottom: Values of $\lambda$ and corresponding fixed points of QDP in the final MDP from Example~\ref{ex:2d}.}
    \label{fig:qdp-ex}
\end{figure}

\begin{algorithm}[h]
    \begin{algorithmic}[1]
        \REQUIRE Quantile estimates $((\theta(x, i))_{i=1}^m : x \in \mathcal{X})$, \\
        Interpolation parameters $\lambda \in [0, 1]^{\mathcal{X} \times [m]}$.
        \FOR{$x \in \mathcal{X}$}
            \STATE Let $(x, R, X')$ be a random transition under $\pi$, and $J \sim \text{Unif}(\{1,\ldots,m\})$.
            \STATE Set $\nu$ to be the distribution of $R + \gamma \theta(X', J)$.
            \FOR{$i =1,\ldots,m$}
                \STATE Set $\theta(x, i) \leftarrow (1 - \lambda(x, i)) F^{-1}_\nu(\tau_i) + \lambda(x, i) \bar{F}^{-1}_\nu(\tau_i)$. \label{algline:quantiles}
            \ENDFOR
        \ENDFOR
        \STATE \textbf{return} $((\theta(x, i))_{i=1}^m : x \in \mathcal{X})$
    \end{algorithmic}
    \caption{Quantile dynamic programming}
    \label{alg:qdp}
\end{algorithm}

\section{Convergence of Quantile Dynamic Programming}\label{sec:dp-analysis}

We can decompose the update QDP performs into the composition of several operators. Algorithm~\ref{alg:qdp} manipulates tables of the form $((\theta(x, i))_{i=1}^m : x \in \mathcal{X})$.
For a given state $x$, the vector $(\theta(x, i))_{i=1}^m$ represents the estimated $\nicefrac{2i-1}{2m}$-quantiles of the return distribution at state $x$, for $i=1,\ldots,m$. In mathematically analysing the algorithm, it is useful to be able to refer to the distribution encoded by these quantiles:
\begin{align}\label{eq:quantiles-to-dist}
    \frac{1}{m}\sum_{i=1}^m \delta_{\theta(x, i)} \, ,
\end{align}
and reason about the transformations undertaken by Algorithm~\ref{alg:qdp} directly in terms of distributions. To this end, if we write $\eta(x) \in \mathscr{P}(\mathbb{R})$ for the probability distribution associated with the quantile estimates $(\theta(x, i))_{i=1}^m$, we can interpret the transformation performed by Algorithm~\ref{alg:qdp} as comprising two parts, which we now describe in turn.

First, the variable $\eta(x)$ is assigned the distribution of $R + \gamma G(X')$, where $R, X'$ are the random reward and next-state encountered from the initial state $x$ with policy $\pi$, and $(G(y) : y \in \mathcal{X})$ is an independent collection of random variables, with each $G(y)$ distributed according to $\eta(y)$.

We write $\mathcal{T}^\pi : \mathscr{P}(\mathbb{R})^\mathcal{X} \rightarrow \mathscr{P}(\mathbb{R})^\mathcal{X}$ for this transformation. The function $\mathcal{T}^\pi$ is known as the \emph{distributional Bellman operator} \citep{bellemare2017distributional,rowland2018analysis,bdr2022}. In terms of the above definition via distributions of random variables, $\mathcal{T}^\pi$ can be written
\begin{align*}
    (\mathcal{T}^\pi \eta)(x) = \mathcal{D}_\pi( R + \gamma G(X') ) \, ,
\end{align*}
where $(x, R, X')$ is a random environment transition beginning at $x$, independent of $(G(y) : y \in \mathcal{X})$, and $\mathcal{D}_\pi$ extracts the distribution of its argument when $(x, R, X')$ is generated by sampling an action from $\pi$. See \citet{bdr2022} for further background on the distributional Bellman operator.

In general, $\mathcal{T}^\pi \eta$ may comprise much more complicated distributions than $\eta$ itself, with many more atoms, or possibly infinite support, if reward distributions are infinitely-supported. Algorithm~\ref{alg:qdp} does not return these full transformed distributions, but rather approximations, or \emph{projections}, of these distributions, obtained by keeping only information about certain quantiles (in the inner for-loop of Algorithm~\ref{alg:qdp});
this is the second distribution transformation the algorithm undertakes. Each choice of interpolation parameters $\lambda$ corresponds to a different projection operator, denoted $\Pi^\lambda : \mathscr{P}(\mathbb{R})^\mathcal{X} \rightarrow \mathscr{P}(\mathbb{R})^\mathcal{X}$, and defined by
\begin{align}\label{eq:proj-definition}
    (\Pi^\lambda \eta )(x) = \frac{1}{m} \sum_{i=1}^m \delta_{(1-\lambda(x, i)) F^{-1}_{\eta(x)}(\tau_i) + \lambda(x, i) \bar{F}^{-1}_{\eta(x)}(\tau_i)} \, .
\end{align}
Thus, the composition $\Pi^\lambda \mathcal{T}^\pi$, the projected distributional Bellman operator, is a transformation on the space of return-distribution functions $\mathscr{P}(\mathbb{R})^\mathcal{X}$. 
We will also find it useful to abuse notation slightly and consider $\Pi^\lambda \mathcal{T}^\pi$ as an operator on the space $\mathbb{R}^{\mathcal{X} \times [m]}$ of parameters that QDP and QTD operate over. The understanding is that an input $\theta \in \mathbb{R}^{\mathcal{X} \times [m]}$ is first re-interpreted as a collection of distributions as in Expression~\eqref{eq:quantiles-to-dist},
with $\Pi^\lambda \mathcal{T}^\pi$ applied as defined above to this collection of probability distributions, and then finally extracting the support of the output distributions, which take the form 
\begin{align*}
    \sum_{i=1}^m \frac{1}{m} \delta_{z_i} \, ,
\end{align*}
to return an element of $\mathbb{R}^{\mathcal{X} \times [m]}$. We will also write $\mathcal{T}^\pi \theta$ for the element of $\mathscr{P}(\mathbb{R})^\mathcal{X}$ obtained by applying $\mathcal{T}^\pi$ to the distributions $(\eta(x) : x \in \mathcal{X})$ defined by
\begin{align*}
    \eta(x) = \sum_{i=1}^m \frac{1}{m} \delta_{\theta(x, i)} \, .
\end{align*}

\begin{remark}
    This convention highlights that there are two complementary views of distributional reinforcement learning algorithms, through finite-dimensional sets of parameters, and through probability distributions. The view in terms of probability distributions is often useful in contraction analysis, and in measuring approximation error, while we will see that the parameter view is key to the stochastic approximation analysis that follows, and is ultimately the way in which these algorithms are implemented.
\end{remark}

With this convention, $\Pi^\lambda \mathcal{T}^\pi \theta$ is precisely the table $\theta'$ output by Algorithm~\ref{alg:qdp} on input $\theta$, and so the QDP algorithm is mathematically equivalent to repeated application of the operator $\Pi^\lambda \mathcal{T}^\pi$ to an initial collection of quantile estimates. To understand the long-term behaviour of QDP, we can therefore seek to understand this projected operator $\Pi^\lambda \mathcal{T}^\pi$.

\subsection{Convergence Analysis}

We will show that $\Pi^\lambda \mathcal{T}^\pi$ is a contraction mapping with respect to an appropriate metric over return-distribution functions. Building on the analysis in the case of $\lambda \equiv 0$ carried out by \citet{dabney2018distributional} and \citet{bdr2022}, we use the Wasserstein-$\infty$ metric $w_\infty : \mathscr{P}(\mathbb{R}) \times \mathscr{P}(\mathbb{R}) \rightarrow [0, \infty]$, defined by
\begin{align*}
    w_\infty(\nu, \nu') = \sup_{t \in (0, 1)} |F^{-1}_{\nu}(t) - F^{-1}_{\nu'}(t)| \, ,
\end{align*}
and its extension to return-distribution functions, $\bar{w}_\infty : \mathscr{P}(\mathbb{R})^\mathcal{X} \times \mathscr{P}(\mathbb{R})^\mathcal{X} \rightarrow [0, \infty]$, given by
\begin{align*}
    \bar{w}_\infty(\eta, \eta') = \max_{x \in \mathcal{X}} \sup_{t \in (0, 1)} |F^{-1}_{\eta(x)}(t) - F^{-1}_{\eta'(x)}(t)| \, .
\end{align*}
Both $w_\infty$ and $\bar{w}_\infty$ fulfil all the requirements of a metric, except that they may assign infinite distances (\citeauthor{villani2009optimal}, \citeyear{villani2009optimal}; see also \citeauthor{bdr2022}, \citeyear{bdr2022} for a detailed discussion specifically in the context of reinforcement learning). We must therefore take some care as to when distances are finite.
The following is established by \citet[Proposition~4.15]{bdr2022}.

\begin{proposition}\label{prop:dist-contract}
    The distributional Bellman operator $\mathcal{T}^\pi : \mathscr{P}(\mathbb{R})^{\mathcal{X}} \rightarrow \mathscr{P}(\mathbb{R})^{\mathcal{X}}$ is a $\gamma$-contraction with respect to $\bar{w}_\infty$. That is,
    \begin{align*}
        \bar{w}_\infty(\mathcal{T}^\pi \eta, \mathcal{T}^\pi \eta') \leq \gamma \bar{w}_\infty(\eta, \eta') \, ,
    \end{align*}
    for all $\eta, \eta' \in \mathscr{P}(\mathbb{R})^\mathcal{X}$.
\end{proposition}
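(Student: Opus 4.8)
The plan is to establish the contraction one state at a time and then maximise over states, reducing the claim to the bound $w_\infty((\mathcal{T}^\pi\eta)(x),(\mathcal{T}^\pi\eta')(x)) \le \gamma \bar w_\infty(\eta,\eta')$ for each fixed $x \in \mathcal{X}$. The whole argument rests on two structural properties of $w_\infty$: its behaviour under a shared affine map, and its behaviour under a shared mixture. To expose both, I would first rewrite the operator at $x$ as a mixture of affine pushforwards,
\[ (\mathcal{T}^\pi\eta)(x) = \int (b_{r,\gamma})_\# \eta(x') \, \mathrm{d}P^\pi(a,r,x' \mid x), \]
where $b_{r,\gamma}(z) = r + \gamma z$ is the shift-and-scale map and $g_\#$ denotes the pushforward of a distribution under a map $g$. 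The key point is that $(\mathcal{T}^\pi\eta')(x)$ is the \emph{same} mixture, over the \emph{same} mixing measure $P^\pi(\cdot \mid x)$, with $\eta$ replaced by $\eta'$; this is what lets the transition randomness be cancelled by a coupling.

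The affine-scaling property is the easy ingredient. For $\gamma > 0$ the map $b_{r,\gamma}$ is strictly increasing, so the generalised inverse CDF satisfies $F^{-1}_{(b_{r,\gamma})_\#\nu}(t) = r + \gamma F^{-1}_\nu(t)$, a routine monotonicity check against the definition of $F^{-1}$. Substituting into the definition of $w_\infty$, the shift $r$ cancels and the factor $\gamma$ factors out, giving $w_\infty((b_{r,\gamma})_\#\nu,(b_{r,\gamma})_\#\nu') = \gamma\, w_\infty(\nu,\nu')$; the degenerate case $\gamma = 0$ makes both sides of the contraction zero.

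The mixture property is where the real work lies, and I expect it to be the main obstacle. I want to show that if $\nu = \int \nu_\omega\, \mathrm{d}\mu(\omega)$ and $\nu' = \int \nu'_\omega\, \mathrm{d}\mu(\omega)$ share a mixing measure $\mu$, then $w_\infty(\nu,\nu') \le \operatorname*{ess\,sup}_\omega w_\infty(\nu_\omega,\nu'_\omega)$. The difficulty is that the quantile-function definition of $w_\infty$ does not interact cleanly with mixtures, since the quantile function of a mixture is not the mixture of the quantile functions. The clean route is to pass to the optimal-transport characterisation $w_\infty(\nu,\nu') = \inf_{\kappa \in \Gamma(\nu,\nu')} \operatorname*{ess\,sup}_{(Z,Z') \sim \kappa} |Z - Z'|$, where $\Gamma(\nu,\nu')$ is the set of couplings; I would first justify that this coincides with the quantile-function definition on $\mathbb{R}$ (the comonotone coupling is $w_\infty$-optimal), citing standard theory \citep{villani2009optimal}. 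Given near-optimal couplings $\kappa_\omega$ of each $(\nu_\omega,\nu'_\omega)$, the mixture $\int \kappa_\omega\, \mathrm{d}\mu(\omega)$ is a valid coupling of $(\nu,\nu')$ whose displacement is bounded $\mu$-almost surely by $\operatorname*{ess\,sup}_\omega w_\infty(\nu_\omega,\nu'_\omega)$, which yields the inequality.

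Finally I would assemble the pieces: applying the mixture property with $\mu = P^\pi(\cdot\mid x)$ and $\nu_\omega = (b_{r,\gamma})_\#\eta(x')$, $\nu'_\omega = (b_{r,\gamma})_\#\eta'(x')$, and then the affine-scaling property, gives
\[ w_\infty\big((\mathcal{T}^\pi\eta)(x),(\mathcal{T}^\pi\eta')(x)\big) \le \operatorname*{ess\,sup}_{(a,r,x')} \gamma\, w_\infty(\eta(x'),\eta'(x')) \le \gamma \max_{x' \in \mathcal{X}} w_\infty(\eta(x'),\eta'(x')) = \gamma\, \bar w_\infty(\eta,\eta'). \]
Taking the maximum over $x$ on the left-hand side gives the claim. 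All inequalities are read in $[0,\infty]$, so the possible infiniteness of $w_\infty$ requires no special treatment: if the right-hand side is infinite the bound is vacuous, and the coupling and scaling arguments remain valid in the extended reals.
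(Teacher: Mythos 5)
Your argument is correct, but note that the paper does not prove this proposition at all: it is imported directly from \citet[Proposition~4.15]{bdr2022}. Your proof is essentially the standard argument behind that cited result --- decompose $(\mathcal{T}^\pi\eta)(x)$ as a mixture over the transition $(A,R,X')$ of affine pushforwards of the $\eta(x')$, exploit the exact scaling $w_\infty((b_{r,\gamma})_\#\nu,(b_{r,\gamma})_\#\nu') = \gamma\, w_\infty(\nu,\nu')$, and control the shared mixture by gluing couplings --- so there is nothing genuinely different to compare. Two small points worth making explicit if you write this out in full: (i) the passage from the quantile-function definition of $w_\infty$ to the coupling infimum should also note that $\sup_{t\in(0,1)}|F^{-1}_\nu(t)-F^{-1}_{\nu'}(t)|$ equals the Lebesgue essential supremum (this follows from left-continuity and monotonicity of the quantile functions), so that the comonotone coupling really attains the infimum; and (ii) the measurable-selection issue in choosing $\kappa_\omega$ dissolves because you can take the canonical comonotone coupling for every $\omega$, which is jointly measurable, and because $w_\infty(\nu_\omega,\nu'_\omega)=\gamma\,w_\infty(\eta(x'),\eta'(x'))$ depends only on the finitely many values of $x'$.
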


Next, we show that the projection operator $\Pi^\lambda$ cannot expand distances as measured by $\bar{w}_\infty$, generalising the proof given by \citet{bdr2022} in the case $\lambda \equiv 0$; the proof is given in Appendix~\ref{sec:proof-proj-nonexpansion}.

\begin{restatable}{proposition}{propProjNonexpansion}\label{prop:proj-nonexpansion}
    The projection operator 
    $\Pi^{\lambda} : \mathscr{P}(\mathbb{R})^\mathcal{X} \rightarrow \mathscr{P}(\mathbb{R})^\mathcal{X}$ is a non-expansion with respect to $\bar{w}_\infty$. That is, for any $\eta, \eta' \in \mathscr{P}(\mathbb{R})^\mathcal{X}$, we have
    \begin{align*}
        \bar{w}_\infty(\Pi^\lambda \eta, \Pi^\lambda \eta') \leq \bar{w}_\infty(\eta, \eta') \, .
    \end{align*}
\end{restatable}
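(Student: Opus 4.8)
The plan is to exploit the fact that $\bar{w}_\infty$ is a maximum over states and that $\Pi^\lambda$ acts on each state $x$ using only $\eta(x)$. It therefore suffices to prove the single-distribution inequality $w_\infty(\Pi^\lambda_x \nu, \Pi^\lambda_x \nu') \leq w_\infty(\nu, \nu')$, where $\Pi^\lambda_x$ denotes the map sending a single $\nu \in \mathscr{P}(\mathbb{R})$ to $\frac{1}{m}\sum_{i=1}^m \delta_{z_i}$ with $z_i = (1-\lambda(x,i)) F^{-1}_\nu(\tau_i) + \lambda(x,i) \bar{F}^{-1}_\nu(\tau_i)$, and then take the maximum over $x \in \mathcal{X}$. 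If $w_\infty(\nu,\nu') = \infty$ there is nothing to prove, so I would assume it is finite throughout.

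First I would identify the quantile function of the projected distribution explicitly. Since $\tau_1 < \cdots < \tau_m$ and both $F^{-1}_\nu$ and $\bar{F}^{-1}_\nu$ are non-decreasing, one checks using $\bar{F}^{-1}_\nu(\tau_i) \leq F^{-1}_\nu(\tau_{i+1})$ (an immediate consequence of the infimum definitions) that the atoms are already sorted, $z_1 \leq \cdots \leq z_m$. For an equally-weighted mixture of sorted Diracs the generalised inverse CDF is the step function $F^{-1}_{\Pi^\lambda_x \nu}(t) = z_i$ on $t \in ((i-1)/m, i/m]$, so that $w_\infty(\Pi^\lambda_x \nu, \Pi^\lambda_x \nu') = \max_{1 \leq i \leq m} |z_i - z_i'|$, writing $z_i'$ for the $i$-th atom of $\Pi^\lambda_x \nu'$. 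Expanding $z_i - z_i'$ as the convex combination $(1-\lambda(x,i))\big(F^{-1}_\nu(\tau_i) - F^{-1}_{\nu'}(\tau_i)\big) + \lambda(x,i)\big(\bar{F}^{-1}_\nu(\tau_i) - \bar{F}^{-1}_{\nu'}(\tau_i)\big)$ and applying the triangle inequality reduces everything to bounding the two terms $|F^{-1}_\nu(\tau_i) - F^{-1}_{\nu'}(\tau_i)|$ and $|\bar{F}^{-1}_\nu(\tau_i) - \bar{F}^{-1}_{\nu'}(\tau_i)|$ by $w_\infty(\nu,\nu')$.

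The first bound is immediate from the definition of $w_\infty$. The main obstacle is the second term: $w_\infty$ is defined purely through the left-continuous inverse $F^{-1}$, whereas $\bar{F}^{-1}$ is the right-most quantile, so it is not directly controlled. The key lemma I would establish is the identity $\bar{F}^{-1}_\nu(\tau) = \lim_{s \downarrow \tau} F^{-1}_\nu(s)$, proved in two directions: that $F^{-1}_\nu(s) \geq \bar{F}^{-1}_\nu(\tau)$ for every $s > \tau$ (forcing the limit to be at least $\bar{F}^{-1}_\nu(\tau)$), and conversely that for any $y > \bar{F}^{-1}_\nu(\tau)$ one can find $s > \tau$ with $F^{-1}_\nu(s) < y$ (forcing the limit to be at most $\bar{F}^{-1}_\nu(\tau)$); both follow from the infimum definitions and monotonicity of $F_\nu$. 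Granting this identity, continuity of the absolute value yields $|\bar{F}^{-1}_\nu(\tau_i) - \bar{F}^{-1}_{\nu'}(\tau_i)| = \lim_{s \downarrow \tau_i} |F^{-1}_\nu(s) - F^{-1}_{\nu'}(s)| \leq \sup_{s \in (0,1)} |F^{-1}_\nu(s) - F^{-1}_{\nu'}(s)| = w_\infty(\nu, \nu')$.

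Finally, since the convex-combination weights $1-\lambda(x,i)$ and $\lambda(x,i)$ sum to one, combining the two bounds gives $|z_i - z_i'| \leq w_\infty(\nu,\nu')$; taking the maximum over $i$ and then over $x$ yields $\bar{w}_\infty(\Pi^\lambda \eta, \Pi^\lambda \eta') \leq \bar{w}_\infty(\eta, \eta')$. I expect the limit identity relating $\bar{F}^{-1}$ to $F^{-1}$ to be the only genuinely delicate point, with the remainder being routine manipulation of monotone step functions and the triangle inequality.
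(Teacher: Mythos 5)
Your proposal is correct and follows essentially the same route as the paper's proof: decompose each atom difference as a convex combination, bound the $F^{-1}$ term directly by $w_\infty$, and handle the $\bar{F}^{-1}$ term via the identity $\bar{F}^{-1}_\nu(\tau) = \lim_{s \downarrow \tau} F^{-1}_\nu(s)$. You additionally spell out two steps the paper leaves implicit — that the projected atoms are sorted so that $w_\infty$ of the projections equals the maximum atom discrepancy, and the proof of the limit identity itself — both of which check out.
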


Finally, we put these two results together to obtain our desired conclusion. In stating this result, it is useful here to introduce the notation
\begin{align*}
    \mathscr{F}_{\text{Q}, m} = \Big\{ \sum_{i=1}^m \frac{1}{m} \delta_{z_i} :  z_i \in \mathbb{R} \text{ for } i=1,\ldots,m \Big\} \, ,
\end{align*}
for the set of probability distributions representable with $m$ quantile locations.

\begin{proposition}\label{prop:contraction}
    The projected operator $\Pi^{\lambda} \mathcal{T}^\pi : \mathscr{F}_{\text{Q}, m}^\mathcal{X} \rightarrow \mathscr{F}_{\text{Q}, m}^\mathcal{X}$ is a $\gamma$-contraction with respect to $\bar{w}_\infty$. Hence, $\Pi^{\lambda} \mathcal{T}^\pi$ has a unique fixed point in $\mathscr{F}_{\text{Q}, m}^\mathcal{X}$, which we denote $\qdpfixedpointlambda$. Further, given any initial $\eta_0 \in \mathscr{F}_{\text{Q}, m}^\mathcal{X}$, the sequence $(\eta_k)_{k=0}^\infty$ defined iteratively by $\eta_{k+1} = \Pi^\lambda \mathcal{T}^\pi \eta_k$ for $k \geq 0$ satisfies $\bar{w}_\infty(\eta_k, \qdpfixedpointlambda) \leq \gamma^k \bar{w}_\infty(\eta_0, \qdpfixedpointlambda) \rightarrow 0$.
\end{proposition}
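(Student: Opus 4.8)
The plan is to assemble Propositions~\ref{prop:dist-contract} and~\ref{prop:proj-nonexpansion} into a standard Banach fixed-point argument, with the only genuine care required being the treatment of $\bar{w}_\infty$ as an \emph{extended} metric and the verification of completeness on the restricted domain.

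First I would check that $\Pi^\lambda \mathcal{T}^\pi$ maps $\mathscr{F}_{\text{Q}, m}^\mathcal{X}$ into itself. By the definition of $\Pi^\lambda$ in Equation~\eqref{eq:proj-definition}, its output at each state is an equally-weighted mixture of $m$ Dirac masses, hence lies in $\mathscr{F}_{\text{Q}, m}$ regardless of the input; so the composition is a self-map of $\mathscr{F}_{\text{Q}, m}^\mathcal{X}$. The intermediate object $\mathcal{T}^\pi \eta$ need not lie in $\mathscr{F}_{\text{Q}, m}^\mathcal{X}$, but this causes no difficulty since $\Pi^\lambda$ is applied afterwards. Next I would establish the contraction estimate itself: for $\eta, \eta' \in \mathscr{F}_{\text{Q}, m}^\mathcal{X}$, both distributions are finitely supported, so their generalised inverse CDFs are bounded and $\bar{w}_\infty(\eta, \eta') < \infty$. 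Applying the non-expansion of $\Pi^\lambda$ and then the $\gamma$-contraction of $\mathcal{T}^\pi$ yields
\[
    \bar{w}_\infty(\Pi^\lambda \mathcal{T}^\pi \eta, \Pi^\lambda \mathcal{T}^\pi \eta') \leq \bar{w}_\infty(\mathcal{T}^\pi \eta, \mathcal{T}^\pi \eta') \leq \gamma \, \bar{w}_\infty(\eta, \eta') \, ,
\]
so $\Pi^\lambda \mathcal{T}^\pi$ is a $\gamma$-contraction on $\mathscr{F}_{\text{Q}, m}^\mathcal{X}$. This core computation is immediate once the two component propositions are in hand.

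The step requiring the most attention is confirming that $(\mathscr{F}_{\text{Q}, m}^\mathcal{X}, \bar{w}_\infty)$ is a genuine complete metric space, so that Banach's theorem applies. Finiteness of all distances on this domain was noted above. For completeness, I would identify each $\eta(x) \in \mathscr{F}_{\text{Q}, m}$ with its sorted support $z_1(x) \leq \cdots \leq z_m(x)$, and observe that the generalised inverse CDF of such a distribution is the step function taking value $z_i(x)$ on $((i-1)/m, i/m]$. Consequently $w_\infty$ between two such distributions equals $\max_i |z_i(x) - z_i'(x)|$, the $\ell^\infty$ distance between sorted tuples, so this identification is an isometry of $\mathscr{F}_{\text{Q}, m}$ onto the closed set $\{ z \in \mathbb{R}^m : z_1 \leq \cdots \leq z_m \}$ with the $\ell^\infty$ norm. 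A $\bar{w}_\infty$-Cauchy sequence therefore corresponds to an $\ell^\infty$-Cauchy sequence of sorted tuples in $(\mathbb{R}^m)^{\mathcal{X}}$, whose limit is again sorted and thus defines a limit in $\mathscr{F}_{\text{Q}, m}^\mathcal{X}$; completeness follows.

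With the contraction property and completeness established, Banach's fixed point theorem delivers the remaining conclusions directly: existence and uniqueness of a fixed point $\qdpfixedpointlambda \in \mathscr{F}_{\text{Q}, m}^\mathcal{X}$, together with the geometric convergence bound $\bar{w}_\infty(\eta_k, \qdpfixedpointlambda) \leq \gamma^k \bar{w}_\infty(\eta_0, \qdpfixedpointlambda)$ for any initialisation, obtained by iterating the contraction estimate; the final convergence to $0$ follows from $\gamma \in [0, 1)$ and finiteness of $\bar{w}_\infty(\eta_0, \qdpfixedpointlambda)$. I expect the main obstacle to be the completeness verification, precisely because $\bar{w}_\infty$ is defined through inverse CDFs rather than directly on support locations, so one must confirm that the sorted-tuple identification is genuinely an isometry onto a closed subset of a complete space; every other step is a routine composition of the two cited propositions.
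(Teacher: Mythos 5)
Your proposal is correct and follows essentially the same route as the paper: compose Propositions~\ref{prop:dist-contract} and~\ref{prop:proj-nonexpansion} to get the $\gamma$-contraction, note finiteness of $\bar{w}_\infty$ and completeness of $\mathscr{F}_{\text{Q}, m}^\mathcal{X}$, and invoke Banach's fixed point theorem. The only difference is that you spell out the completeness verification via the sorted-tuple isometry, which the paper simply asserts.
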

\begin{proof}
    That $\Pi^{\lambda} \mathcal{T}^\pi : \mathscr{F}_{\text{Q}, m}^\mathcal{X} \rightarrow \mathscr{F}_{\text{Q}, m}^\mathcal{X}$ is a $\gamma$-contraction with respect to $\bar{w}_\infty$ follows directly from Propositions~\ref{prop:dist-contract} and \ref{prop:proj-nonexpansion}:
    \begin{align*}
        \bar{w}_\infty(\Pi^\lambda \mathcal{T}^\pi \eta, \Pi^\lambda \mathcal{T}^\pi \eta') \leq \bar{w}_\infty( \mathcal{T}^\pi \eta, \mathcal{T}^\pi \eta') \leq \gamma \bar{w}_\infty(\eta, \eta') \, .
    \end{align*}
    Next, observe that $\bar{w}_\infty$ assigns finite distance to all pairs of return-distribution functions in $\mathscr{F}_{\text{Q}, m}^\mathcal{X}$, and further, this set is complete with respect to $\bar{w}_\infty$. Hence, we may apply Banach's fixed point theorem to obtain the existence of the unique fixed point $\qdpfixedpointlambda$ in $\mathscr{F}_{\text{Q}, m}^\mathcal{X}$. The final claim follows by induction, and the contraction property established for $\Pi^{\lambda} \mathcal{T}^\pi$.
\end{proof}

Note that the fixed point $\qdpfixedpointlambda$ depends on $\lambda$, and therefore implicitly on $m$. We also introduce the notation $\qdpfixedpointlambdaparams \in \mathbb{R}^{\mathcal{X} \times [m]}$ for the parameters of this collection of distributions, which is what the QDP algorithm really operates over, so that we have
\begin{align*}
    \qdpfixedpointlambda(x) = \sum_{i=1}^m \frac{1}{m} \delta_{\qdpfixedpointlambdaparams(x, i)} \, .
\end{align*}
Note that the convergence result of Proposition~\ref{prop:contraction} also implies convergence of the estimated quantile locations to $\hat{\theta}^\pi_\lambda$.
In Section~\ref{sec:fixed-point-analysis}, we will analyse the fixed point $\qdpfixedpointlambda$, and understand how closely it approximates the true return-distribution function $\eta^\pi$. For now, having established convergence of QDP through contraction mapping theory, we can return to QTD and demonstrate its own convergence to the same fixed points.

\section{Convergence of Quantile Temporal-Difference Learning}\label{sec:qtd-convergence}

We now present the convergence analysis of QTD. We will consider a \emph{synchronous} version of QTD, in which all states are updated using independent transitions at each algorithm step, given by:
\begin{align}\label{eq:qtd-sa}
     \theta_{k+1}(x, i) = \theta_k(x, i) +
     \alpha_k \frac{1}{m} \sum_{j=1}^m (\tau_i - \mathbbm{1}\{ R_k(x) + \gamma \theta_k(X'_k(x), j) < \theta_k(x, i)  \}) \, ,
\end{align}
where given $x$ and $k$, we have $(R_k(x), X'_k(x)) \sim P^\pi(\cdot|x)$, independently of the transitions used at all other states/time steps, and $(\alpha_k)_{k=0}^\infty$ is a sequence of step sizes. The assumption of synchronous updates makes the analysis easier to present, and means that our results follow classical approaches to stochastic approximation with differential inclusions \citep{benaim2005stochastic}. It is also possible to extend the analysis to the asynchronous case, where a single state is updated at each algorithm time step (as would be the case in fully online QTD, or an implementation using a replay buffer); see Section~\ref{sec:main-async}. We now state the main convergence result of the paper.
\begin{theorem}\label{thm:general-convergence}
    Consider the sequence $(\theta_k)_{k=0}^\infty$ defined by an initial point $\theta_0 \in \mathbb{R}^{\mathcal{X} \times [m]}$, the iterative update in Equation~\eqref{eq:qtd-sa}, and non-negative step sizes satisfying the condition
    \begin{align}\label{eq:rm}
        \sum_{t=0}^\infty \alpha_k = \infty \, , \quad \alpha_k = o(1/\log k) \, .
    \end{align}
    Then $(\theta_k)_{k=0}^\infty$ converges almost surely to the set of fixed points of the projected distributional Bellman operators $\{ \Pi^\lambda \mathcal{T}^\pi : \lambda \in [0, 1]^{\mathcal{X} \times [m]}\}$; that is,
    \begin{align*}
        \inf_{\lambda \in [0,1]^{\mathcal{X} \times [m]}}\| \theta_k - \qdpfixedpointlambdaparams \|_\infty \rightarrow 0
    \end{align*}
    with probability 1.
\end{theorem}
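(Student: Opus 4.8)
The plan is to recognise the synchronous QTD recursion in Equation~\eqref{eq:qtd-sa} as a stochastic approximation scheme and to analyse it through the differential-inclusion framework of \citet{benaim2005stochastic}, as foreshadowed in the introduction. Writing the increment as $h(\theta_k) + M_{k+1}$ with $M_{k+1}$ the associated martingale-difference noise, the mean field has coordinates
\[
h_{x,i}(\theta) = \tau_i - \mathbb{P}\big( R(x) + \gamma\theta(X'(x), J) < \theta(x,i) \big) = \tau_i - F^-_{(\mathcal{T}^\pi\theta)(x)}(\theta(x,i)) \, ,
\]
where $(R(x),X'(x)) \sim P^\pi(\cdot|x)$, $J \sim \text{Unif}(\{1,\dots,m\})$, and $F^-_\nu(z) = \mathbb{P}_{Z\sim\nu}(Z<z)$ is the left-continuous CDF. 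This map is discontinuous precisely where $\theta(x,i)$ is an atom of $(\mathcal{T}^\pi\theta)(x)$, so I would replace $h$ by its minimal upper-semicontinuous, convex- and compact-valued closure $H$, whose $(x,i)$-component is the interval $[\tau_i - F_{(\mathcal{T}^\pi\theta)(x)}(\theta(x,i)),\, \tau_i - F^-_{(\mathcal{T}^\pi\theta)(x)}(\theta(x,i))]$, and study the differential inclusion $\dot\theta \in H(\theta)$.

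First I would establish that the iterates remain almost surely bounded. The QTD increment lies coordinatewise in $[-1,1]$, and when $\max_{x,i}\theta_k(x,i)$ is large the finiteness of reward means forces $R(x) + \gamma\theta(X'(x),j) < \theta(x,i)$ with probability close to one at the maximising coordinate, making $h_{x,i}$ close to $\tau_i - 1 < 0$; a symmetric statement holds for the minimum, giving an inward drift and hence an a priori bound. With boundedness in hand I would verify the hypotheses of \citet{benaim2005stochastic}: $H$ is a proper upper-semicontinuous set-valued map with nonempty compact convex values, bounded on the relevant compact set; the noise $M_{k+1}$ is bounded; and the step sizes satisfy $\sum_k\alpha_k = \infty$ together with $\alpha_k = o(1/\log k)$. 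The bounded-increment structure of QTD is exactly what permits this weaker step-size condition in place of the usual square-summability. The conclusion is that the linearly interpolated process is almost surely an asymptotic pseudotrajectory of the flow induced by $H$, so its limit set is almost surely a compact, connected, internally chain transitive (ICT) set of that flow.

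Next I would identify the equilibria of the inclusion. The condition $0 \in H_{x,i}(\theta)$ for all $x,i$ reads $F^-_{(\mathcal{T}^\pi\theta)(x)}(\theta(x,i)) \le \tau_i \le F_{(\mathcal{T}^\pi\theta)(x)}(\theta(x,i))$; that is, each $\theta(x,i)$ is a genuine $\tau_i$-quantile of $(\mathcal{T}^\pi\theta)(x)$, lying in the closed interval $[F^{-1}_{(\mathcal{T}^\pi\theta)(x)}(\tau_i),\, \bar{F}^{-1}_{(\mathcal{T}^\pi\theta)(x)}(\tau_i)]$. Comparing with the definition of $\Pi^\lambda\mathcal{T}^\pi$ in Equation~\eqref{eq:proj-definition}, every such point is attained by some interpolation parameter $\lambda(x,i) \in [0,1]$, so the equilibrium set of the inclusion is exactly $\Lambda := \{ \qdpfixedpointlambdaparams : \lambda \in [0,1]^{\mathcal{X} \times [m]}\}$, the fixed-point set appearing in the theorem.

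It remains to show every ICT set is contained in $\Lambda$, which I would do by exhibiting a Lyapunov function for $\Lambda$ and invoking the Lyapunov criterion of \citet{benaim2005stochastic}. My candidate is the distance to the equilibrium set, $V(\theta) = \inf_{\hat\theta \in \Lambda}\|\theta - \hat\theta\|_\infty$. The foundation is a non-expansion estimate: for any fixed equilibrium $\hat\theta$, coupling the transitions defining $(\mathcal{T}^\pi\theta)(x)$ and $(\mathcal{T}^\pi\hat\theta)(x)$ through a common $(R,X',J)$ gives $R + \gamma\theta(X',J) \le R + \gamma\hat\theta(X',J) + \gamma\|\theta - \hat\theta\|_\infty$ almost surely, so that at a coordinate achieving $\|\theta-\hat\theta\|_\infty$ the resulting CDF comparison yields $\dot\theta(x,i) \le 0$ (and symmetrically for the opposite sign); hence $t \mapsto \|\theta(t) - \hat\theta\|_\infty$ is non-increasing along the flow, and taking the infimum over $\hat\theta \in \Lambda$ shows $V$ is non-increasing, with $V(\Lambda) = \{0\}$ of empty interior. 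The main obstacle is to upgrade this to \emph{strict} decrease of $V$ whenever $\theta \notin \Lambda$: when some coordinate violates the quantile condition the drift there is nonzero, but this need not be a coordinate of maximal deviation from the nearest equilibrium, and flat regions of the relevant CDFs can stall the one-sided derivative of $V$. Resolving this requires careful non-smooth analysis of the one-sided derivative of $V$ along solutions, quantifying how a strictly nonzero quantile residual at one coordinate propagates, through the $\gamma$-discounted coupling above, into a strict decrease of the $\ell_\infty$ distance to $\Lambda$. Once strict decrease off $\Lambda$ is secured, the Lyapunov criterion forces every ICT set into $\Lambda$, and the almost-sure convergence of $(\theta_k)$ to $\Lambda$ stated in the theorem follows. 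Note this conclusion is to the \emph{set} $\Lambda$ rather than a single point, consistent with the random-walk behaviour observed in the final example of Example~\ref{ex:2d}.
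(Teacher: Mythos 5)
Your overall architecture matches the paper's exactly: interpret Equation~\eqref{eq:qtd-sa} as a noisy discretisation of the differential inclusion in Equation~\eqref{eq:qtd-di}, prove almost-sure boundedness via an inward drift when $\|\theta_k\|_\infty$ is large, invoke the asymptotic-pseudotrajectory and internally-chain-transitive machinery of Bena\"im, Hofbauer and Sorin, identify the equilibria of the inclusion with $\Lambda = \{\qdpfixedpointlambdaparams : \lambda \in [0,1]^{\mathcal{X}\times[m]}\}$, and use $L(\theta) = \min_{\lambda}\|\theta - \qdpfixedpointlambdaparams\|_\infty$ as the Lyapunov function. Your coupling argument for the non-expansion of $t \mapsto \|\vartheta_t - \hat\theta\|_\infty$ is a legitimate alternative to the paper's route through the $\gamma$-contraction of $\Pi^\lambda\mathcal{T}^\pi$ in $\bar{w}_\infty$, and your identification of the equilibrium set is correct.

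However, you have left open precisely the step that constitutes the bulk of the paper's proof, and your proposed way of closing it is not the one that works. You correctly observe that strict decrease of $L$ can fail coordinatewise: an argmax coordinate $(x,i)$ for the nearest fixed point $\theta^{\overline{\lambda}}$ can sit in a flat region of $F_{(\mathcal{T}^\pi\vartheta_t)(x)}$ at level $\tau_i$, so that $H^\pi_{x,i}(\vartheta_t) = \{0\}$ and $|\vartheta_t(x,i) - \theta^{\overline{\lambda}}(x,i)|$ is locally constant. But your suggested fix --- ``quantifying how a strictly nonzero quantile residual at one coordinate propagates\ldots into a strict decrease of the $\ell_\infty$ distance'' --- does not address this: a nonzero drift at a non-argmax coordinate does nothing to reduce the maximum deviation, and the stalled argmax coordinate genuinely does not move. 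The mechanism the paper uses is different in kind: when the argmax coordinate is stalled, one perturbs the \emph{interpolation parameter} $\lambda$ rather than waiting for the trajectory to move, showing that a nearby $\lambda'$ yields a fixed point $\theta^{\lambda'}$ strictly closer to $\vartheta_{t+s}$, so that the distance to the \emph{set} $\Lambda$ decreases even though the distance to the particular fixed point $\theta^{\overline{\lambda}}$ does not. Making this rigorous requires the Lipschitz sensitivity of $\lambda \mapsto \theta^\lambda$ (Lemma~\ref{lemma:fixed-point-sensitivity}), local injectivity of the map $\mu \mapsto \projmap\theta^{\overline{\lambda}[\mu]}$ and the invariance of domain theorem to get local surjectivity, a separate argument (Lemma~\ref{lem:boundary}) to rule out being blocked by the boundary of $[0,1]^{\indexset}$, and a Rademacher/Fubini argument to handle non-differentiability of $\lambda \mapsto \theta^\lambda$. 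None of this is foreshadowed in your proposal, so as written the proof has a genuine gap at its decisive step.
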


Of particular note is the generality of this result. It does not require finite-variance conditions on rewards (as is typically the case with convergence results for classical TD); it holds for any collection of reward distributions with the finite mean property set out at the beginning of the paper. Some intuition as to why this is the case is that the finite-variance conditions typically encountered are to ensure that the updates performed in classical TD learning cannot grow in magnitude too rapidly. Since the updates performed in QTD are bounded, this is not a concern, meaning that the proof does not rely on such conditions. We note also that the step size conditions are weaker than the typical Robbins-Monro conditions used in classical TD analyses (see, for example, \citeauthor{bertsekas1996neuro}, \citeyear{bertsekas1996neuro}), which enforce square-summability, also to avoid the possibility of divergence due to unbounded noise in the classical TD learning.

The proof is based on the ODE method for stochastic approximation; in particular we use the framework set out by \citet{benaim1999dynamics} and \citet{benaim2005stochastic}. This involves interpreting the QTD update as a noisy Euler discretisation of a differential equation (or more generally, a differential inclusion). The broad steps are then to argue that the trajectories of the differential equation/inclusion converge to some set of fixed points in a suitable way (that is, in such a way that is robust to small perturbations), and that the asymptotic behaviour of QTD, forming a noisy Euler discretisation, matches the asymptotic behaviour of the true trajectories. This then allows us to deduce that the QTD iterates converge to the same set of fixed points as the true trajectories. We begin by elucidating the connection to differential equations and differential inclusions.

\subsection{The QTD Differential Equation}\label{sec:qtd-ode}

Taking the expectation over the random variables $R_k(x)$ and $X'_k(x)$ in Equation~\eqref{eq:qtd-sa} conditional on the algorithm history up to time $k$ yields an expected increment of
\begin{align}\label{eq:expected-increment}
    \alpha_k \left( \tau_i - \mathbb{P}^\pi_x(R + \theta_k(X', J) < \theta_k(x, i) ) \right)\, .
\end{align}
We now briefly introduce an assumption on the MDP reward structure that simplifies the analysis that follows. This assumption guarantees that the two ``difficult'' cases of flat and vertical regions of CDFs (see Figure~\ref{fig:quantiles}) do not arise; note that this assumptions removes the possibility of multiple fixed points or discontinuous expected dynamics, as described in Example~\ref{ex:2d}. We will lift this assumption later.

\begin{assumption}\label{assume:lipschitz-monotonicity}
    For each state $x \in \mathcal{X}$, the reward distribution at $x$ has a CDF which is strictly increasing, and Lipschitz continuous.
\end{assumption}

As described in Section~\ref{sec:dp-analysis}, the distribution of $R + \theta_k(X', J)$ given the initial state $x$ is in fact equal to the application of the distributional Bellman operator $\mathcal{T}^\pi$ applied to the return-distribution function $\eta_k \in \mathscr{P}(\mathbb{R})^\mathcal{X}$ given by
\begin{align*}
    \eta_k(x) = \frac{1}{m} \sum_{i=1}^m \delta_{\theta_k(x, i)} \, .
\end{align*}
Under Assumption~\ref{assume:lipschitz-monotonicity}, and in particular the assumption of continuous reward CDFs, this yields a concise rewriting of the increment as
\begin{align*}
    \alpha_k \left( \tau_i - F_{(\mathcal{T}^\pi \theta_k)(x)}(\theta_k(x, i)) \right) \, .
\end{align*}
We may therefore intuitively interpret Equation~\eqref{eq:qtd-sa} as a noisy discretisation of the differential equation
\begin{align}\label{eq:ode}
    \partial_t \vartheta_t(x, i) = \tau_i - F_{(\mathcal{T}^\pi \vartheta_t)(x)}(\vartheta_t(x, i)) \, ,
\end{align}
which we refer to as the QTD differential equation (or QTD ODE). Note also that Assumption~\ref{assume:lipschitz-monotonicity} guarantees the global existence and uniqueness of solutions to this differential equation, by the Cauchy-Lipschitz theorem.

\begin{remark}
Calling back to Figure~\ref{fig:example2}, the trajectories of the QTD ODE are obtained precisely by integrating the vector fields that appear in these plots.
In contrast to the ODE that emerges when analysing classical TD learning (both in tabular and linear function approximation settings) \citep{tsitsiklis1997analysis}, the right-hand side of Equation~\eqref{eq:ode} is non-linear in the parameters $\vartheta_t$, meaning that we are outside the domain of linear stochastic approximation methods.
\end{remark}

\subsection{The QTD Differential Inclusion}

In lifting Assumption~\ref{assume:lipschitz-monotonicity}, a few complications arise. Firstly, if $F_{(\mathcal{T}^\pi \theta)(x)}$ is not continuous at $\theta(x, i)$, then the right-hand side of the QTD ODE in Equation~\eqref{eq:ode} is modified to 
\begin{align*}
    \tau_i - \mathbb{P}_{Z \sim (\mathcal{T}^\pi \vartheta_t)(x)}(Z < \vartheta_t(x, i)) \, ;
\end{align*}
the difference is the strict inequality. Now the right-hand side of the differential equation itself is not continuous; in general, solutions may not even exist for this differential equation. The situation is illustrated in the bottom-left panel of Figure~\ref{fig:example2}; the lines in this plot illustrate points of discontinuity of the vector field to be integrated, and there are instances where the vector field either side of such a line of discontinuity ``pushes'' back into the discontinuity. In such cases, the differential equation has no solution in the usual sense. This phenomenon is known as sliding, or sticking, from cases when it arises in the modelling of physical systems with potentially discontinuous forces (such as static friction models in mechanics).

\citet{filippov1960differential} proposed a method to deal with such non-existence issues, by relaxing the definition of the dynamics at points of discontinuity. Technically, \citeauthor{filippov1960differential}'s proposal is to allow the derivative to take on any value in the convex hull of possible limiting values as we approach the point of discontinuity. In our case, we consider redefining the dynamics at points of discontinuity as follows:
\begin{align}\label{eq:qtd-di}
    \partial_t \vartheta_t(x, i) \in [\tau_i - F_{(\mathcal{T}^\pi \vartheta_t)(x)}(\vartheta_t(x, i)), \tau_i - F_{(\mathcal{T}^\pi \vartheta_t)(x)}(\vartheta_t(x, i)-)] \, ,
\end{align}
where $F_{(\mathcal{T}^\pi \vartheta_t)(x)}(\vartheta_t(x, i)-)$ denotes $\lim_{s \uparrow \vartheta_t(x, i)} F_{(\mathcal{T}^\pi \vartheta_t)(x)}(s)$. This refines the dynamics so that for each coordinate $(x, i)$, the derivative may take on either the left or right limit around $\vartheta_t(x, i)$, or any value in between; this is a looser relaxation than \citeauthor{filippov1960differential}'s proposal, and is easier to work with in our analysis.

Equation~\eqref{eq:qtd-di} is a \emph{differential inclusion}, as opposed to a differential equation; the derivative is constrained to a set at each instant, rather than constrained to a single value. We refer to Equation~\eqref{eq:qtd-di} specifically as the QTD differential inclusion (or QTD DI). Note that if $F_{(\mathcal{T}^\pi \theta)(x)}$ is continuous at $\theta(x, i)$, then the right-hand side of Equation~\eqref{eq:qtd-di} reduces to the singleton $\{ \tau_i - F_{(\mathcal{T}^\pi \theta)(x)}(\theta(x, i))\}$, and we thus obtain the ODE dynamics considered previously.

\subsection{Solutions of Differential Inclusions}

We briefly recall some key concepts regarding solutions of differential inclusions; a full review of the theory of differential inclusions is beyond the scope of this article, and we refer the reader to the standard references by \citet{aubin1984differential}, \citet{clarke1998nonsmooth}, and \citet{smirnov2002introduction}.

\begin{definition}\label{def:di-soln}
    Let $H : \mathbb{R}^n \rightrightarrows \mathbb{R}^n$ be a set-valued map. 
    The path $(z_t)_{t \geq 0}$ is a solution to the differential inclusion $\partial_t z_t \in H(z_t)$ if there exists an integrable function $g : [0, \infty) \rightarrow \mathbb{R}^{n}$ such that
    \begin{align}\label{eq:integral}
        z_t = \int_0^t g_s \mathrm{d}s
    \end{align}
     for all $t \geq 0$, and $g_t \in H(z_t)$ for almost all $t \geq 0$.
\end{definition}

Note that Definition~\ref{def:di-soln} does not require that $z_t$ is \emph{differentiable} with derivative $g_t$, but only the weaker integration condition in Equation~\eqref{eq:integral}. 
We then have the following existence result (see, for example, \citeauthor{smirnov2002introduction}, \citeyear{smirnov2002introduction} for a proof).

\begin{proposition}
    Consider a set-valued map $H : \mathbb{R}^n \rightrightarrows \mathbb{R}^n$, and suppose that $H$ is a \emph{Marchaud map}: that is,
    \begin{itemize}
        \item the set $\{ (z, h) : z \in \mathbb{R}^n, h \in H(z) \}$ is closed.
        \item For all $z \in \mathbb{R}^n$, $H(z)$ is non-empty, compact, and convex.
        \item There exists a constant $C > 0$ such that for all $z \in \mathbb{R}^n$,
        \begin{align*}
            \max_{h \in H(z)} \| h \| \leq C ( 1 + \|z\| ) \, .
        \end{align*}
    \end{itemize}
    Then the differential inclusion $\partial_t z_t \in H(z_t)$ has a global solution, for any initial condition.
\end{proposition}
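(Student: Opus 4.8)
The plan is to prove existence by the method of Euler polygonal approximations --- the natural analogue of the Peano existence theorem for differential inclusions --- followed by a compactness argument to pass to the limit. Fix an initial condition $z_0$ and a horizon $T > 0$; I would construct a solution on $[0, T]$ first and then extend to $[0, \infty)$ by a diagonal argument over horizons $T = 1, 2, \ldots$. For each $k \in \mathbb{N}$, construct an approximate trajectory $z^k : [0, T] \to \mathbb{R}^n$ by partitioning $[0, T]$ into subintervals of length $1/k$ and, on each subinterval $[t_j, t_{j+1})$, moving at a constant velocity $h \in H(z^k_{t_j})$ chosen at the left endpoint (such a selection exists since $H$ has nonempty values). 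Each $z^k$ is piecewise linear with derivative $g^k_t := \partial_t z^k_t \in H(z^k_{t_j})$ on each subinterval. The linear-growth bound gives $\|g^k_t\| \le C(1 + \|z^k_{t_j}\|)$, so Gr\"onwall's inequality yields a uniform bound $\sup_k \sup_{t \in [0,T]} \|z^k_t\| \le M$ for some $M = M(T, z_0, C)$. The velocities are then uniformly bounded by $L := C(1 + M)$, so $(z^k)_k$ is uniformly bounded and $L$-Lipschitz, hence equicontinuous.

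By the Arzel\`a--Ascoli theorem, a subsequence of $(z^k)_k$ converges uniformly on $[0, T]$ to a limit $z$, and a diagonal argument over increasing horizons produces a single limit on $[0, \infty)$. Since the velocities $(g^k)_k$ are bounded in $L^2([0, T]; \mathbb{R}^n)$, after passing to a further subsequence they converge weakly in $L^2$ to some $g$; weak convergence suffices to pass to the limit in the identity $z^k_t = z_0 + \int_0^t g^k_s \, \mathrm{d}s$, giving the integral representation $z_t = z_0 + \int_0^t g_s \, \mathrm{d}s$ required by Definition~\ref{def:di-soln}.

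The main obstacle --- and the step that uses every hypothesis of a Marchaud map --- is verifying $g_t \in H(z_t)$ for almost every $t$. The closed-graph and linear-growth assumptions together make $H$ upper semicontinuous, so for fixed $t$ and $\varepsilon > 0$, the uniform convergence $z^k \to z$ forces $g^k_s \in H(z^k_s) \subseteq H(z_s) + \varepsilon B$ for all large $k$ and all $s$ near $t$, where $B$ is the closed unit ball. Weak convergence does not respect pointwise set membership, so I would invoke Mazur's lemma to replace $(g^k)$ by convex combinations converging strongly in $L^2$, hence pointwise almost everywhere along a subsequence. Here the convexity of $H(z_s)$ is essential: each convex combination lies in the convex set $H(z_s) + \varepsilon B$, and letting $k \to \infty$ and then $\varepsilon \downarrow 0$, together with the closedness of $H(z_s)$, forces the almost-everywhere limit to satisfy $g_s \in H(z_s)$. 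Coordinating the choice of $\varepsilon$, the time localization, and the null sets from pointwise-a.e.\ convergence is the delicate heart of the argument; the remaining verifications are routine.
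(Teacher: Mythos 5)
The paper does not prove this proposition: it is quoted as a known existence result with a pointer to \citet{smirnov2002introduction}. Your sketch is precisely the standard argument from that literature (Euler polygonal approximations, a discrete Gr\"onwall bound, Arzel\`a--Ascoli, weak compactness of the velocities, and the ``Convergence Theorem'' via Mazur's lemma), so there is no divergence of approach to report --- you are reconstructing the cited proof, and the overall structure is correct.

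One point of imprecision is worth flagging in the final step. The inclusion you write, $g^k_s \in H(z^k_s) \subseteq H(z_s) + \varepsilon B$ ``for all large $k$ and all $s$ near $t$'', is not what upper semicontinuity delivers uniformly: the threshold on $k$ depends on $s$, while Mazur's convex combinations are chosen globally in $L^2$ rather than per time point. (Also, strictly, $g^k_s \in H(z^k_{t_j})$ for $t_j$ the left endpoint of the subinterval containing $s$, not $H(z^k_s)$; this is harmless since $\|z^k_{t_j} - z^k_s\| \leq L/k$.) The standard repair is to apply Mazur's lemma to each tail $\{ g^k : k \geq N \}$ separately, obtaining for every $N$ that $g_t \in \overline{\mathrm{co}}\{ g^k_t : k \geq N \}$ for almost every $t$, and then intersecting the null sets over $N$. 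For such a $t$ and any $\varepsilon > 0$, upper semicontinuity of $H$ at the single point $z_t$, together with $z^k_{t_j(t)} \rightarrow z_t$, gives $g^k_t \in H(z_t) + \varepsilon B$ for all $k \geq N(t, \varepsilon)$, whence $g_t \in H(z_t) + \varepsilon \overline{B}$; letting $\varepsilon \downarrow 0$ and using closedness and convexity of $H(z_t)$ completes the step. With that reordering --- localising at a single almost-every $t$ in the range of $H$ rather than over a neighbourhood in $s$ --- your outline is a correct proof.
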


It is readily verified that the QTD DI satisfies the requirements of this result, and we are therefore guaranteed global solutions to this differential inclusion, under any initial conditions.

\subsection{Asymptotic Behaviour of Differential Inclusion Trajectories}

Recall that our goal is to show that the trajectories of the QTD differential inclusion must approach the fixed points of QDP. A key tool in doing so is the notion of a Lyapunov function; the following definition is based on \citet{benaim2005stochastic}.

\begin{definition}
    Consider a Marchaud map $H : \mathbb{R}^n \rightrightarrows \mathbb{R}^n$, and a subset $\Lambda \subseteq \mathbb{R}^n$. A continuous function $L : \mathbb{R}^n \rightarrow [0, \infty)$ is said to be a \emph{Lyapunov function} for the differential inclusion $\partial_t z_t \in H(z_t)$ and subset $\Lambda$ if for any solution $(z_t)_{t \geq 0}$ of the differential inclusion and $0 \leq s < t$, we have $L(z_t) < L(z_s)$ for all $z_s \not\in \Lambda$ and $L(z) = 0$ for all $z \in \Lambda$.
\end{definition}

Intuitively, $L$ is a Lyapunov function if it decreases along trajectories of the differential inclusion, and is minimal precisely on $\Lambda$. Lyapunov functions are a central tool in dynamical systems for demonstrating convergence, and in the sections that follow, we will consider the QTD differential inclusion, and take $\Lambda$ to be the set of fixed points of the family of QDP algorithms.

\subsection{QTD as a Stochastic Approximation to the QTD Differential Inclusion}

We can now give the proof of our core result, Theorem~\ref{thm:general-convergence}. The abstract stochastic approximation result at the heart of the convergence proof of QTD is presented below. It is a special case of the general framework described by \citet{benaim2005stochastic}, the proof of which is given in Appendix~\ref{sec:proof-benaim-result}.

\begin{theorem}\label{thm:benaim-result}
    Consider a Marchaud map $H : \mathbb{R}^n \rightrightarrows \mathbb{R}^n$, and the corresponding differential inclusion $\partial_t z_t \in H(z_t)$. Suppose there exists a Lyapunov function $L$ for this differential inclusion and a subset $\Lambda \subseteq \mathbb{R}^n$. Suppose also that we have a sequence $(\theta_k)_{k \geq 0}$ satisfying
    \begin{align*}
        \theta_{k+1} = \theta_k + \alpha_k (g_k + w_k) \, ,
    \end{align*}
    where:
    \begin{itemize}
        \item $(\alpha_k)_{k= 0}^\infty$ satisfy the conditions $\sum_{k=0}^\infty \alpha_k = \infty$, $\alpha_k = o(1/\log(k))$;
        \item $g_k \in H(\theta_k)$ for all $k \geq 0$;
        \item $(w_k)_{k=0}^\infty$ is a bounded martingale difference sequence with respect to the natural filtration generated by $(\theta_k)_{k=0}^\infty$; that is, there is an absolute constant $C$ such that $\|w_k\|_\infty < C$ almost surely, and $\mathbb{E}[w_k | \theta_{0},\ldots,\theta_{k} ] = 0$.
    \end{itemize}
    If further $(\theta_k)_{k=0}^\infty$ is bounded almost surely (that is, $\sup_{k \geq 0}\| \theta_k \|_\infty < \infty$ almost surely), then $\theta_k \rightarrow \Lambda$ almost surely.
\end{theorem}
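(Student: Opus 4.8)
The plan is to follow the differential-inclusion version of the ODE method for stochastic approximation, in the form developed by \citet{benaim1999dynamics} and \citet{benaim2005stochastic}; the present statement is essentially a packaging of their framework, so the work is to verify their hypotheses and chain together their conclusions. First I would pass from the discrete iterates to a continuous-time object: setting $t_0 = 0$ and $t_k = \sum_{j=0}^{k-1}\alpha_j$, define the interpolated trajectory $Z : [0,\infty) \to \mathbb{R}^n$ by $Z(t_k) = \theta_k$ and affine interpolation on each interval $[t_k, t_{k+1}]$. The condition $\sum_k \alpha_k = \infty$ guarantees $t_k \to \infty$, so $Z$ is defined on the whole half-line. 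The strategy then has three parts: (i) show that $Z$ is almost surely an \emph{asymptotic pseudotrajectory} of the set-valued flow induced by $H$; (ii) invoke the general theory to conclude that the limit set of $Z$ is internally chain transitive (ICT) for this flow; and (iii) use the Lyapunov function $L$ to show that any such ICT set must be contained in $\Lambda$. Since $Z$ and the iterates $(\theta_k)$ share the same limit set, these three steps yield $\theta_k \to \Lambda$ almost surely.

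The heart of the argument — and the step I expect to be the main obstacle — is (i). The increment decomposes into a mean-field part $g_k \in H(\theta_k)$, which is exactly a measurable selection of the driving inclusion, plus the noise $w_k$. Because the iterates are bounded almost surely and $H$ is a Marchaud map, the $g_k$ have uniformly bounded norm along any sample path, so the mean-field part alone generates curves that track genuine solutions of $\partial_t z_t \in H(z_t)$. The real work is to show that the noise is asymptotically negligible on bounded time windows, i.e.\ that for every $T > 0$,
\begin{align*}
    \lim_{t \to \infty}\ \sup_{0 \le h \le T}\ \Big\| \sum_{k \,:\, t \le t_k \le t+h} \alpha_k w_k \Big\| = 0 \quad \text{almost surely.}
\end{align*}
This is where the boundedness of the martingale difference sequence and the step-size condition $\alpha_k = o(1/\log k)$ enter decisively. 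Since $(w_k)$ is a bounded martingale difference sequence, the partial sums $\sum \alpha_k w_k$ form a martingale with increments of norm $O(\alpha_k)$, so an Azuma--Hoeffding maximal inequality gives sub-Gaussian tail bounds on the window sums, with variance proxy of order $\sum_k \alpha_k^2$ over the window (itself of order $T\alpha_k$ for a window of $t$-length $T$ beginning near index $k$). Discretising the time axis, a union bound shows the probability of an $\varepsilon$-excursion over such a window is of order $\exp(-c\varepsilon^2/\alpha_k)$. The condition $\alpha_k = o(1/\log k)$ is precisely what forces $\exp(-c/\alpha_k)$ to decay faster than any fixed negative power of $k$, making these probabilities summable; Borel--Cantelli then gives the almost-sure vanishing. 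This is exactly the point at which the weaker-than-square-summable step-size condition suffices, in contrast to the classical Robbins--Monro requirement.

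For step (ii), once $Z$ is an asymptotic pseudotrajectory, the general results of \citet{benaim2005stochastic}, building on \citet{benaim1999dynamics}, state that its limit set $\bigcap_{T \ge 0} \overline{\{ Z(t) : t \ge T\}}$ is a compact, connected, internally chain transitive set for the flow of the inclusion; almost-sure boundedness of $(\theta_k)$ ensures it is nonempty and compact. For step (iii) I would invoke the Lyapunov exclusion principle: since $L$ is continuous, nonincreasing along every solution, strictly decreasing off $\Lambda$, and satisfies $L(\Lambda) = \{0\}$, the image $L(\Lambda)$ has empty interior, and under these conditions every internally chain transitive set is contained in $\Lambda$. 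The mechanism is that chain transitivity combined with the strict-decrease property forces $L$ to be constant on any ICT set, and the empty-interior property pins that constant to the value attained on $\Lambda$, trapping the set inside $\Lambda$. Assembling (i)--(iii) gives that the limit set of $(\theta_k)$ lies in $\Lambda$, i.e.\ $\theta_k \to \Lambda$ almost surely.
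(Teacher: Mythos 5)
Your proposal is correct and follows essentially the same route as the paper: interpolate the iterates into a continuous-time path, show it is almost surely a (bounded) asymptotic pseudotrajectory of the Marchaud differential inclusion, conclude via \citet{benaim2005stochastic} that its limit set is internally chain transitive, and then use the Lyapunov function to trap every such set in $\Lambda$. The only cosmetic difference is that you unpack the noise-negligibility step (Azuma--Hoeffding plus Borel--Cantelli, exploiting $\alpha_k = o(1/\log k)$) explicitly, whereas the paper delegates exactly this argument to Proposition~1.4 of \citet{benaim2005stochastic} after first verifying the ``perturbed solution'' definition.
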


The intuition behind the conditions of the theorem are as follows. The Marchaud map condition ensures the differential inclusion of interest has global solutions. The existence of the Lyapunov function guarantees that trajectories of the differential inclusion converge in a suitably stable sense to $\Lambda$. The step size conditions, martingale difference condition, and boundedness conditions mean that the iterates $(\theta_k)_{k=0}^\infty$ will closely track the differential inclusion trajectories, and hence exhibit the same asymptotic behaviour. We can now give the proof of Theorem~\ref{thm:general-convergence}, first requiring the following proposition, which is proven in Appendix~\ref{sec:proof-bounded}.

\begin{proposition}\label{prop:bounded}
    Under the conditions of Theorem~\ref{thm:general-convergence}, the iterates $(\theta_k)_{k=0}^\infty$ are bounded almost surely.
\end{proposition}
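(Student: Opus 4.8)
The plan is to combine two features of the QTD update: its increments are uniformly bounded by the step size, and, for configurations of large magnitude, its expected update is mean-reverting because of the discount factor $\gamma < 1$ and the fact that reward tails vanish. Concretely, since each term $\tau_i - \mathbbm{1}\{\cdots\}$ lies in $[-1, 1]$, the synchronous update in Equation~\eqref{eq:qtd-sa} satisfies $|\theta_{k+1}(x,i) - \theta_k(x,i)| \leq \alpha_k$ for every coordinate, so the envelopes $U_k := \max_{x,i}\theta_k(x,i)$ and $L_k := \min_{x,i}\theta_k(x,i)$ move by at most $\alpha_k$ per step. I will show boundedness of $U_k$ from above, the bound on $L_k$ from below being symmetric.

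The drift computation is the heart of the argument. Fix a coordinate $(x,i)$ attaining $\theta_k(x,i) = U_k$. Because $\theta_k(x',j) \leq U_k$ for all $x', j$, we have $\gamma\theta_k(X'_k(x), j) \leq \gamma U_k$, so the event $\{R_k(x) < (1-\gamma)U_k\}$ forces the indicator $\mathbbm{1}\{R_k(x) + \gamma\theta_k(X'_k(x), j) < U_k\}$ to equal $1$ for every $j$. Since each reward distribution is a genuine probability measure on $\mathbb{R}$, the probability of this event tends to $1$ as $U_k \to \infty$; hence for $U_k$ above a threshold $B$ (depending on the reward distributions, $\gamma$, $m$, and $|\mathcal{X}|$) the conditional expected increment of this coordinate is at most $-c\alpha_k$ for some $c > 0$. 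To turn this into a statement about the maximum rather than a single tracked coordinate, I would observe that when $U_k \geq B$ the coordinates lying below $\gamma U_k + T$, for a suitable reward-tail level $T$, cannot leapfrog the current maximum in a single step of size $\alpha_k$, while on the high-probability event that every state with a near-maximal coordinate receives a reward below $T$, all near-maximal coordinates strictly decrease. This yields the scalar drift inequality $\mathbb{E}[U_{k+1}\mid\mathcal{F}_k] \leq U_k - c\alpha_k$ whenever $U_k \geq B$, together with $|U_{k+1}-U_k|\leq\alpha_k$.

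It remains to deduce $\sup_k U_k < \infty$ almost surely from this local drift condition. The difficulty is that the step sizes are not square-summable (the condition $\alpha_k = o(1/\log k)$ permits $\sum_k \alpha_k^2 = \infty$), so neither the standard Robbins--Monro martingale argument nor the Borkar--Meyn stability theorem applies directly; moreover, with constant step sizes a negatively-drifting process with bounded increments would still have an unbounded supremum, so the argument must genuinely exploit $\alpha_k \to 0$. I would write $U_{k+1} = U_k + \alpha_k(d_k + \xi_k)$ with $d_k \leq -c$ on $\{U_k \geq B\}$, $|d_k| \leq 1$, and $\xi_k$ a martingale-difference term with $|\xi_k|\leq 2$, and then control the excursions of $U_k$ above $B$ by applying the Azuma--Hoeffding inequality to the bounded-increment noise martingale $\sum_{j}\alpha_j\xi_j$ over blocks of algorithm time of fixed accumulated step size. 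Because each increment of this martingale is bounded by $2\alpha_j$, the probability that the noise overcomes the accumulated negative drift and drives $U_k$ a fixed height above $B$ within a block decays exponentially in the inverse of the block's largest step size; the condition $\alpha_k = o(1/\log k)$ is exactly what makes these deviation probabilities summable against the number of blocks, so Borel--Cantelli gives that $U_k$ exceeds $B+1$ only finitely often, and hence $U_k$ is bounded above almost surely. Applying the symmetric argument to $L_k$ establishes that $(\theta_k)_{k=0}^\infty$ is almost surely bounded.

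The step I expect to be the main obstacle is this final one: extracting almost-sure boundedness from the local mean-reversion under the weak step-size condition, since the non-square-summable steps rule out the off-the-shelf stability machinery and force a hands-on concentration-and-covering argument in which the $o(1/\log k)$ rate plays a decisive and non-negotiable role. A secondary technical nuisance is the non-smoothness of $U_k$ and $L_k$ -- the identity of the extremal coordinate can change from step to step, and the coupling between coordinates runs through $\mathcal{T}^\pi$ -- which must be handled carefully to obtain a clean scalar drift inequality rather than merely a per-coordinate one.
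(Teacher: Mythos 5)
Your proposal is correct and follows essentially the same route as the paper's proof: the same drift computation (for $\|\theta_k\|_\infty$ large, coordinates near the maximum have conditional expected increment at most $-c\alpha_k$ with $c$ of order $1/m$, obtained from $\gamma<1$ together with a reward-tail bound), the same noise control (block-wise Azuma--Hoeffding plus Borel--Cantelli over windows of bounded accumulated step size, which is precisely the content of Proposition~1.4 of \citet{benaim2005stochastic} that the paper invokes, and exactly where $\alpha_k=o(1/\log k)$ enters), and the same excursion/chaining conclusion. The only cosmetic difference is that you package the drift as a one-step conditional inequality for the scalar envelope $U_k$ -- which does go through via your joint high-probability reward event, since the max of negatively drifting coordinates need not drift negatively without such a joint event -- whereas the paper chains the per-coordinate drift pathwise over blocks of accumulated step size of order $m$.
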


\begin{proof}(Proof of Theorem~\ref{thm:general-convergence})
    We see that for the QTD sequence $(\theta_k)_{k=0}^\infty$ and the QTD DI and QDP invariant set $\Lambda = \{ \qdpfixedpointlambdaparams : \lambda \in [0,1]^{\mathcal{X} \times [m]} \}$, the conditions of Theorem~\ref{thm:benaim-result} are satisfied, except perhaps for the boundedness of $(\theta_k)_{k=0}^\infty$, and the existence of the Lyapunov function.
    The fact that the sequence $(\theta_k)_{k=0}^\infty$ is bounded almost surely is Proposition~\ref{prop:bounded}; its proof is somewhat technical, and given in the appendix. The construction of a valid Lyapunov function is given in Proposition~\ref{prop:general-lyapunov} below, which completes the proof.
\end{proof}

\begin{remark}
    What makes the relaxation to the differential inclusion work in this analysis? We have already seen that some kind of relaxation of the dynamics is required in order to define a valid continuous-time dynamical system; the original ODE may not have solutions in general. If we relax the dynamics too much (an extreme example would be the differential inclusion $\vartheta_t(x, i) \in \mathbb{R}$), what goes wrong? The answer is that there are too many resulting solutions, which do not exhibit the desired asymptotic behaviour. Thus, the differential inclusion in Equation~\eqref{eq:qtd-di} is in some sense just the right level of relaxation of the differential equation we started with, since trajectories of the QTD DI are still guaranteed to converge to the QDP fixed points.
\end{remark}

\subsection{A Lyapunov Function for the QDP Fixed Points}

In this section, we prove the existence of a Lyapunov function required in order to use Theorem~\ref{thm:benaim-result} to prove Theorem~\ref{thm:general-convergence}. We treat the case when Assumption~\ref{assume:lipschitz-monotonicity} holds separately as the proof is instructive, and considerably simpler than the general case. Under this assumption, note that all projections $\Pi^{\lambda}$ behave identically on the image of $\mathcal{T}^\pi$, since all resulting CDFs are strictly increasing. We therefore introduce the notation $\Pi$ to refer to any such projection in this case, and the notation $\hat{\theta}^\pi_m$ to refer to the unique fixed point of $\Pi \mathcal{T}^\pi$.

\begin{proposition}\label{prop:lyapunov}
    Consider the ODE in Equation~\eqref{eq:ode}, and suppose Assumption~\ref{assume:lipschitz-monotonicity} holds. A Lyapunov function for the equilibrium point $\hat{\theta}^\pi_m$ is given by
    \begin{align*}
        L(\theta) = \max_{x \in \mathcal{X}} \max_{i=1,\ldots,m} | \theta(x, i) - \hat{\theta}^\pi_m(x, i) | \, .
    \end{align*}
\end{proposition}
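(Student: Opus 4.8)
The plan is to verify directly that $L$ satisfies the three requirements of a Lyapunov function for the singleton $\Lambda = \{\hat{\theta}^\pi_m\}$. Continuity of $L$, together with $L \geq 0$ and $L(\theta) = 0 \iff \theta = \hat{\theta}^\pi_m$, is immediate from the definition of $L$ as a maximum of absolute coordinate differences. The entire content is therefore the strict-decrease property $L(\vartheta_t) < L(\vartheta_s)$ along any trajectory with $\vartheta_s \notin \Lambda$. I would establish this by showing that $t \mapsto L(\vartheta_t)$ has strictly negative upper-right Dini derivative at every $t$ with $L(\vartheta_t) > 0$, and then invoke the standard fact that a continuous function whose Dini derivative is negative wherever it is positive must be strictly decreasing until it first reaches $0$ (an equilibrium of the ODE, at which it then remains), which gives $L(\vartheta_t) < L(\vartheta_s)$ in all cases.

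To compute the Dini derivative, write $M = L(\vartheta_t)$ and $\hat{\theta} = \hat{\theta}^\pi_m$. Since $L$ is the maximum of the finitely many linear functions $\theta \mapsto \pm(\theta(x,i) - \hat{\theta}(x,i))$, and since under Assumption~\ref{assume:lipschitz-monotonicity} the right-hand side of Equation~\eqref{eq:ode} is continuous so that $\vartheta_t$ is $C^1$, the upper Dini derivative of $L \circ \vartheta$ equals the maximum, over the \emph{active} coordinates $(x^*, i^*)$ attaining $|\vartheta_t(x^*,i^*) - \hat{\theta}(x^*,i^*)| = M$, of the signed velocity $\mathrm{sign}(\vartheta_t(x^*,i^*) - \hat{\theta}(x^*,i^*)) \cdot \partial_t \vartheta_t(x^*,i^*)$. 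It therefore suffices to show each such signed velocity is strictly negative.

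Fix an active coordinate, say with $\vartheta_t(x^*,i^*) = \hat{\theta}(x^*,i^*) + M$ and $M > 0$ (the opposite sign is symmetric). The key estimate comes from contraction. Writing $\eta_t(x) = \tfrac{1}{m}\sum_i \delta_{\vartheta_t(x,i)}$ and $\hat{\eta}$ for the distributions associated with $\hat{\theta}$, the identity coupling yields $\bar{w}_\infty(\eta_t, \hat{\eta}) \leq \max_{x,i} |\vartheta_t(x,i) - \hat{\theta}(x,i)| = M$, so Proposition~\ref{prop:dist-contract} gives $w_\infty((\mathcal{T}^\pi \eta_t)(x^*), (\mathcal{T}^\pi \hat{\eta})(x^*)) \leq \gamma M$. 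Abbreviating $G = (\mathcal{T}^\pi \eta_t)(x^*)$ and $\hat{G} = (\mathcal{T}^\pi \hat{\eta})(x^*)$, the fixed-point equation gives $\hat{\theta}(x^*, i^*) = F^{-1}_{\hat{G}}(\tau_{i^*})$, and the $w_\infty$ bound gives $F^{-1}_G(\tau_{i^*}) \leq \hat{\theta}(x^*,i^*) + \gamma M$. Since $\gamma < 1$ and $M > 0$, this is strictly below $\vartheta_t(x^*,i^*) = \hat{\theta}(x^*,i^*) + M$; and since Assumption~\ref{assume:lipschitz-monotonicity} forces the CDF of $G$ (a reward convolution) to be continuous and strictly increasing, the strict inequality $F^{-1}_G(\tau_{i^*}) < \vartheta_t(x^*,i^*)$ upgrades to $F_G(\vartheta_t(x^*,i^*)) > \tau_{i^*}$. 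Hence $\partial_t \vartheta_t(x^*,i^*) = \tau_{i^*} - F_G(\vartheta_t(x^*,i^*)) < 0$, as required.

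The main obstacle I anticipate is the bookkeeping around non-smoothness: justifying the Dini-derivative formula for the max-of-absolute-values $L$ and converting pointwise negativity into genuine strict monotonicity of $L \circ \vartheta$. The other delicate point is extracting \emph{strict} inequalities, where two independent sources of slack are essential — the factor-$\gamma$ contraction (providing $\gamma M < M$) and the strict monotonicity of the Bellman-target CDF guaranteed by Assumption~\ref{assume:lipschitz-monotonicity} — since either one alone would only deliver $\partial_t \vartheta_t(x^*,i^*) \leq 0$.
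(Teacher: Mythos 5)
Your proposal is correct and follows essentially the same route as the paper's proof: both arguments fix an argmax coordinate, combine the $\gamma$-contraction of $\Pi\mathcal{T}^\pi$ in $\bar{w}_\infty$ with the strict monotonicity of the Bellman-target CDF guaranteed by Assumption~\ref{assume:lipschitz-monotonicity}, and conclude that the active coordinate's velocity points strictly towards $\hat{\theta}^\pi_m$. The only difference is presentational — you make explicit the Dini-derivative bookkeeping for the max-of-absolute-values function, a step the paper passes over tersely — so no further comparison is needed.
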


\begin{proof}
    We immediately observe that $L$ is continuous, non-negative, and takes on the value $0$ only at $\hat{\theta}^\pi_m$.
    To show that $L(\vartheta_t)$ is decreasing, where $(\vartheta_t)_{t \geq 0}$ is an ODE trajectory, suppose $(x, i)$ is a state-index pair attaining the maximum in $L(\vartheta_t)$. It is sufficient to show that $\vartheta_t(x, i)$ is moving towards $\hat{\theta}^\pi_m(x, i)$, or expressed mathematically,
    \begin{align*}
        \partial_t \vartheta_t(x, i) \signequals \hat{\theta}^\pi_m(x, i) - \vartheta_t(x, i) \, ,
    \end{align*}
    where we use $a \signequals b$ as shorthand for \emph{equality of signs} $\text{sign}(a) = \text{sign}(b)$, where
    \begin{align*}
        \text{sign}(z) =
        \begin{cases}
            1 & \text{ if } z > 0  \, , \\
            0 & \text{ if } z = 0  \, , \\
            -1 & \text{ if } z < 0  \, . \\
        \end{cases}
    \end{align*}
    Now note that
    \begin{align*}
        \partial_t \vartheta_t(x, i)& = \tau_i - F_{(\mathcal{T}^\pi \vartheta_t)(x)}(\vartheta_t(x, i))\\
        & \signequals F^{-1}_{(\mathcal{T}^\pi \vartheta_t)(x)}(\tau_i) - F^{-1}_{(\mathcal{T}^\pi \vartheta_t)(x)}(F_{(\mathcal{T}^\pi \vartheta_t)(x)}(\vartheta_t(x, i)))\\
        & = (\Pi \mathcal{T}^\pi \vartheta_t)(x, i) - \vartheta_t(x, i) \, ,
    \end{align*}
    where the sign equality follows from Assumption~\ref{assume:lipschitz-monotonicity}; since all reward CDFs are strictly increasing, so too is $F_{(\mathcal{T}^\pi \vartheta_t)(x)}$, and so $F^{-1}_{(\mathcal{T}^\pi \vartheta_t)(x)}$ is strictly monotonic.
    Additionally, from the contractivity of $\Pi \mathcal{T}^\pi$ with respect to $\overline{w}_\infty$ (see Proposition~\ref{prop:contraction}), we have
    \begin{align}
        |(\Pi \mathcal{T}^\pi \vartheta_t)(x, i) - \hat{\theta}^\pi_m(x, i)| & \leq \overline{w}_\infty(\Pi \mathcal{T}^\pi \vartheta_t, \Pi \mathcal{T}^\pi \hat{\theta}^\pi_m)  \nonumber \\
        & \leq \gamma \max_{y \in \mathcal{X}} \max_{j=1,\ldots,m} | \vartheta_t(y, j) - \hat{\theta}^\pi_m(y, j) | \nonumber \\
        & = \gamma |\vartheta_t(x, i) - \hat{\theta}^\pi_m(x, i)| \, ; \label{eq:ineq}
    \end{align}
    the equality follows since we selected $(x, i)$ attain the maximum in the definition of $L(\vartheta_t)$. 
    From this, we deduce
    \begin{align*}
        (\Pi \mathcal{T}^\pi \vartheta_t)(x, i) - \vartheta_t(x, i) \signequals \hat{\theta}^\pi_m(x, i) - \vartheta_t(x, i) \, ;
    \end{align*}
    which follows by considering the three cases for the sign of $\hat{\theta}^\pi_m(x, i) - \vartheta_t(x, i)$. If the sign equals zero, then since $(x, i)$ was chosen to be maximal in the definition of $L(\vartheta_t)$, we have $\vartheta_t = \hat{\theta}^\pi_m$, and hence $\Pi \mathcal{T}^\pi \vartheta_t = \hat{\theta}^\pi_m$, and the claim follows; both sides are equal to 0. For the case $\hat{\theta}^\pi_m(x, i) - \vartheta_t(x, i) < 0$, then note we have
    \begin{align*}
        (\Pi \mathcal{T}^\pi \vartheta_t)(x, i) - \vartheta_t(x, i) &
            \leq \hat{\theta}^\pi_m(x, i) + \gamma (\vartheta_t(x, i) - \hat{\theta}^\pi_m(x,i)) - \vartheta_t(x, i) \\
    &  = (1-\gamma) (\hat{\theta}^\pi_m(x, i) - \vartheta_t(x, i)) < 0 \, ,
    \end{align*}
    as required, with the inequality above following from Equation~\eqref{eq:ineq}. The case $\hat{\theta}^\pi_m(x, i) - \vartheta_t(x, i) > 0$ follow similarly.
    We therefore have
    \begin{align*}
        \partial_t \vartheta_t(x, i) \signequals \hat{\theta}^\pi_m(x, i) - \vartheta_t(x, i) \, .
    \end{align*}
    We therefore have that $L(\vartheta_t)$ is decreasing at $t$, strictly so if $\vartheta_t \not= \hat{\theta}^\pi_m$, as required to establish the result.
\end{proof}

The proof of Proposition~\ref{prop:lyapunov} also sheds further light on the mechanisms underlying the QTD algorithm. A key step in the argument is to show that for the state-index pairs $(x, i)$ such that $\vartheta_t(x, i)$ is maximally distant from the fixed point $\theta^\pi_m(x, i)$, the expected update under QTD moves this coordinate of the estimate in the same direction as gradient descent on a squared loss from the fixed point. However, the fact that it is only the \emph{sign} of the update that has this property, and not its magnitude, means that the empirical rate of convergence and stability of QTD can be expected to be somewhat different from methods based on an $L^2$ loss, such as classical TD.

We now state the Lyapunov result in the general case; the proof is somewhat more involved, and is given in Appendix~\ref{sec:proof-general-lyapunov}.

\begin{proposition}\label{prop:general-lyapunov}
    The function
    \begin{align}\label{eq:non-unique-lyapunov}
        L(\theta)  = \min_{\lambda  \in [0,1]^{\mathcal{X} \times [m]}} \max_{(x, i)} | \theta(x, i) - \hat{\theta}^{\pi}_{\lambda}(x, i)  |
    \end{align}
    is a Lyapunov function for the differential inclusion in Equation~\eqref{eq:qtd-di} and the set of fixed points $\{ \hat{\theta}^{\pi}_{\lambda} : \lambda \in [0,1]^{\mathcal{X} \times [m]} \}$.
\end{proposition}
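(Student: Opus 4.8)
The plan is to reduce this multi-fixed-point statement to single-fixed-point estimates that mirror the proof of Proposition~\ref{prop:lyapunov}, and then to confront head-on the two new difficulties: the discontinuities in the right-hand side of Equation~\eqref{eq:qtd-di}, and the $\min_\lambda$ in the definition of $L$. First I would record the structural preliminaries. Each map $\theta \mapsto \max_{(x,i)} | \theta(x,i) - \hat{\theta}^{\pi}_{\lambda}(x,i) |$ is $1$-Lipschitz, so $L$, being their pointwise infimum over the compact cube $[0,1]^{\mathcal{X}\times[m]}$, is $1$-Lipschitz and hence continuous. The set $\Lambda = \{ \hat{\theta}^{\pi}_{\lambda} \}$ coincides exactly with the equilibria $\{ \theta : 0 \in H(\theta)\}$ of the QTD differential inclusion: $\theta \in \Lambda$ iff for every $(x,i)$ the value $\theta(x,i)$ is a valid $\tau_i$-quantile of $(\mathcal{T}^\pi \theta)(x)$, which is precisely the condition $F_{(\mathcal{T}^\pi\theta)(x)}(\theta(x,i)-) \le \tau_i \le F_{(\mathcal{T}^\pi\theta)(x)}(\theta(x,i))$, i.e.\ that $0$ lies in the interval on the right-hand side of Equation~\eqref{eq:qtd-di}. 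Since $H$ has closed graph and all fixed points lie within a bounded $\bar{w}_\infty$-ball of any one of them, $\Lambda$ is compact; thus the minimum defining $L$ is attained and $L^{-1}(0) = \Lambda$. It remains to establish strict decrease of $L$ along solutions starting outside $\Lambda$.

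Fix a solution $(\vartheta_t)_{t\ge0}$ and times $s<t$ with $\vartheta_s \notin \Lambda$, and choose $\lambda^\star$ attaining the minimum at time $s$, writing $\hat\theta = \hat{\theta}^{\pi}_{\lambda^\star}$ and $M(\theta) = \max_{(x,i)} |\theta(x,i) - \hat\theta(x,i)|$. Since $L(\vartheta_r) \le M(\vartheta_r)$ for every $r$, it suffices to show $M(\vartheta_t) < M(\vartheta_s) = L(\vartheta_s)$. The core estimate generalises the sign computation of Proposition~\ref{prop:lyapunov}. Interpreting $\vartheta_r$ and $\hat\theta$ as the atomic distributions of Expression~\eqref{eq:quantiles-to-dist}, and using that $\bar{w}_\infty$ between two equally-weighted $m$-atom families equals the maximal gap between their order statistics (which a rearrangement argument bounds by the raw coordinate gap), Proposition~\ref{prop:dist-contract} gives $w_\infty((\mathcal{T}^\pi \vartheta_r)(x),(\mathcal{T}^\pi \hat\theta)(x)) \le \gamma M(\vartheta_r)$. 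Writing $\nu = (\mathcal{T}^\pi\vartheta_r)(x)$ and $\hat\nu = (\mathcal{T}^\pi\hat\theta)(x)$, and recalling $\hat\theta(x,i) \in [F^{-1}_{\hat\nu}(\tau_i), \bar{F}^{-1}_{\hat\nu}(\tau_i)]$, we obtain $F^{-1}_\nu(\tau_i) \le \hat\theta(x,i) + \gamma M(\vartheta_r)$. Consequently, at a maximizing coordinate with, say, $\vartheta_r(x,i) = \hat\theta(x,i) + M(\vartheta_r)$, one has $F^{-1}_\nu(\tau_i) \le \vartheta_r(x,i) - (1-\gamma)M(\vartheta_r) < \vartheta_r(x,i)$, whence $F_\nu(\vartheta_r(x,i)-) \ge \tau_i$ and every element of the differential-inclusion interval at $(x,i)$ is $\le 0$; the symmetric conclusion holds when $\vartheta_r(x,i)$ lies below $\hat\theta(x,i)$. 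Thus the realised derivative $g_r(x,i)$ always points toward $\hat\theta(x,i)$ on maximizing coordinates, $r \mapsto M(\vartheta_r)$ is absolutely continuous with a.e.\ nonpositive derivative (by the envelope/Danskin reasoning for the maximum of the absolutely continuous terms), and so $M$, and hence $L$, is non-increasing.

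The main obstacle is upgrading this to \emph{strict} decrease. The bound above is only non-strict in one situation: when $F_\nu$ has a flat stretch at exactly level $\tau_i$ reaching up to $\vartheta_r(x,i)$, so that $0$ lies in the inclusion interval and the coordinate may remain frozen — the ``sticking'' phenomenon. Instantaneous strict decrease of $M_{\lambda^\star}$ can genuinely fail, because at a given instant all maximizing coordinates may be simultaneously stuck while $\vartheta_r \notin \Lambda$ (the non-maximizing coordinates need not be at valid quantiles). This is exactly where the $\min_\lambda$ is essential. My plan is to rule out that $M_{\lambda^\star}(\vartheta_r)$ stays equal to $d := L(\vartheta_s) > 0$ on all of $[s,t]$. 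If at the endpoint every maximizing coordinate is stuck, then $\vartheta_t(x,i)$ is itself a valid $\tau_i$-quantile of $(\mathcal{T}^\pi\vartheta_t)(x)$ on the active set, so I would construct a perturbed interpolation $\lambda'$ that moves the corresponding fixed-point entries from $\hat\theta(x,i)$ toward $\vartheta_t(x,i)$, exhibiting $M_{\lambda'}(\vartheta_t) < d$ and hence $L(\vartheta_t) < d$.

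Making this last step rigorous is the delicate crux. Perturbing $\lambda$ on the active coordinates perturbs the \emph{entire} fixed point $\hat{\theta}^{\pi}_{\lambda'}$ through the coupling introduced by $\mathcal{T}^\pi$, so I must ensure that the induced displacement of the non-active entries — which I can bound by $\gamma$ times the displacement via the contractivity of $\Pi^{\lambda'}\mathcal{T}^\pi$ (Proposition~\ref{prop:contraction}) — does not push any of them out to distance $d$, while the active entries strictly decrease below $d$. I expect to achieve this by taking a \emph{sufficiently small} perturbation of $\lambda^\star$, invoking continuity of $\lambda \mapsto \hat{\theta}^{\pi}_{\lambda}$ together with the strict slack $(1-\gamma)d$ available at active coordinates to keep the non-active entries strictly inside distance $d$. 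The complementary case — in which the active set shrinks over $[s,t]$ because a stuck coordinate unfreezes as $(\mathcal{T}^\pi\vartheta_r)(x)$ deforms under the motion of the other coordinates — would be dispatched by a compactness argument on $[s,t]$, showing the maximal distance must drop by a positive amount over the interval. Carefully combining these two mechanisms to guarantee $M(\vartheta_t) < M(\vartheta_s)$ in all configurations is the part I anticipate will require the most technical work.
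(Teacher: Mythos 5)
Your overall strategy coincides with the paper's: establish continuity of $L$ and that its zero set is exactly the fixed-point set, run the contraction-plus-sign argument of Proposition~\ref{prop:lyapunov} on the argmax coordinates to show that every element of the inclusion interval points toward $\hat{\theta}^{\pi}_{\lambda^\star}$ whenever that interval does not contain $0$, identify ``sticking'' ($0 \in H^\pi_{x,i}(\vartheta_t)$ at an argmax coordinate) as the only obstruction to strict decrease, and in that case switch to a nearby interpolation parameter $\lambda'$ whose fixed point is strictly closer to $\vartheta_t$. The preliminaries and the sign computation are correct (your derivation that $F_{\nu}(\vartheta_r(x,i)-) \geq \tau_i$ at an upper argmax coordinate is exactly the inequality needed), and your observation that $L(\vartheta_t) \leq M_{\lambda'}(\vartheta_t)$ for any $\lambda'$ is the right way to exploit the minimum over $\lambda$.

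However, there is a genuine gap precisely at the step you yourself flag as ``the delicate crux'': you assert, but do not prove, that a perturbation $\lambda'$ of $\lambda^\star$ exists which moves the active coordinates of the fixed point toward $\vartheta_t$. This is not a routine continuity estimate. The map $\lambda \mapsto \hat{\theta}^{\pi}_{\lambda}$ is defined implicitly through a coupled fixed-point equation, and what is needed is local \emph{surjectivity} of $\mu \mapsto \projmap \hat{\theta}^{\pi}_{\overline{\lambda}[\mu]}$ onto a neighbourhood in $\mathbb{R}^{\indexset}$, i.e.\ that every direction of displacement of the active fixed-point coordinates is realisable by an admissible perturbation of $\lambda$ on those coordinates. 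The paper obtains this by proving Lipschitz continuity of $\lambda \mapsto \hat{\theta}^{\pi}_{\lambda}$ (Lemma~\ref{lemma:fixed-point-sensitivity}), local injectivity of the restricted map, and then invoking Brouwer's invariance of domain to conclude that it is open; it must additionally (i) handle the case where $\overline{\lambda}_{\indexset}$ lies on the boundary of $[0,1]^{\indexset}$, which requires extending $\Pi^{\lambda}$ to $\lambda$ outside $[0,1]$ (checking the extended operators remain contractions) and proving Lemma~\ref{lem:boundary} to guarantee the required perturbation direction points into the cube, and (ii) carry out a differential --- and, via Rademacher's theorem, non-smooth --- analysis of $\nabla_\lambda \theta^\lambda$ to show the induced displacement actually lies in the desired orthant $\cone(v)$ rather than merely being nonzero. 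None of these ingredients appear in your proposal, and without them the claim that the fixed point can be steered toward $\vartheta_t$ is unsupported: continuity and ``sufficiently small perturbation'' alone do not rule out that the fixed point moves in the wrong direction, or not at all, or only via parameters leaving $[0,1]^{\mathcal{X} \times [m]}$. Your closing ``complementary case'' (the active set shrinking over $[s,t]$) is likewise only gestured at; the paper sidesteps it by proving local decrease of $L$ at almost every time and appealing to continuity.
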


\subsection{Extension to Asynchronous QTD}
\label{sec:main-async}

Our convergence results have focused on the synchronous case of QTD. However, in practice, it is often of interest to implement \emph{asynchronous} versions of TD algorithms, in which only a single state is updated at a time. More formally, an asynchronous version of QTD computes the sequence $(\theta_k)_{k \geq 0}$ defined by an initial estimate $\theta \in \mathbb{R}^{\mathcal{X} \times [m]}$, a sequence of transitions
$(X_k, R_k, X'_k)_{k \geq 0}$, and the update rule
\begin{align*}
     \theta_{k+1}(x, i) =
        \theta_k(x, i) + 
        \beta_{x, k} \frac{1}{m} \sum_{j=1}^m (\tau_i - \mathbbm{1}\{ R_k + \gamma \theta_k(X'_k, j) < \theta_k(x, i)  \})
\end{align*}
for $x = X_k$, and $\theta_{k+1}(x, i) = \theta_k(x, i)$ otherwise. 
Here, the step size $\beta_{x, k}$ depends on both $x$ and $k$, and is typically selected so that each state individually makes use of a fixed step size sequence $(\alpha_k)_{k =0}^\infty$, by taking $\beta_{x, k} = \alpha_{\sum_{l=0}^k \mathbbm{1}\{ X_l = x \} }$. 
This models the online situation where a stream of experience $(X_k, R_k)_{k \geq 0}$ is generated by interacting with the environment using the policy $\pi$, and updates are performed setting $X'_k = X_{k+1}$, and also the setting in which the tuples $(X_k, R_k, X'_k)_{k \geq 0}$ are sampled i.i.d.\ from a replay buffer, among others.

Convergence of QTD in such asynchronous settings can also be proven; \citet{perkins2013asynchronous} extend the analysis of \citet{benaim1999dynamics} and \citet{benaim2005stochastic}, incorporating the approach of \citet{borkar1998asynchronous}, to obtain convergence guarantees for asynchronous stochastic approximation algorithms approximating differential inclusions. In the interest of space, we do not provide the full details of the proof here, but instead sketch the key differences that arise in the analysis in Appendix~\ref{sec:async}.

\section{Analysis of the QTD Limit Points}
\label{sec:fixed-point-analysis}

In general, the limiting points $\qdpfixedpointlambda$ for QTD/QDP will not be the same as the true return-distribution function $\eta^\pi$. On the one hand, this is clear; each return-distribution function $\qdpfixedpointlambda$ is in the image of the projection $\Pi^\lambda$, so each constituent probability distribution must be of the form $\tfrac{1}{m} \sum_{i=1}^m \delta_{z_i}$, whereas the true return distributions need not take on this form. In addition, the magnitude of this approximation error is not immediately clear. Each application of the projection $\Pi^\lambda$ in the dynamic programming process causes some loss of information, and the quality of the fixed point $\qdpfixedpointlambda$ is affected by the build up of these approximations over time. 

Measuring approximation error in $\bar{w}_\infty$ typically turns out to be uninformative, as $\bar{w}_\infty$ is a particularly strict notion of distance between probability distributions, as discussed in the context of distributional RL by \citet{rowland2019statistics} and \citet{bdr2022}. In particular, fixed points $\qdpfixedpointlambda$ that intuitively provide a good approximation to $\eta^\pi$ may have high $\bar{w}_\infty$-distance, and the $\bar{w}_\infty$-distance generally does not decrease with $m$ \citep{bdr2022}. Instead, we use the Wasserstein-1 metric, and its extension to return-distribution functions, defined by
\begin{align*}
    w_1(\nu, \nu') = \int_0^1 |F^{-1}_\nu(t)  - F^{-1}_{\nu'}(t) | \; \mathrm{d}t \, ,  \quad \bar{w}_1(\eta, \eta') = \max_{x \in \mathcal{X}} w_1(\eta(x), \eta'(x)) \, ,
\end{align*}
for all $\nu, \nu' \in \mathscr{P}(\mathbb{R})$, and $\eta, \eta' \in \mathscr{P}(\mathbb{R})^\mathcal{X}$.
The following result improves on the analysis given by \citet{bdr2022} for the case of $\lambda \equiv 0$, establishing an upper bound on the $\overline{w}_1$ distance between $\qdpfixedpointlambda$ and $\eta^\pi$ for any $\lambda$, essentially by showing that the errors accumulated in dynamic programming can be made arbitrarily small by increasing $m$, which controls the richness of the distribution representation.

\begin{restatable}{proposition}{propFixedPointQualityOne}\label{prop:fixed-point-quality-1}
    For any $\lambda \in [0,1]^{\mathcal{X} \times [m]}$, if all reward distributions are supported on $[R_\textsc{min}, R_\textsc{max}]$, then we have
    \begin{align*}
        \bar{w}_1(\qdpfixedpointlambda, \eta^\pi) \leq \frac{V_\textsc{max} - V_{\textsc{min}}}{2m(1-\gamma)} \, ,
    \end{align*}
    where $V_\textsc{max}=R_\textsc{max}/(1-\gamma)$, and similarly $V_\textsc{min}=R_\textsc{min}/(1-\gamma)$.
\end{restatable}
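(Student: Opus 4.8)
The plan is to prove this as an approximate-dynamic-programming bound: the distance between the fixed point $\qdpfixedpointlambda$ and $\eta^\pi$ should be controlled by $1/(1-\gamma)$ times a single application of the projection error. First I would record the two fixed-point identities $\qdpfixedpointlambda = \Pi^\lambda \mathcal{T}^\pi \qdpfixedpointlambda$ (Proposition~\ref{prop:contraction}) and $\eta^\pi = \mathcal{T}^\pi \eta^\pi$. Inserting the intermediate point $\mathcal{T}^\pi \qdpfixedpointlambda$ and applying the triangle inequality gives
\[
\bar{w}_1(\qdpfixedpointlambda, \eta^\pi) \le \bar{w}_1(\Pi^\lambda \mathcal{T}^\pi \qdpfixedpointlambda, \mathcal{T}^\pi \qdpfixedpointlambda) + \bar{w}_1(\mathcal{T}^\pi \qdpfixedpointlambda, \mathcal{T}^\pi \eta^\pi) \, .
\]
Using that $\mathcal{T}^\pi$ is a $\gamma$-contraction with respect to $\bar{w}_1$ (the $\bar{w}_1$ analogue of Proposition~\ref{prop:dist-contract}, established in the same way by \citet{bellemare2017distributional,bdr2022}), the second term is at most $\gamma \bar{w}_1(\qdpfixedpointlambda, \eta^\pi)$, and rearranging yields $\bar{w}_1(\qdpfixedpointlambda, \eta^\pi) \le \frac{1}{1-\gamma}\bar{w}_1(\Pi^\lambda \mathcal{T}^\pi \qdpfixedpointlambda, \mathcal{T}^\pi \qdpfixedpointlambda)$. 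This decomposition deliberately avoids asking $\Pi^\lambda$ to be a non-expansion in $w_1$ --- a property that is in fact false --- and instead only measures the projection error at the single point $\mathcal{T}^\pi \qdpfixedpointlambda$.

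It then remains to bound the one-step projection error $w_1(\Pi^\lambda \nu, \nu)$ for a distribution $\nu$ supported on a bounded interval $[a, b]$, and to show it is at most $(b-a)/(2m)$. I would compute this directly from the inverse-CDF formula for $w_1$: writing $z_i = (1-\lambda(x,i))F^{-1}_\nu(\tau_i) + \lambda(x,i)\bar{F}^{-1}_\nu(\tau_i)$ for the atom locations, the projected distribution satisfies $F^{-1}_{\Pi^\lambda\nu}(t) = z_i$ on $t \in ((i-1)/m, i/m)$, so that $w_1(\Pi^\lambda\nu,\nu) = \sum_{i=1}^m \int_{(i-1)/m}^{i/m} |z_i - F^{-1}_\nu(t)|\,\mathrm{d}t$. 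The crucial structural fact is that $\tau_i = (2i-1)/(2m)$ is exactly the midpoint of $((i-1)/m, i/m)$, and that $F^{-1}_\nu(\tau_i) \le z_i \le \bar{F}^{-1}_\nu(\tau_i)$; monotonicity of $F^{-1}_\nu$ then gives $F^{-1}_\nu(t) \le z_i$ for $t < \tau_i$ and $F^{-1}_\nu(t) \ge z_i$ for $t > \tau_i$. Splitting each integral at $\tau_i$ and bounding $F^{-1}_\nu$ by its extreme values $c_i, d_i$ on the interval (noting $z_i \in [c_i, d_i]$), the two halves contribute at most $\frac{1}{2m}(z_i - c_i)$ and $\frac{1}{2m}(d_i - z_i)$, for a total of $\frac{1}{2m}(d_i - c_i)$. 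Summing over $i$ and telescoping (using $d_i \le c_{i+1}$) bounds the sum by $\frac{1}{2m}(b - a)$. The factor of two here --- exactly what is needed --- comes entirely from $\tau_i$ being the midpoint rather than an endpoint.

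To apply this bound with $\nu = \mathcal{T}^\pi\qdpfixedpointlambda$, I would verify the support condition. The set of return-distribution functions supported on $[V_\textsc{min}, V_\textsc{max}]$ is invariant under $\Pi^\lambda \mathcal{T}^\pi$: if rewards lie in $[R_\textsc{min}, R_\textsc{max}]$ and $\eta$ is supported on $[V_\textsc{min}, V_\textsc{max}]$, then $\mathcal{T}^\pi\eta$ is supported on $[R_\textsc{min} + \gamma V_\textsc{min}, R_\textsc{max} + \gamma V_\textsc{max}] = [V_\textsc{min}, V_\textsc{max}]$, and $\Pi^\lambda$ places atoms at quantiles that remain within this range. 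Since this set is closed with respect to $\bar{w}_\infty$ and $\qdpfixedpointlambda$ is the limit of the iterates of Proposition~\ref{prop:contraction}, the fixed point lies in it; hence $\mathcal{T}^\pi\qdpfixedpointlambda$ is supported on $[V_\textsc{min}, V_\textsc{max}]$ and the projection bound applies with $b - a = V_\textsc{max} - V_\textsc{min}$. Combining the three ingredients gives $\bar{w}_1(\qdpfixedpointlambda, \eta^\pi) \le \frac{1}{1-\gamma}\cdot\frac{V_\textsc{max} - V_\textsc{min}}{2m}$, as claimed.

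The main obstacle is the one-step projection bound with the sharp constant $1/(2m)$: a naive estimate that replaces $F^{-1}_\nu$ by its extreme values over a whole interval of width $1/m$ loses the factor of two and yields only $(V_\textsc{max}-V_\textsc{min})/(m(1-\gamma))$. Getting the correct constant requires exploiting the midpoint placement of $\tau_i$ and splitting each integral at it. A secondary point requiring care is that the contraction driving the argument ($\mathcal{T}^\pi$ in $\bar{w}_1$) is in a different metric from the $\bar{w}_\infty$ contraction used to establish existence of $\qdpfixedpointlambda$; the decomposition in the first paragraph is arranged precisely so that no $w_1$ non-expansion property of $\Pi^\lambda$ is invoked.
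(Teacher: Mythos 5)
Your proof is correct and follows essentially the same route as the paper's: the identical triangle-inequality decomposition through $\mathcal{T}^\pi \qdpfixedpointlambda$, the $\gamma$-contraction of $\mathcal{T}^\pi$ in $\bar{w}_1$, and the one-step projection bound $\bar{w}_1(\Pi^\lambda \eta, \eta) \leq (V_\textsc{max}-V_\textsc{min})/(2m)$ followed by rearrangement. The only difference is that you supply full details for the two ingredients the paper states without proof --- the midpoint/telescoping argument for the $1/(2m)$ projection error and the verification that $\qdpfixedpointlambda$ is supported on $[V_\textsc{min}, V_\textsc{max}]$ --- both of which are correct.
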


\begin{remark}
    This bound also provides motivation for the specific values of $(\tau_i)_{i=1}^m$ that QTD uses. A similar convergence analysis and fixed-point analysis can be straightforwardly carried out for a version of the QTD algorithm with other values for $(\tau_i)_{i=1}^m$; by tracing through the proof of Proposition~\ref{prop:fixed-point-quality-1}, it can be seen that the bound is proportional to $\max(\tau_1, \max((\tau_{i+1} - \tau_i)/2 : i=2,\ldots,m-1), 1-\tau_m)$, which is minimised precisely by the choice of $(\tau_i)_{i=1}^m$ used by QTD.
\end{remark}

\subsection{Instance-Dependent Bounds}

The result above implicitly assumes the worst-case projection error is incurred at all states with each application of the Bellman operator. In environments where this is not the case, the fixed point can be shown to be of considerably better quality. We describe an example of an instance-dependent quality bound here.

\begin{proposition}\label{prop:improved-fp-bound}
Consider an MDP such that for any trajectory, after $k$ time steps all encountered transition distributions and reward distributions are Dirac deltas. If all reward distributions in the MDP are supported on $[R_\textsc{min}, R_\textsc{max}]$, then for any $\lambda \in [0,1]^{\mathcal{X} \times [m]}$, we have
    \begin{align*}
        \bar{w}_1(\qdpfixedpointlambda, \eta^\pi) \leq \frac{(V_\textsc{max} - V_{\textsc{min}})(1-\gamma^k)}{2m(1-\gamma)} \, .
    \end{align*}
\end{proposition}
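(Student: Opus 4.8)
The plan is to prove a sharper \emph{per-state} bound by induction on the number of steps required for the dynamics to become deterministic, and then maximise over states. For each state $x$, let $\kappa(x)$ be the smallest integer $j \ge 0$ such that every trajectory started at $x$ has deterministic rewards and transitions at all times $t \ge j$; the hypothesis guarantees $\kappa(x) \le k$ for every $x \in \mathcal{X}$, and whenever $\kappa(x) \ge 1$ any successor $X'$ of $x$ satisfies $\kappa(X') \le \kappa(x) - 1$ (a continuation of a trajectory from $x$ that is deterministic after $\kappa(x)$ steps is deterministic after $\kappa(x)-1$ steps when viewed from $X'$). Writing $\hat{\eta} = \qdpfixedpointlambda$, I will establish by induction on $\kappa(x)$ that
\begin{align*}
    w_1(\hat{\eta}(x), \eta^\pi(x)) \le \frac{(V_\textsc{max} - V_\textsc{min})(1 - \gamma^{\kappa(x)})}{2m(1-\gamma)} \, .
\end{align*}
Since $\bar{w}_1(\hat{\eta}, \eta^\pi) = \max_x w_1(\hat{\eta}(x),\eta^\pi(x))$ and $1 - \gamma^{\kappa(x)} \le 1 - \gamma^k$, this yields the claim.

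The base case $\kappa(x)=0$ asserts that QDP is \emph{exact} on the set $D = \{ x : \kappa(x) = 0\}$ of states whose entire future is deterministic. I would argue this as follows. The set $D$ is closed under the (deterministic) transition dynamics, so for $y \in D$ the distribution $\mathcal{T}^\pi \hat{\eta}(y)$ depends only on $(\hat{\eta}(z))_{z \in D}$; hence $\Pi^\lambda \mathcal{T}^\pi$ restricts to a self-map of $\mathscr{F}_{\text{Q},m}^D$, which inherits the $\bar{w}_\infty$-contraction property of Proposition~\ref{prop:contraction} and therefore has a unique fixed point. The function $(\delta_{V^\pi(y)})_{y \in D}$ is a fixed point of this restricted map: writing $r(y)$ and $y'$ for the deterministic reward and successor, $\mathcal{T}^\pi$ sends it to $(\delta_{r(y) + \gamma V^\pi(y')})_{y \in D} = (\delta_{V^\pi(y)})_{y \in D}$ by the deterministic Bellman equation, and $\Pi^\lambda$ fixes Dirac masses. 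By uniqueness, $\hat{\eta}(y) = \delta_{V^\pi(y)} = \eta^\pi(y)$ for all $y \in D$, so $w_1(\hat{\eta}(y),\eta^\pi(y)) = 0$.

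For the inductive step with $\kappa(x)=j\ge 1$, I would use $\hat{\eta}(x) = \Pi^\lambda \mathcal{T}^\pi \hat{\eta}(x)$ and $\eta^\pi(x) = \mathcal{T}^\pi \eta^\pi(x)$ with the triangle inequality to split
\begin{align*}
    w_1(\hat{\eta}(x),\eta^\pi(x)) \le w_1(\Pi^\lambda \mathcal{T}^\pi \hat{\eta}(x),\, \mathcal{T}^\pi \hat{\eta}(x)) + w_1(\mathcal{T}^\pi \hat{\eta}(x),\, \mathcal{T}^\pi \eta^\pi(x)) \, .
\end{align*}
The first term is the single-state quantile projection error; since $\hat{\eta}$, and hence $\mathcal{T}^\pi \hat{\eta}$, is supported in $[V_\textsc{min},V_\textsc{max}]$, it is at most $\frac{V_\textsc{max}-V_\textsc{min}}{2m}$, exactly as in the proof of Proposition~\ref{prop:fixed-point-quality-1}. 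For the second term I would couple the two Bellman targets through the shared transition $(R,X')$: conditioning on $(R,X')$ reduces the comparison to $\gamma\, w_1(\hat{\eta}(X'),\eta^\pi(X'))$ by translation-invariance and scaling of $w_1$, and joint convexity of $w_1$ under mixtures then gives $\gamma\,\mathbb{E}_{X'}[w_1(\hat{\eta}(X'),\eta^\pi(X'))] \le \gamma \max_{X'} w_1(\hat{\eta}(X'),\eta^\pi(X'))$, the maximum over successors of $x$. Each successor has $\kappa(X') \le j-1$, so the induction hypothesis bounds this by $\gamma \cdot \frac{(V_\textsc{max}-V_\textsc{min})(1-\gamma^{j-1})}{2m(1-\gamma)}$. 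Factoring out $\frac{V_\textsc{max}-V_\textsc{min}}{2m}$ and using $1 + \frac{\gamma - \gamma^{j}}{1-\gamma} = \frac{1-\gamma^{j}}{1-\gamma}$ closes the induction.

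The arithmetic of the inductive step and the projection-error estimate are routine and already implicit in Proposition~\ref{prop:fixed-point-quality-1}. The main obstacle — and the genuinely new ingredient relative to the worst-case bound — is the base case: making precise that QDP incurs \emph{zero} error on the deterministic part of the MDP. This rests on the closedness of $D$ under the dynamics (so that $\Pi^\lambda \mathcal{T}^\pi$ restricts to $D$) and on uniqueness of the restricted fixed point, and it is exactly this exactness that truncates the geometric series of projection errors from its full value $\frac{1}{1-\gamma}$ down to $\frac{1-\gamma^k}{1-\gamma}$. A secondary point requiring care is the definition of $\kappa$ and the verification that $\kappa(X') \le \kappa(x)-1$, which is what makes the induction well-founded.
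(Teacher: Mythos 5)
Your proof is correct, but it takes a genuinely different route from the paper's. The paper argues globally and forward-in-iteration: it first notes that under the determinism hypothesis the fixed point is reached after exactly $k$ applications of the projected operator to the true return-distribution function, $\qdpfixedpointlambda = (\Pi^\lambda \mathcal{T}^\pi)^k \eta^\pi$ (the projection acts as the identity on the Dirac return distributions of the eventually-deterministic states, so the iterates stabilise after $k$ steps), and then chains the one-step recursion $\bar{w}_1((\Pi^\lambda\mathcal{T}^\pi)^l\eta^\pi, \eta^\pi) \le \gamma\,\bar{w}_1((\Pi^\lambda\mathcal{T}^\pi)^{l-1}\eta^\pi,\eta^\pi) + \tfrac{V_\textsc{max}-V_\textsc{min}}{2m}$ from the proof of Proposition~\ref{prop:fixed-point-quality-1} for $k$ steps, starting from error zero. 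You instead work entirely at the fixed point and unroll the error backwards through the state space, stratified by the time-to-determinism $\kappa(x)$: your base case---exactness of QDP on the deterministic closed set $D$, obtained from uniqueness of the fixed point of the restricted contraction---plays the role that the identity $\qdpfixedpointlambda = (\Pi^\lambda\mathcal{T}^\pi)^k\eta^\pi$ plays in the paper, and your inductive step is the per-state analogue of the paper's recursion. Your route buys a sharper per-state bound with exponent $\kappa(x) \le k$ in place of $k$ uniformly, and makes explicit exactly where the approximation error vanishes; its cost is the extra lemma about the restricted operator on $D$ and the verification that $\kappa(X') \le \kappa(x)-1$, both of which you handle correctly. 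Both arguments ultimately rest on the same two facts: that $\qdpfixedpointlambda$ is supported in $[V_\textsc{min}, V_\textsc{max}]$, so each projection contributes at most $(V_\textsc{max}-V_\textsc{min})/(2m)$ in $w_1$, and the $\gamma$-contractivity of $\mathcal{T}^\pi$ in $w_1$.
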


\begin{remark}
    One particular upshot of this bound for practitioners is that for agents in near-deterministic environments using near-deterministic policies, it may be possible to use $m=o((1-\gamma)^{-1})$ quantiles and still obtain accurate approximations to the return-distribution function via QTD and/or QDP. It is interesting to contrast this result for quantile-based distributional reinforcement learning against the case when using categorical distribution representations \citep{bellemare2017distributional,rowland2018analysis,bdr2022}. In this latter case, fixed point error continues to be accumulated even when the environment has solely deterministic transitions and rewards, due to the well-documented phenomenon of the approximate distribution `spreading its mass out' under the Cram\'er projection \citep{bellemare2017distributional,rowland2018analysis,bdr2022}. Our observation here leads to immediate practical advice for practitioners (in environments with mostly deterministic transitions, a quantile representation may be preferred to a categorical representation, leading to less approximation error), and raises a general question that warrants further study: how can we use prior knowledge about the structure of the environment to select a good distribution representation?
\end{remark}

We conclude this section by noting that many variants of Proposition~\ref{prop:improved-fp-bound} are possible; one can for example modify the assumption that rewards are deterministic to an assumption that rewards distributions are supported on a `small' interval, and still obtain a fixed-point bound that improves over the instance-independent bound of Proposition~\ref{prop:fixed-point-quality-1}.
There are a wide variety of such modifications that could be imagined, and we believe this to be an interesting direction for future research and applications.

\subsection{Qualitative Analysis of QDP Fixed Points}

The analysis in the previous section establishes quantitative upper bounds on the quality of the fixed point learnt by QTD, and guarantees that with enough atoms an arbitrarily accurate approximation of the return-distribution function (as measured by $w_1$) can be learnt. We now take a closer look at the way in which approximation errors may manifest in QTD and QDP.

\begin{example}\label{ex:fixed-points}
    Consider the two-state Markov decision process (with a single action) whose transition probabilities are specified by the left-hand side of Figure~\ref{fig:qualitative-example}, such that a deterministic reward of $2$ is obtained in state $x_1$, and $-1$ in state $x_2$; further, let us take a discount factor $\gamma = 0.9$. The centre panel of this figure shows various estimates of the CDF for the return distribution at state $x_1$. The ground truth estimate in black is obtained from Monte Carlo sampling. The CDFs in purple, blue, green, and orange are the points of convergence for QDP with $m=2, 5, 10, 100$, respectively. For $m=100$, a very close fit to the true return distribution is obtained. However, for small $m$ in particular, the distribution is heavily skewed to the right. In the case of $m=2$, half of the probability mass is placed on the greatest possible return in this MDP---namely 20---even though with probability 1 the true return is less than this value. What is the cause of this behaviour from QDP? This question is answered by investigating the dynamic programming update itself in more detail.
    
    In this MDP, the result of the QDP operator applied to the fixed point $\theta$ is to update each particle location with a `backed-up' particle appearing in the distributions $\mathcal{T}^\pi \theta$.
    When such settings arise, tracking which backed-up particles are allocated to which other particles helps us to understand the behaviour of QDP, and the nature of the approximation incurred. We also gain intuition about the situation, since the QDP operator is behaving like a an affine policy evaluation operator on $\mathcal{X} \times [m]$ \emph{locally} around the fixed point.
    We can visualise which particles are assigned to one another by a QDP operator application through \emph{local quantile back-up diagrams};
    the right-hand side of Figure~\ref{fig:qualitative-example} shows the local quantile back-up diagram for particular MDP. We observe that $\theta(x_1, 2)$ backs up from itself, and hence learns a value that corresponds to observing a self-transition at every state, with a reward of 2; under the discount factor of $0.9$, this corresponds to a return of 20. This is the source of the drastic over-estimation of returns in the approximation obtained with $m=2$, and the fact that all other state-quantile pairs implicitly bootstrap from this estimate leads to the over-estimation leaking out into all quantiles estimated in this case. As $m$ increases, we observe from the CDF plot that there is always one particle that learns this maximal return of 20, but that this has less effect on the other quantiles; indeed in the orange curve, we obtain a very good approximation (in $w_1$) to the true return distribution despite this particle with a maximal value of 20 remaining present. We can interpret the increase in $m$ as preventing pathological self-loops/small cycles in the quantile backup diagram from ``leaking out'' and degrading the quality of other quantile estimates; this provides a complementary perspective on the approximation artefacts that occur in QDP/QTD fixed points to the quantitative upper bounds in the previous section.
\end{example}

We expect the local quantile back-up diagram introduced in Example~\ref{ex:fixed-points} to be a useful tool for developing intuition, as well as further analysis, of QDP and QTD. As described in the example itself, being able to define the local back-up diagram depends on the structure of the MDP being such that the QDP operator obtains each new coordinate value from a single backed-up particle location. It is an interesting question as to how the definition of such local back-up diagrams could be generalised to apply in situations where this does not hold, such as with absolutely continuous reward distributions.

\begin{figure}
    \centering
    \null\hfill
    \includegraphics[keepaspectratio,width=.3\textwidth,valign=c]{img/qualitaitive_analysis_mdp.pdf}\hfill
    \includegraphics[keepaspectratio,width=.3\textwidth,valign=c]{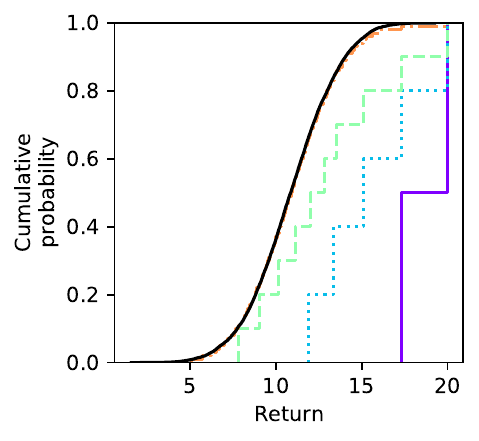}
    \hfill
    \includegraphics[keepaspectratio,width=.3\textwidth,valign=c]{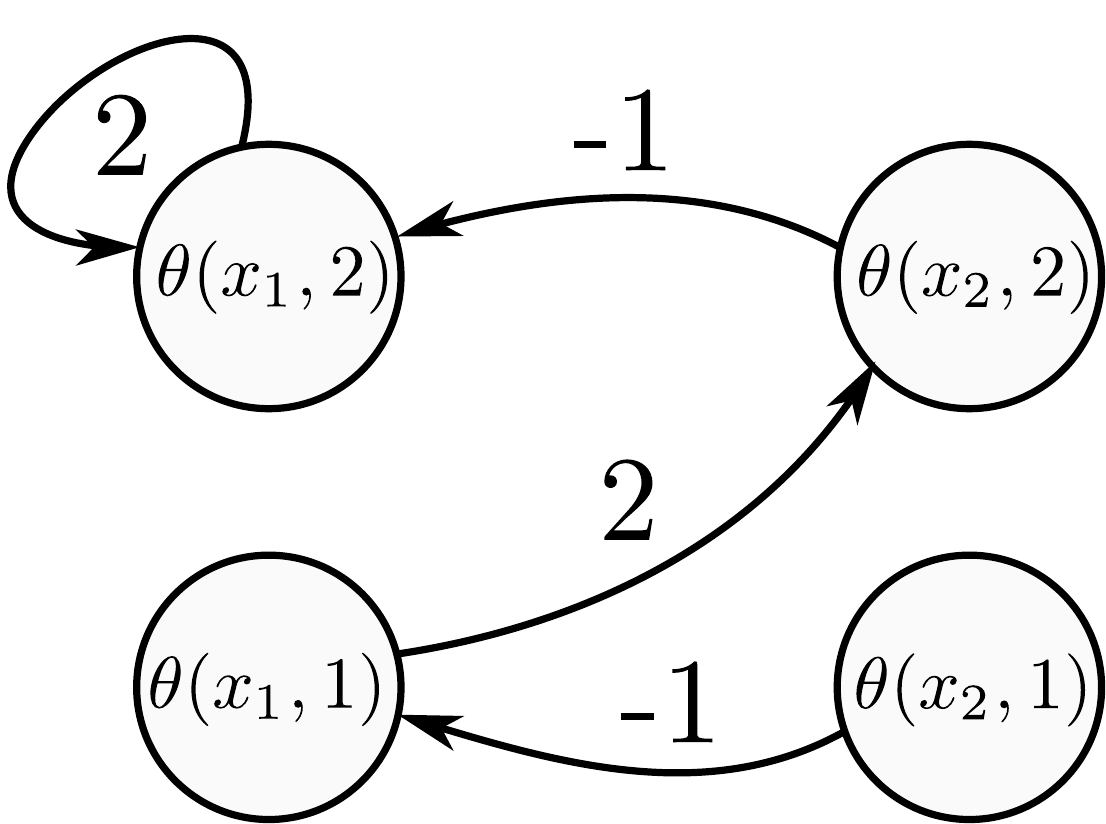}
    \hfill\null
    \caption{
    Left: An example MDP.
    Centre: The fixed point return distribution estimates for state $x_1$ obtained by QDP for $m=2,5,20,100$ (solid purple, dotted blue, dashed green, and dash-dotted orange, respectively) compared to ground truth in solid black.
    Right: The corresponding local quantile backup diagram at the fixed point for $m=2$, illustrating potential approximation artefacts in QDP fixed points.}
    \label{fig:qualitative-example}
\end{figure}

\section{Related Work}
\label{sec:extensions}

\emph{Stochastic approximation theory with differential inclusions.} The ODE method was introduced by \citet{ljung1977analysis} as a means of analysing stochastic approximation algorithms, and was subsequently extended and refined by \citet{kushner1978}; standard references on the subject include \citet{kushner2003stochastic,borkar2009stochastic,benveniste2012adaptive}; see also \citet{meyn2022control} for an overview in the context of reinforcement learning. The framework we follow in this paper is set out by \citet{benaim1999dynamics}, and was extended by \citet{benaim2005stochastic} to allow for differential inclusions. \citet{perkins2013asynchronous} later extended this analysis further to allow for asynchronous algorithms, building on the approach introduced by \citet{borkar1998asynchronous}, and extended, with particular application to reinforcement learning, by \citet{borkar2000ode}.

\emph{Differential inclusion theory.} Differential inclusions have found application across a wide variety of fields, including control theory \citep{wazewski1961systemes}, economics \citep{aubin1991viability} differential game theory \citep{krasovskii1988game}, and mechanics \citep{marques2013differential}. The approach to modelling differential equations with discontinuous right-hand sides via differential inclusions was introduced by \citet{filippov1960differential}. Standard references on the theory of differential inclusions include \citet{aubin1984differential,clarke1998nonsmooth,smirnov2002introduction}; see also \citet{bernardo2008piecewise} on the related field of piecewise-smooth dynamical systems. \citet{joseph2019adaptive} also use tools combining stochastic approximation and differential inclusions from \citet{benaim2005stochastic} to analyse (sub-)gradient descent as a means of estimating quantiles of fixed distributions. Within reinforcement learning and related fields more specifically, differential inclusions have played a key role in the analysis of game-theoretic algorithms based on fictitious play \citep{brown1951iterative,robinson1951iterative}; see \citet{benaim2006stochastic,leslie2006generalised,benaim2013consistency} for examples.
More recently, \citet{gopalan2022approximate} used differential inclusions to analyse TD algorithms for control with linear function approximation.

\emph{Quantile regression.} Quantile regression as a methodology for statistical inference was introduced by \cite{koenker1978regression}. \citet{koenker2005quantile} and \citet{koenker2017handbook} provide detailed surveys of the field. Quantile temporal-difference learning may be viewed as fusing quantile regression with the bootstrapping approach (\emph{learning a guess from a guess}, as \citet{sutton2018reinforcement} express it) that is core to much of the reinforcement learning methodology.

\emph{Quantiles in reinforcement learning.} The approach to distributional reinforcement learning based on quantiles was introduced by \citet{dabney2018distributional}. A variety of modifications and extensions were then considered in the deep reinforcement learning setting \citep{dabney2018implicit,yang2019fully,zhou2020non,luo2021distributional}, as well as further developments on the theoretical side \citep{lheritier2022cramer}. A summary of the approach is presented by \citet{bdr2022}. \citet{gilbert2016quantile} study the problem of optimising quantile criteria in end-state MDPs. \citet{li2022quantile} consider the risk-sensitive control problem of optimising particular quantiles of the return distribution, and derive a dynamic programming algorithm that maintains a value function over state and the ``target quantile-to-go''.

\section{Conclusion}

We have provided the first convergence analysis for QTD, a popular and effective distributional reinforcement learning algorithm. In contrast to the analysis of many classical temporal-difference learning algorithms, this has required the use of tools from the field of differential inclusions and branches of stochastic approximation theory that deal with the associated dynamical systems.
Due to the structure of the QTD algorithm, such as its bounded-magnitude updates, these convergence guarantees hold under weaker conditions than are generally used in the analysis of TD algorithms.
These results establish the soundness of QTD, representing an important step towards understanding its efficacy and practical successes, and we expect the theoretical tools used here to be useful in further analyses of (distributional) reinforcement learning algorithms.

There are several natural directions for further work building on this analysis. One such direction is to establish finite-sample bounds for the convergence of QTD predictions to the set of QDP fixed points. 
This is a central theoretical question for developing our understanding of QTD, and may also shed further light on the recently observed empirical phenomenon in which tabular QTD can outperform TD in stochastic environments as a means of value estimation \citep{rowland2023statistical}. Related to this point, the Lyapunov analysis conducted in this paper provides further intuition for \emph{why} QTD works in general, and we expect this to inform the design of further variants of QTD, for example incorporating multi-step bootstrapping \citep{watkins1989learning}, or Ruppert-Polyak averaging \citep{ruppert1988efficient,polyak1992acceleration}.
Another important direction is to analyse more complex variants of the QTD algorithm, incorporating more aspects of the large-scale systems in which it has found application. Examples include incorporating function approximation, or control variants of the algorithm based on Q-learning. We believe further research into the theory, practice and applications of QTD, at a variety of scales, are important directions for foundational reinforcement learning research.

% Acknowledgements should go at the end, before appendices and references

\acks{%
We thank the anonymous reviewers for helpful suggestions and feedback. We also thank Tor Lattimore for detailed comments on an earlier draft, and David Abel, Bernardo Avila Pires, Diana Borsa, Yash Chandak, Daniel Guo, Clare Lyle, and Shantanu Thakoor for helpful discussions.
Marc G. Bellemare was supported by Canada CIFAR AI Chair funding.
The simulations in this paper were generated using the Python 3 language, and made use of the NumPy \citep{harris2020array}, SciPy \citep{2020SciPy-NMeth}, and Matplotlib \citep{hunter2007matplotlib} libraries.
}

\appendix

\section{Proofs}

In this section, we provide proofs for results which are not proven in the main text.

\subsection{Proof of Proposition~\ref{prop:proj-nonexpansion}}\label{sec:proof-proj-nonexpansion}

Let $\eta, \eta' \in \mathscr{P}(\mathbb{R})$. We have
\begin{align*}
    & \Big|\left((1-\lambda(x, i)) F^{-1}_{\eta(x)}(\tau_i) + \lambda(x, i) \bar{F}^{-1}_{\eta(x)}(\tau_i)\right)
    \!-\!
    \left( (1-\lambda(x, i)) F^{-1}_{\eta'(x)}(\tau_i) + \lambda(x, i) \bar{F}^{-1}_{\eta'(x)}(\tau_i)\right)
    \Big| \\
    & \qquad\qquad \leq  (1-\lambda(x, i)) |F^{-1}_{\eta(x)}(\tau_i) - F^{-1}_{\eta'(x)}(\tau_i) | +  \lambda(x, i) | \bar{F}^{-1}_{\eta(x)}(\tau_i) - \bar{F}^{-1}_{\eta'(x)}(\tau_i)| \, .
\end{align*}
Clearly 
\begin{align*}
    |F^{-1}_{\eta(x)}(\tau_i) - F^{-1}_{\eta'(x)}(\tau_i) | \leq \bar{w}_\infty(\eta, \eta') \, .
\end{align*}
Additionally, we have
\begin{align*}
    |\bar{F}^{-1}_{\eta(x)}(\tau_i) - \bar{F}^{-1}_{\eta'(x)}(\tau_i) | & = |\lim_{s \downarrow \tau_i} F^{-1}_{\eta(x)}(s) - \lim_{s \downarrow \tau_i} F^{-1}_{\eta'(x)}(s) |\\
    &= \lim_{s \downarrow \tau_i} | F^{-1}_{\eta(x)}(s) -  F^{-1}_{\eta'(x)}(s) | \\
    & \leq \bar{w}_\infty(\eta, \eta') \, .
\end{align*}
Putting this together, we obtain
\begin{align*}
    & \Big|\left((1-\lambda(x, i)) F^{-1}_{\eta(x)}(\tau_i) + \lambda(x, i) \bar{F}^{-1}_{\eta(x)}(\tau_i)\right) \!-\!
    \left((1-\lambda(x, i)) F^{-1}_{\eta'(x)}(\tau_i) + \lambda(x, i) \bar{F}^{-1}_{\eta'(x)}(\tau_i)\right) \Big| \\
   & \qquad \leq \bar{w}_\infty(\eta, \eta')  \, ,
\end{align*}
as required.

\subsection{Proof of Theorem~\ref{thm:benaim-result}}\label{sec:proof-benaim-result}

Theorem~\ref{thm:benaim-result} is essentially a special case of the general results presented in \citet{benaim2005stochastic}, in the form needed for the proof of convergence of QTD. To explain how to obtain Theorem~\ref{thm:benaim-result} from the results of \citet{benaim2005stochastic}, first, we associate a continuous-time path $(\bar{\theta}(t))_{t \geq 0}$ with the iterates $(\theta_k)_{k=0}^\infty$ by linear interpolation, in particular defining $\bar{\theta}(\sum_{k=0}^s \alpha_k) = \theta_s$, and linearly interpolating in between. The continuous-time path $(\bar{\theta}(t))_{t \geq 0}$ satisfies the definition of a \emph{perturbed solution} of the Marchaud differential inclusion with probability 1, as defined by Definition II of \citet{benaim2005stochastic}, since: (i) $\bar{\theta}$ is piecewise linear, hence absolutely continuous; (ii) the difference $\| \theta_{k+1} - \theta_k \|_\infty$ is $O(\alpha_k)$, due to the growth condition on $H$ and since $\bar{\theta}$ is bounded by assumption; and (iii) the lim-sup condition holds with probability 1 thanks to the boundedness of the martingale difference sequence $(w_k)_{k=0}^\infty$ and Proposition~1.4 of \citet{benaim2005stochastic}; see also Theorem~5.3.3 of \citet{kushner2003stochastic}.

Next, since we assume $\bar{\theta}$ is bounded, Theorem~4.2 of \citet{benaim2005stochastic} applies so that we deduce that it is an asymptotic pseudotrajectory of the differential inclusion (w.p.1). We then have that $(\bar{\theta}(t))_{t \geq 0}$ is a bounded asymptotic pseudotrajectory (w.p.1), so Theorem~4.3 of \citet{benaim2005stochastic} applies, and we deduce that the set of limit points of $(\bar{\theta}(t))_{t \geq 0}$ is internally chain transitive (w.p.1). But now by Proposition~3.27 of \citet{benaim2005stochastic} applied to the Lyapunov function $L$ and the set $\Lambda$, all internally chain transitive sets are contained within $\Lambda$. Since $(\bar{\theta}(t))_{t \geq 0}$ is bounded, we deduce that it converges to $\Lambda$ (w.p.1). It therefore follows that the discrete sequence $(\theta_k)_{k=0}^\infty$ converges to $\Lambda$ with probability 1, as required.

\subsection{Proof of Proposition~\ref{prop:bounded}}\label{sec:proof-bounded}

Roughly, the intuition of the proof is that the structure of the QTD differential inclusion means that when $\|\theta_k\|_\infty$ is sufficiently large, the coordinates of $\theta_k$ furthest from the origin are moved back towards the origin by the differential inclusion. We then argue that the martingale noise cannot cause divergence, which completes the argument.

\emph{Differential inclusion update direction.} 
To begin with the analysis of the differential inclusion, fix $\delta > 0$ such that $1-\delta > \gamma$, and let $M > 0$ be such that for all $(x, a) \in \mathcal{X} \times \mathcal{A}$, we have
\begin{align*}
    F_{P_{\mathcal{R}}(x, a)}((1-\delta - \gamma)M) > 1 - 1/(4m) \, , \quad F_{P_{\mathcal{R}}(x, a)}(-(1-\delta - \gamma)M) < 1/(4m) \, .
\end{align*}
We then introduce the events 
\begin{align*}
    I^+_k(x, i) & = \{ \| \theta_k \|_\infty > M \, , \ \ \theta_k(x, i) > (1-\delta) \| \theta_k \|_\infty \} \, , \\
    I^-_k(x, i) & = \{ \| \theta_k \|_\infty > M \, , \ \ \theta_k(x, i) < - (1-\delta) \| \theta_k \|_\infty \} \, .
\end{align*}
which, roughly speaking, hold when $\theta_k$ has at least one large coordinate (in absolute value), and $\theta_k(x, i)$ is a positive (respectively, negative) coordinate close to the maximum value. When $I^+_k(x, i)$ holds, we have
\begin{align}
    & \phantom{=} \tau_i - F_{(\mathcal{T}^\pi \theta_k)(x)}(\theta_k(x, i)-)\\
    & = \tau_i - \sum_{x' \in \mathcal{X}} P(x'|x,a)\pi(a|x) \frac{1}{m} \sum_{j=1}^m \lim_{s \uparrow 0} F_{\mathcal{R}(x, a)}(\theta_k(x, i) - \gamma \theta_k(x', j) + s)\nonumber \\
    & \overset{(a)}{\leq} \tau_i - \sum_{x' \in \mathcal{X}} P(x'|x,a)\pi(a|x) \frac{1}{m} \sum_{j=1}^m (1 - 1/(4m)) \nonumber\\
    & \leq (1 - 1/(2m)) - (1 - (1/(4m)) \nonumber \\
    & \leq - 1/(4m) \, , \label{eq:upper-bound}
\end{align}
and hence the differential inclusion moves $\theta_k(x, i)$ towards the origin. Inequality (a) follows since on $I^+_k(x, i)$, we have
\begin{align*}
    \theta_k(x, i) - \gamma \theta_k(x', j) \geq (1-\delta) \|\theta_k\|_\infty - \gamma \|\theta_k\|_\infty \geq (1 - \delta - \gamma) M \, .
\end{align*}
Analogously, we conclude that on $I^-_k(x, i)$, we have
\begin{align*}
    \tau_i - F_{(\mathcal{T}^\pi \theta_k)(x)}(\theta_k(x, i)) \geq 1/(4m) \, ,
\end{align*}
and so the differential inclusion moves $\theta_k(x, i)$ towards the origin in this case too.

\emph{Chaining updates and reasoning about noise.}
To describe the relationship between successive iterates in the sequence $(\theta_k)_{k \geq 0}$, we introduce the notation $\theta_{k+1} = \theta_k + \alpha_k g_k + \alpha_k w_k$, where $w_j$ is martingale difference noise, and hence $g_j$ is an expected update direction, from the right-hand side of the QTD differential inclusion. By boundedness of the update noise and the step size assumptions, we have from Proposition~1.4 of \citet{benaim2005stochastic} (see also Theorem~5.3.3 of \citet{kushner2003stochastic}) that
\begin{align*}
    \lim_k \sup \Big\{ \big\| \sum_{j=k}^{k+l} \alpha_j w_j \big\|_\infty : l \geq 0 \text{ and } \sum_{j=k}^{k+l} \alpha_j \leq 8m + 1 \Big\} = 0 \, ,
\end{align*}
almost surely.
In particular, letting $\varepsilon \in (0, 1)$, there almost-surely exists $K$ (which depends on the realisation of the martingale noise) such that
\begin{align*}
    \sup \Big\{ \big\| \sum_{j=k}^{k+l} \alpha_j w_j \big\|_\infty : l \geq 0 \text{ and } \sum_{j=k}^{k+l} \alpha_j \leq 8m + 1 \Big\} < \varepsilon
\end{align*}
for all $k \geq K$, and further such that $\alpha_k < 1$ for all $k \geq K$.

Let us additionally take $\bar{M} \geq M$ such that $\delta \bar{M} \geq 4(8m + 1)$. Suppose that for some $k \geq K$, $\|\theta_k\|_\infty \geq \bar{M} + (8m + 1)$. Let $l$ be minimal such that $\sum_{j=k}^{k+l} \alpha_j > 8m$. Then we have
$\|\theta_{k+j} - \theta_k\|_\infty \leq 8m + 1$ for all $0 \leq j \leq l$, and so $\|\theta_{k+j}\|_\infty \geq \bar{M}$ for all $0 \leq j \leq l$. Further, if $\theta_k(x, i)$ satisfies $\theta_k(x, i) > \|\theta_k\|_\infty(1-\delta) + 2(8m+1)$, then we have
\begin{align*}
    & \phantom{=}\theta_{k+j}(x, i)\\
    & \geq  
    \theta_k(x, i) - (8m+1)\\
    & \geq 
    (1-\delta) \|\theta_k\|_\infty + (8m+1) \\
    & \geq
    (1-\delta)(\|\theta_{k+j}\|_\infty - (8m + 1)) + (8m+1) \\
    & =
    (1-\delta) \|\theta_{k+j}\|_\infty \, ,
\end{align*}
so $I^+_{k+j}(x, i)$ holds for all $0 \leq j \leq l$, and hence
\begin{align}\label{eq:reduce-theta}
    |\theta_{k+l+1}(x, i)| \leq \|\theta_k\|_\infty - \sum_{j=k}^{k+l} \alpha_j \times 1/(4m) + \|\sum_{j=k}^{k+l} \alpha_j w_j \|_\infty < \|\theta_k\|_\infty - 2 + \varepsilon < \|\theta_k\| - 1 \, .
\end{align}
Similarly, if $\theta_k(x, i) < -\|\theta_k\|_\infty(1-\delta) - 2(8m+1)$, $I^-_{k+j}(x, i)$ holds for all $0 \leq j \leq l$, and we reach the same conclusion as in Equation~\eqref{eq:reduce-theta}. Finally, if $|\theta_k(x, i)| \leq \|\theta_k\|_\infty(1-\delta) + 2(8m+1)$, then since $\delta \|\theta_k\|_\infty > \delta \bar{M}$, we have $|\theta_k(x, i)| \leq \|\theta_k\|_\infty - 2(8m+1)$, and hence $|\theta_{k+l+1}(x, i)| \leq \|\theta_k\|_\infty - (8m+1)$. Putting these components together, we have
\begin{align*}
    \|\theta_{k+l+1} \|_\infty < \|\theta_k\|_\infty + 1 \, , \text{ and } \max_{0 \leq j \leq l} \|\theta_{k+j}\|_\infty \leq \|\theta_k\|_\infty + (8m+1) \, ,
\end{align*}
as required to establish boundedness.

\subsection{Proof of Proposition~\ref{prop:general-lyapunov}}\label{sec:proof-general-lyapunov}

We first state and prove a useful lemma that allows us to compare QDP fixed points for different values of $\lambda$. Throughout this section, we will adopt the shorthand $\theta^\lambda$ for $\qdpfixedpointlambdaparams$.

\begin{lemma}\label{lemma:fixed-point-sensitivity}
    Let $\lambda, \lambda' \in [0,1]^{\mathcal{X} \times [m]}$. Then we have
    \begin{align*}
        \|\theta^{\lambda} - \theta^{\lambda'} \|_\infty \leq  C\| \lambda - \lambda' \|_\infty \, ,
    \end{align*}
    where $C$ is a constant depending only on the reward distributions of the MDP and $\gamma$.
\end{lemma}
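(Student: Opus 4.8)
The plan is to use the standard perturbation bound for the fixed point of a family of contractions with a common modulus, and then to reduce the whole problem to controlling how much the two projections $\Pi^\lambda$ and $\Pi^{\lambda'}$ can differ when applied to a single common distribution. First I would observe that, because $\hat{\eta}^\pi_\lambda$ and $\hat{\eta}^\pi_{\lambda'}$ are fixed points of quantile-based operators, their supports $(\theta^\lambda(x,i))_{i}$ and $(\theta^{\lambda'}(x,i))_{i}$ are automatically sorted in $i$; consequently $\|\theta^\lambda - \theta^{\lambda'}\|_\infty = \bar{w}_\infty(\hat{\eta}^\pi_\lambda, \hat{\eta}^\pi_{\lambda'})$, which lets me carry out the entire argument in $\bar{w}_\infty$. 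Writing $T_\lambda = \Pi^\lambda \mathcal{T}^\pi$, the fixed-point identity, the triangle inequality, and the $\gamma$-contractivity of $T_\lambda$ from Proposition~\ref{prop:contraction} give
\[
\bar{w}_\infty(\hat{\eta}^\pi_\lambda, \hat{\eta}^\pi_{\lambda'}) \leq \frac{1}{1-\gamma}\, \bar{w}_\infty\big(\Pi^\lambda \mathcal{T}^\pi \hat{\eta}^\pi_{\lambda'},\ \Pi^{\lambda'} \mathcal{T}^\pi \hat{\eta}^\pi_{\lambda'}\big) \, .
\]

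Next I would evaluate the right-hand side explicitly. Applying two different interpolation parameters to the \emph{same} distribution $\nu_x := (\mathcal{T}^\pi \hat{\eta}^\pi_{\lambda'})(x)$, the definition of $\Pi^\lambda$ in Equation~\eqref{eq:proj-definition} shows that the $i$-th support points differ by exactly $|\lambda(x,i) - \lambda'(x,i)|\,(\bar{F}^{-1}_{\nu_x}(\tau_i) - F^{-1}_{\nu_x}(\tau_i))$; since these support points remain sorted in $i$ for any choice of interpolation, this yields
\[
\bar{w}_\infty\big(\Pi^\lambda \mathcal{T}^\pi \hat{\eta}^\pi_{\lambda'},\ \Pi^{\lambda'} \mathcal{T}^\pi \hat{\eta}^\pi_{\lambda'}\big) \leq \|\lambda - \lambda'\|_\infty \, \max_{x, i} \big(\bar{F}^{-1}_{\nu_x}(\tau_i) - F^{-1}_{\nu_x}(\tau_i)\big) \, .
\]
The quantity $\bar{F}^{-1}_{\nu_x}(\tau_i) - F^{-1}_{\nu_x}(\tau_i)$ is the length of the flat piece of the CDF of $\nu_x$ at height $\tau_i$, so the only remaining task is to bound this gap by a constant depending only on the reward distributions and $\gamma$, uniformly over $x$, $i$, and $\lambda'$.

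This uniform bound is where the real work lies. Since $\tau_1, \tau_m \in (0,1)$, the flat piece at any interior level $\tau_i$ lies between $F^{-1}_{\nu_x}(\tau_1)$ and $\bar{F}^{-1}_{\nu_x}(\tau_m)$, so it suffices to bound these two interior quantiles of $\nu_x = R + \gamma G(X')$. To do so I would first establish that the fixed points are uniformly bounded: setting $q^+ = \sup_{\lambda, x, i} \theta^\lambda(x,i)$, the fixed-point identity together with a stochastic-dominance comparison (replacing $G(X') \leq q^+$ by the constant $q^+$) gives $q^+ \leq r^+ + \gamma q^+$, where $r^+ = \max_{x,a} \bar{F}^{-1}_{P_\mathcal{R}(x,a)}(\tau_m)$ is finite because the reward distributions have finite mean and there are finitely many state–action pairs; hence $q^+ \leq r^+/(1-\gamma) < \infty$, and symmetrically $q^- := \inf_{\lambda, x, i} \theta^\lambda(x,i) \geq r^-/(1-\gamma) > -\infty$. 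Feeding these bounds back through the same comparison bounds the interior quantiles of each $\nu_x$ and gives $\bar{F}^{-1}_{\nu_x}(\tau_m) - F^{-1}_{\nu_x}(\tau_1) \leq (r^+ + \gamma q^+) - (r^- + \gamma q^-) =: G_0$, a finite constant depending only on the reward distributions and $\gamma$. Combining the three displays yields the claim with $C = G_0/(1-\gamma)$. I expect the uniform boundedness of the fixed points — proving that $q^\pm$ are finite via the self-referential stochastic-dominance inequality — to be the main obstacle, since it is precisely what prevents the gap from blowing up in the tails when the reward distributions are unbounded.
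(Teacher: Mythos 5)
Your proposal is correct and follows essentially the same route as the paper: a fixed-point perturbation bound via the $\gamma$-contraction reduces everything to the difference of the two projections applied to a single backed-up distribution, which is exactly $|\lambda(x,i)-\lambda'(x,i)|$ times the flat-region length $\bar{F}^{-1}_{\nu_x}(\tau_i)-F^{-1}_{\nu_x}(\tau_i)$, and that length is bounded uniformly by first bounding $\|\theta^\lambda\|_\infty$ through a self-referential quantile/stochastic-dominance inequality (the paper's mixture-quantile lemma). One small presentational fix: define $q^+$ for a fixed $\lambda$ (each $\theta^\lambda$ is finite by Banach's theorem), derive the uniform bound $r^+/(1-\gamma)$, and only then take the supremum over $\lambda$, so that the inequality $q^+ \leq r^+ + \gamma q^+$ is not applied to a quantity not yet known to be finite.
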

\begin{proof}
    By the triangle inequality, we have
    \begin{align*}
        \| \theta^\lambda - \theta^{\lambda'} \|_\infty & \leq \| \theta^\lambda - \Pi^{\lambda'} \mathcal{T}^\pi \theta^{\lambda} \|_\infty + \| \Pi^{\lambda'} \mathcal{T}^\pi \theta^{\lambda} - \theta^{\lambda'} \|_\infty  \\
        & = \| \Pi^{\lambda} \mathcal{T}^\pi \theta^\lambda - \Pi^{\lambda'} \mathcal{T}^\pi \theta^{\lambda} \|_\infty + \| \Pi^{\lambda'} \mathcal{T}^\pi \theta^{\lambda} - \Pi^{\lambda'} \mathcal{T}^\pi \theta^{\lambda'} \|_\infty  \\
        & \leq \| (\Pi^{\lambda} - \Pi^{\lambda'}) \mathcal{T}^\pi \theta^\lambda \|_\infty + \gamma \| \theta^{\lambda} - \theta^{\lambda'} \|_\infty \\
        \implies \| \theta^\lambda - \theta^{\lambda'} \|_\infty  & \leq \frac{1}{1-\gamma} \| (\Pi^{\lambda} - \Pi^{\lambda'}) \mathcal{T}^\pi \theta^\lambda \|_\infty \, .
    \end{align*}
    Now we aim to bound $\|\theta^\lambda\|_\infty$, and hence the term on the right-hand side above. Note that in general for a mixture distribution $\nu = \sum_{i=1}^n p_i \nu_i$, we have $F^{-1}_\nu(\tau) \geq \min \{ F^{-1}_{\nu_i}(\tau) : i=1,\ldots,n\}$, since
    \begin{align*}
        \mathbb{P}_{Z \sim \nu}(Z \leq  \min \{ F^{-1}_{\nu_i}(\tau_n) : i=1,\ldots,n\} ) & = \sum_{i=1}^n p_i \mathbb{P}_{Z_i \sim \nu_i}(Z_i \leq \min \{ F^{-1}_{\nu_i}(\tau) : i=1,\ldots,n\}) \\
        & \leq \sum_{i=1}^n p_i \mathbb{P}_{Z_i \sim \nu_i}(Z_i \leq F^{-1}_{\nu_i}(\tau)) \\
        & \leq \tau \, .
    \end{align*}
    Thus, it follows that the $\nicefrac{1}{2m}$ quantile of $\mathcal{T}^\pi \theta^\lambda(x)$ is at least as great as
    \begin{align*}
        \min_{x} F_{\mathcal{R}^\pi(x)}^{-1}(\nicefrac{1}{2m}) - \gamma \|\theta^\lambda\|_\infty \, .
    \end{align*}
    By analogous reasoning, we obtain that $\bar{F}^{-1}_{(\mathcal{T}^\pi \theta^\lambda)(x)}(\nicefrac{2m-1}{2m})$ is no greater than
    \begin{align*}
        \max_{x} \bar{F}_{\mathcal{R}^\pi(x)}^{-1}(\nicefrac{2m-1}{2m}) + \gamma \|\theta^\lambda\|_\infty \, .
    \end{align*}
    From these facts, it follows that
    \begin{align*}
        \|\theta^\lambda\|_\infty \leq \frac{1}{1-\gamma} \max\left( |\min_{x} F_{\mathcal{R}^\pi(x)}^{-1}(\nicefrac{1}{2m})|, |\max_{x} \bar{F}_{\mathcal{R}^\pi(x)}^{-1}(\nicefrac{2m-1}{2m}) | \right) \, ,
    \end{align*}
    and hence
    \begin{align*}
        \| (\Pi^{\lambda} - \Pi^{\lambda'}) \mathcal{T}^\pi \theta^\lambda \|_\infty \leq C \| \lambda - \lambda' \|_\infty  \, ,
    \end{align*}
    as required for the statement of the result.
\end{proof}

We now turn to the proof of Proposition~\ref{prop:general-lyapunov}. First, we observe that the infimum over $\lambda$ in Equation~\eqref{eq:non-unique-lyapunov} is attained, since Lemma~\ref{lemma:fixed-point-sensitivity} establishes that $\lambda \mapsto \theta^{\lambda}$ is continuous (in fact Lipschitz), and $[0,1]^{\mathcal{X} \times [m]}$ is compact. We therefore have that $L$ is continuous, non-negative, and takes on the value $0$ only on the set of fixed points $\{ \theta^\lambda : \lambda \in [0,1]^{\mathcal{X} \times [m]}\}$.

For the decreasing property, let $(\vartheta_t)_{t \geq 0}$ be a solution to the differential inclusion in Equation~\eqref{eq:qtd-di}, and as in Definition~\ref{def:di-soln}, let $g: [0, \infty) \rightarrow \mathbb{R}^{\mathcal{X} \times [m]}$ satisfy
\begin{align}\label{eq:integral2}
    \vartheta_t = \int_0^t g_s\; \mathrm{d}s \, ,
\end{align}
with $g_t(x, i) \in H^\pi_{x,i}(\vartheta_t)$ for all $(x, i)$, and for almost all $t \geq 0$, where we have introduced the notation
\begin{align*}
     H^\pi_{x, i}(\theta) = [\tau_i - F_{(\mathcal{T}^\pi \theta)(x)}(\theta(x, i)), \tau_i - F_{(\mathcal{T}^\pi \theta)(x)}(\theta(x, i)-)] \, .
\end{align*}
As in the proof of Proposition~\ref{prop:lyapunov}, we will show that $L(\vartheta_t)$ is locally decreasing outside of the fixed point set, which is enough for the global decreasing property. Further, by continuity of $L(\vartheta_t)$, it is enough to show this property for almost all $t \geq 0$. We will therefore consider a value of $t \geq 0$ at which the above inclusion for $g_t$ holds.

Let $\overline{\lambda}$ attain the minimum in the definition of $L(\vartheta_t)$. Write $\theta^{\overline{\lambda}}$ for the corresponding fixed point for conciseness, and let $(x, i)$ be a $\overline{\lambda}$-\emph{argmax index with respect to $\vartheta_t$}; a state-particle pair achieving the maximum in the definition of the norm $\| \vartheta_t - \theta^{\overline{\lambda}}\|_\infty$.
First, we consider the cases where $H^\pi_{x, i}(\vartheta_t)$ is \emph{not} a singleton.
Now, if $0 \in H^\pi_{x, i}(\vartheta_t)$, then we have $(\Pi^\lambda \mathcal{T}^\pi \vartheta_t)(x, i) = \vartheta_t(x, i)$, and with the same logic as above, we have $\vartheta_t = \theta^{\overline{\lambda}}$, and hence $\vartheta_t$ is in the fixed point set, and $L(\vartheta_t)$ is constant. If $0 \not\in H^\pi_{x, i}(\vartheta_t)$, then as in the proof of Proposition~\ref{prop:lyapunov}, it can be shown that any element of $H^\pi_{x, i}(\vartheta_t)$ has the same sign as
\begin{align}\label{eq:same-sign}
    (\Pi^\lambda \mathcal{T}^\pi \vartheta_t)(x, i) - \vartheta_t(x, i) \, .
\end{align}
In the case of Proposition~\ref{prop:lyapunov}, continuity of the derivative then allowed us to deduce that $|\vartheta_t(x, i) - \theta^{\overline{\lambda}}(x, i)|$ is locally decreasing. Here, we require a related concept of continuity for the set-valued map $\theta \mapsto H^\pi_{x, i}(\theta)$, namely that it is upper semicontinuous (see, for example, \citeauthor{smirnov2002introduction}, \citeyear{smirnov2002introduction}); for a given $\theta \in \mathbb{R}^{\mathcal{X} \times [m]}$ and any given $\varepsilon > 0$, there exists $\delta > 0$ such that if $\|\theta' - \theta\|_\infty < \delta$, then $H^\pi_{x, i}(\theta') \subseteq \{ h + v : h \in H^\pi_{x, i}(\theta) \, , |v| < \varepsilon \}$. From this, it follows that any element of $H^\pi_{x, i}(\vartheta_{t+s})$, for sufficiently small positive $s$, has the same sign as the expression in Equation~\eqref{eq:same-sign}, and so from Equation~\eqref{eq:integral2}, we have that $|\vartheta_t(x, i) - \theta^{\overline{\lambda}}(x, i)|$ is locally decreasing, as required.

Now, when $H^\pi_{x, i}(\vartheta_t)$ \emph{is} a singleton, if it is non-zero, then by the same argument as in the proof of Proposition~\ref{prop:lyapunov}, the corresponding element has the same sign as
the expression in Equation~\eqref{eq:same-sign}, and so as above, we conclude that $|\vartheta_t(x, i) - \theta^{\overline{\lambda}}(x, i)|$ is locally decreasing.

Finally, the case where there exists an argmax index $(x, i)$ with $H^\pi_{x, i}(\vartheta_t) = \{0 \}$ requires more care, and we will need to reason about the effects of perturbing $\lambda$ to show that the Lyapunov function is decreasing. For some intuition as to what the problem is, if $H^\pi_{x, i}(\vartheta_{t+s}) = \{0 \}$ for small positive $s$, then the coordinate $\vartheta_{t+s}(x, i)$ is static, as it lies on the flat region of the CDF $F_{(\mathcal{T}^\pi \vartheta_{t + s})(x)}$ at level $\tau_i$, and so the distance $|\vartheta_{t+s}(x, i) - \theta^{\overline{\lambda}}(x, i)|$ is not decreasing. We explain how to deal with this case below.

\subsubsection{Perturbative Argument}

We introduce the notation $\indexset_0 \subseteq \mathcal{X} \times [m]$ for the set of $\overline{\lambda}$-argmax indices with respect to $\vartheta_t$. Assuming that $\| \vartheta_t - \theta^{\overline{\lambda}}\|_\infty$ is not locally decreasing, it must be locally constant (it cannot increase, by the arguments above). Now consider $s>0$ sufficiently small so that (i) no coordinates not in $\indexset_0$ can be a $\overline{\lambda}$-argmax index with respect to $\vartheta_{t+s}$, so that $\indexset$, the set of $\overline{\lambda}$-argmax indices with respect to $\vartheta_{t+s}$, satisfies $\indexset \subseteq \indexset_0$, (ii) all indices  $(x, i) \in \indexset$ satisfy $H^\pi_{x, i}(\vartheta_{t + u}) = \{ 0 \}$ for all $u \in [0,2s]$.

We will now demonstrate the existence of a parameter $\lambda' \in [0,1]^{\mathcal{X} \times [m]}$ such that $\|\vartheta_{t+s} - \theta^{\lambda'} \|_\infty < \|\vartheta_t - \theta^{\overline{\lambda}} \|_\infty$, which establishes the locally decreasing property of the Lyapnuov function, as required. To do so, we introduce a modification of the fixed point map $\lambda \mapsto \theta^\lambda$.
Letting $\mu \in \mathbb{R}^{\indexset}$, and defining $\overline{\lambda}[\mu] \in \mathbb{R}^{\mathcal{X} \times [m]}$ to be the replacement of the $\indexset$ coordinates of $\overline{\lambda}$ with the corresponding coordinates of $\mu$, we consider the map
\begin{align*}
   h_{\overline{\lambda}} : [0,1]^{\indexset} \rightarrow \mathbb{R}^{\indexset} \, , \quad  h(\mu) = \projmap \theta^{\overline{\lambda}[\mu]} \, ,
\end{align*}
where $\projmap : \mathbb{R}^{\mathcal{X} \times [m]} \rightarrow \mathbb{R}^{\indexset}$ extracts the $\indexset$ coordinates. 
At an intuitive level, this map allows us to study the effect of perturbing the $\indexset$ coordinates of $\overline{\lambda}$ on the corresponding coordinates of the fixed point.

\subsubsection{Case 1: $\overline{\lambda}_{\indexset}$ is in the Interior of $[0,1]^{\indexset}$}

We now first consider the case where $\overline{\lambda}_{\indexset}$, the $\indexset$ coordinates of $\overline{\lambda}$, lies in the interior of $[0,1]^{\indexset}$, that is $(0, 1)^{\indexset}$.
By Lemma~\ref{lemma:fixed-point-sensitivity}, $h_{\overline{\lambda}}$ is continuous, since it is the composition of the continuous maps $\mu \mapsto \overline{\lambda}[\mu]$, $\lambda \mapsto \theta^\lambda$, and $\theta \mapsto \projmap \theta$. It is also injective in a neighbourhood of $\overline{\lambda}_{\indexset}$. This can be seen by noting first that the fixed points $\theta^{\overline{\lambda}[\mu]}$ are distinct for distinct values of $\mu$ sufficiently close to $\overline{\lambda}_{\indexset}$; if $\mu \not= \mu'$ are each sufficiently close to $\overline{\lambda}_{\indexset}$, then we have
\begin{align*}
    \Pi^{\overline{\lambda}[\mu']} \mathcal{T}^\pi \theta^{\overline{\lambda}[\mu]} \not= \Pi^{\overline{\lambda}[\mu]} \mathcal{T}^\pi \theta^{\overline{\lambda}[\mu]} = \theta^{\overline{\lambda}[\mu]} \, ,
\end{align*}
where the inequality follows from the fact that since $\theta^{\overline{\lambda}[\mu]}$ is continuous in $\mu$, for $\mu$ sufficiently close to $\indexset$ there is a flat region of $F_{(\mathcal{T}^\pi \theta^{\overline{\lambda}[\mu]})(x)}$ at level $\tau_i$, for any $(x, i) \in \indexset$.
To complete the injectivity argument, we cannot have $\projmap \theta^{\overline{\lambda}[\mu']} = \projmap \theta^{\overline{\lambda}[\mu]}$ if $\theta^{\overline{\lambda}[\mu']} \not= \theta^{\overline{\lambda}[\mu]}$, as the contraction maps $\Pi^{\overline{\lambda}[\mu]} \mathcal{T}^\pi$ and $\Pi^{\overline{\lambda}[\mu]} \mathcal{T}^\pi$ are equal on coordinates not in $\indexset$, and these two maps would therefore have the same fixed point, a contradiction.

We may now appeal to the invariance of domain theorem \citep{brouwer1911beweis} to deduce that since $h_{\overline{\lambda}}$ is a continuous injective map between an open subset of $[0,1]^{\indexset}$ containing $\overline{\lambda}_{\indexset}$ (here we are using the assumption that $\overline{\lambda}_{\indexset}$ lies in the interior of $[0,1]^{\indexset}$) and the Euclidean space $\mathbb{R}^{\indexset}$ of equal dimension, it is an open map on this domain; that is, it maps open sets to open sets. Hence, we can perturb $\theta^{\overline{\lambda}}$ in the $\indexset$ coordinates in any direction we want by locally modifying the $\indexset$ coordinates of $\overline{\lambda}$. In particular, we can move all $\indexset$ coordinates of $\theta^{\overline{\lambda}}$ closer to those of $(\vartheta_{t+s}(x, i) : (x, i) \in \indexset)$. Let $\lambda' \in (0, 1)^{\mathcal{X} \times [m]}$ be such a modification of $\overline{\lambda}$, taken to be close enough to $\overline{\lambda}$ so that all coordinates outside $\indexset$ have sufficiently small perturbations so that they cannot be $\lambda'$-argmax indices with respect to $\vartheta_{t+s}$. We then have that $\|\vartheta_{t+s} - \theta^{\lambda'}\|_\infty < \| \vartheta_{t} - \theta^{\lambda}\|_\infty$, as required.

\subsubsection{Case 2: $\overline{\lambda}_{\indexset}$ is on the Boundary of $[0,1]^{\indexset}$}

In the more general case when $\overline{\lambda}_{\indexset}$ may lie on the boundary of $[0,1]^{\indexset}$, we can apply the same argument to an extension of the function $h_{\overline{\lambda}}$, by increasing its domain from $[0,1]^{\indexset}$ to an open neighbourhood of this domain in $\mathbb{R}^{\indexset}$. We define this extension simply by extending the definition of $\Pi^{\lambda}$ in Equation~\eqref{eq:proj-definition} to allow coordinates of $\lambda$ to lie outside the range $[0,1]$. We lose the non-expansiveness of $\Pi^{\lambda}$ (in $L^\infty$) under this extension, but if $\lambda_{\textrm{min}}, \lambda_{\textrm{max}}$ are the minimum and maximum coordinates of $\lambda$, respectively, it is easy verified (by modifying the proof of Proposition~\ref{prop:proj-nonexpansion}) that $\Pi^{\lambda}$ is $\max(1-\lambda_{\textrm{min}}, \lambda_{\textrm{max}})$-Lipschitz, and so if we extend the function to a domain where $\lambda_{\textrm{max}}, 1-\lambda_{\textrm{min}} \leq \gamma^{-1/2}$, the composition $\Pi^{\lambda} \mathcal{T}^\pi$ is a $\gamma^{1/2}$-contraction in $L^\infty$, and hence has a unique fixed point $\theta^\lambda$.

By the same arguments as above, the extended map $h_{\overline{\lambda}}$ is continuous and injective in a neighbourhood of $\overline{\lambda}_{\indexset}$ on this extended domain, and hence we may again apply the invariance of domain theorem to obtain that $h_{\overline{\lambda}_{\setminus \indexset}}$ is locally surjective around $\overline{\lambda}_{\indexset}$. However, since $\overline{\lambda}_{\indexset}$ lies on the boundary of the original domain, we must additionally check that we can perturb $\overline{\lambda}_{\indexset}$ to obtain $\mu$ in such a way that we obtain the desired perturbation of $\theta^{\overline{\lambda}}$, without the parameters $\mu$ leaving the set $[0,1]^{\indexset}$. To do this, we first rule out $\overline{\lambda}_{\indexset}$ lying on certain parts of the boundary.

\begin{lemma}\label{lem:boundary}
    If $(x, i) \in \indexset$ and $\vartheta_{t+s}(x, i) < \theta^{\overline{\lambda}}(x, i)$, then $\overline{\lambda}(x, i) > 0$. Similarly, if $\vartheta_{t+s}(x, i) > \theta^{\overline{\lambda}}(x, i)$, then $\overline{\lambda}(x, i) < 1$.
\end{lemma}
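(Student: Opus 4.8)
The plan is to prove both claims by contradiction with the defining optimality of $\overline{\lambda}$ in the definition of $L(\vartheta_t)$, after first reducing the statement, which is phrased at time $t+s$, back to time $t$, and then exploiting the monotone dependence of the fixed point $\theta^\lambda$ on $\lambda$. I treat the first claim; the second is entirely symmetric, replacing the left endpoint $F^{-1}$ and $\overline{\lambda}(x,i)=0$ by the right endpoint $\bar{F}^{-1}$ and $\overline{\lambda}(x,i)=1$.

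First I would record that the coordinates in $\indexset$ are frozen on $[t,t+2s]$: by construction $H^\pi_{x,i}(\vartheta_{t+u})=\{0\}$ for all $u\in[0,2s]$ and every $(x,i)\in\indexset$, so the integrand $g$ of Definition~\ref{def:di-soln} vanishes on these coordinates and hence $\vartheta_{t+s}(x,i)=\vartheta_t(x,i)$. Thus the hypothesis $\vartheta_{t+s}(x,i)<\theta^{\overline{\lambda}}(x,i)$ is really a statement about $\vartheta_t$, and since $\indexset\subseteq\indexset_0$ the pair $(x,i)$ is an argmax index at time $t$ too, with $\theta^{\overline{\lambda}}(x,i)-\vartheta_t(x,i)=\|\vartheta_t-\theta^{\overline{\lambda}}\|_\infty=L(\vartheta_t)$. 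Next I would establish that $\lambda\mapsto\theta^\lambda$ is nondecreasing coordinatewise: $\Pi^\lambda\mathcal{T}^\pi$ is monotone in its table argument and, since $(1-\lambda(x,i))F^{-1}_\nu(\tau_i)+\lambda(x,i)\bar{F}^{-1}_\nu(\tau_i)$ is nondecreasing in $\lambda(x,i)$ for fixed $\nu$, also nondecreasing in $\lambda$; iterating the $\gamma$-contraction of Proposition~\ref{prop:contraction} from a common lower start and passing to the limit gives $\lambda\le\lambda'\Rightarrow\theta^\lambda\le\theta^{\lambda'}$. Moreover, as already used in the injectivity step above, for $(x,i)\in\indexset$ the CDF $F_{(\mathcal{T}^\pi\theta^{\overline{\lambda}})(x)}$ has a genuine (strict) flat region at level $\tau_i$, so $\theta^{\overline{\lambda}}(x,i)$ depends strictly monotonically on $\overline{\lambda}(x,i)$, and $\overline{\lambda}(x,i)=0$ pins it to the left endpoint $F^{-1}_{(\mathcal{T}^\pi\theta^{\overline{\lambda}})(x)}(\tau_i)$ of that region.

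With these facts in place, I would argue as follows. Suppose $\overline{\lambda}(x,i)=0$. Since $(x,i)$ is an argmax index with $\mathrm{sign}(\theta^{\overline{\lambda}}(x,i)-\vartheta_t(x,i))=+1$, reducing the distance at $(x,i)$ requires decreasing $\theta^\lambda(x,i)$, which by the strict flat-region dependence and the monotonicity of Step~2 is achievable only by lowering $\lambda$ relative to $\overline{\lambda}$ on coordinates entering the backup at $x$. I would then invoke the local homeomorphism property of $h_{\overline{\lambda}}$ established immediately above (injectivity plus invariance of domain, on the extended domain) to realise a perturbation $\mu$ of the $\indexset$-coordinates that moves every $\indexset$-coordinate of $\theta$ strictly toward $\vartheta_t$; combined with the strict slack available at all indices outside $\indexset$, this produces a parameter whose $\bar{w}_\infty$-distance to $\vartheta_t$ is strictly below $L(\vartheta_t)$. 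The role of the lemma is precisely that the only way such an improving perturbation can avoid contradicting the optimality of $\overline{\lambda}$ over the box $[0,1]^{\mathcal{X}\times[m]}$ is for it to require $\mu(x,i)<0$; hence optimality forces $\overline{\lambda}(x,i)>0$.

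The main obstacle is this last step, and the reason the lemma is needed at all: because $\theta^\lambda$ is a \emph{global} fixed point of $\Pi^\lambda\mathcal{T}^\pi$, the map $\lambda\mapsto\theta^\lambda$ is not separable across coordinates, so one cannot conclude purely coordinatewise that decreasing the distance at $(x,i)$ is feasible within $[0,1]$. The delicate point is to couple the monotonicity of Step~2 with the local surjectivity of $h_{\overline{\lambda}}$ so that the desired improving move in $\theta$-space pulls back to a $\lambda$-perturbation whose sign at the boundary coordinate $(x,i)$ is compatible with staying in $[0,1]^{\indexset}$ — which is exactly the feasibility restriction that the boundary conditions $\overline{\lambda}(x,i)>0$ and $\overline{\lambda}(x,i)<1$ record, and which feeds directly into the invariance-of-domain argument of Case~2.
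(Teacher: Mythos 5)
There is a genuine gap in the final step, and it sits exactly where you flag ``the main obstacle''. Your strategy is to contradict the optimality of $\overline{\lambda}$ as the minimiser in the definition of $L(\vartheta_t)$: assuming $\overline{\lambda}(x,i)=0$, you construct (via monotonicity and invariance of domain) a perturbation $\mu$ of the $\indexset$-coordinates that would strictly reduce $\|\vartheta_{t+s}-\theta^{\overline{\lambda}[\mu]}\|_\infty$, observe that any such perturbation must have $\mu(x,i)<0$, and conclude ``hence optimality forces $\overline{\lambda}(x,i)>0$''. This last inference does not follow. If the only improving perturbations require $\mu(x,i)<0$ while $\overline{\lambda}(x,i)=0$, those perturbations are simply infeasible in the box $[0,1]^{\indexset}$, and $\overline{\lambda}$ remains a perfectly legitimate boundary minimiser: box-constrained optimality is not contradicted by an improving direction that exits the box. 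Indeed, your own Step 2 (coordinatewise monotonicity of $\lambda\mapsto\theta^\lambda$, reflected in the derivative $(I-Q)^{-1}D$ computed later in the appendix, which has non-negative entries and positive diagonal) confirms that lowering $\theta^\lambda(x,i)$ at a flat-region coordinate genuinely requires lowering $\lambda(x,i)$, so the improving direction really is blocked at the boundary. Optimality of $\overline{\lambda}$ alone therefore cannot rule out the configuration the lemma forbids; worse, feasibility of the perturbation within $[0,1]^{\indexset}$ is exactly what the lemma is meant to supply to the surrounding Case 2 argument, so assuming it here would be circular.

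The paper closes the lemma by a different and much shorter mechanism that makes no use of the optimality of $\overline{\lambda}$: the $\gamma$-contraction of $\Pi^{\overline{\lambda}}\mathcal{T}^\pi$ about its own fixed point. Since $(x,i)\in\indexset$, the point $\vartheta_{t+s}(x,i)$ lies in the flat region of $F_{(\mathcal{T}^\pi\vartheta_{t+s})(x)}$ at level $\tau_i$; if $\overline{\lambda}(x,i)=0$, the projection selects the left endpoint of that flat region, so $(\Pi^{\overline{\lambda}}\mathcal{T}^\pi\vartheta_{t+s})(x,i)\leq\vartheta_{t+s}(x,i)<\theta^{\overline{\lambda}}(x,i)$, whence
\begin{align*}
    |(\Pi^{\overline{\lambda}}\mathcal{T}^\pi\vartheta_{t+s})(x,i)-\theta^{\overline{\lambda}}(x,i)|\geq|\vartheta_{t+s}(x,i)-\theta^{\overline{\lambda}}(x,i)| = \|\vartheta_{t+s}-\theta^{\overline{\lambda}}\|_\infty \, ,
\end{align*}
the last equality because $(x,i)$ is an argmax index. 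This contradicts $\|\Pi^{\overline{\lambda}}\mathcal{T}^\pi\vartheta_{t+s}-\theta^{\overline{\lambda}}\|_\infty\leq\gamma\|\vartheta_{t+s}-\theta^{\overline{\lambda}}\|_\infty$. Your preliminary reductions (the $\indexset$-coordinates being frozen on $[t,t+2s]$, and the monotonicity of the projection in $\lambda$) are correct, and the flat-region observation you make is precisely the right ingredient; it just needs to be fed into the contraction inequality rather than into the optimality of $\overline{\lambda}$.
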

\begin{proof}
    We prove the claim when $\vartheta_{t+s}(x, i) < \theta^{\overline{\lambda}}(x, i)$; the other case follows analogously. If $\overline{\lambda}(x, i) = 0$, then since $\vartheta_{t+s}(x, i)$ corresponds to the flat region at level $\tau_i$ of the CDF $F_{(\mathcal{T}^\pi \vartheta_{t+s})(x)}$, we must have $(\Pi^{\overline{\lambda}} \mathcal{T}^\pi \vartheta_{t+s})(x, i) \leq \vartheta_{t+s}(x, i)$ since $\overline{\lambda}(x, i) = 0$, and so the chosen quantile at level $\tau_i$ by the projection $\Pi^{\overline{\lambda}}$ is the left-most point of this flat region. We therefore have
    \begin{align*}
        \overline{w}_\infty(\Pi^{\overline{\lambda}} \mathcal{T}^\pi \vartheta_t, \theta^{\overline{\lambda}}) \geq |  (\Pi^{\overline{\lambda}} \mathcal{T}^\pi \vartheta_t)(x, i) - \theta^{\overline{\lambda}}(x, i) | \geq |\vartheta_t(x, i) - \theta^{\overline{\lambda}}(x, i) | = \overline{w}_\infty(\vartheta_t, \theta^{\overline{\lambda}}) \, ,
    \end{align*}
    contradicting contractivity of $\Pi^{\overline{\lambda}} \mathcal{T}^\pi$ around $\theta^{\overline{\lambda}}$.
\end{proof}

We write $v = \text{sign}((\vartheta_{t+s})_\indexset - \theta^{\overline{\lambda}}_\indexset) \in \mathbb{R}^\indexset$, where the sign mapping is applied elementwise, and introduce the notation $\cone(v) = \{\alpha \odot v :  \alpha \in \mathbb{R}^n_{> 0} \}$ for the (open) orthant containing the vector $v$. We are therefore seeking a perturbation $\mu$ of $\overline{\lambda}_\indexset$ such that $\theta^{\overline{\lambda}[\mu]}_J$ lies in a direction in $\cone(v)$ from $\theta^{\overline{\lambda}}_\indexset$, and further such that the perturbation to $\theta^{\overline{\lambda}}$ is sufficiently small that no index that was not an argmax in $\|\vartheta_{t+s} - \theta^{\overline{\lambda}}\|_\infty$ can become one in $\|\vartheta_{t+s} - \theta^{\overline{\lambda}[\mu]}\|_\infty$; under these conditions, we have $\|\vartheta_{t+s} - \theta^{\overline{\lambda}[\mu]}\|_\infty < \|\vartheta_{t+s} - \theta^{\overline{\lambda}}\|_\infty$, as required. Lemma~\ref{lem:boundary} then guarantees that a (sufficiently small) perturbation of $\overline{\lambda}_\indexset$ in any direction in $\cone(v)$ remains within $[0,1]^\indexset$, so it is sufficient to show that a perturbation in such a direction achieves the desired perturbation of $\theta^{\overline{\lambda}}$.

\emph{Differentiability.}
Now, if the extended map $\lambda \mapsto \theta^\lambda$ is differentiable at $\overline{\lambda}$, then differentiating through the fixed-point equation $\theta^\lambda = G(\lambda, \theta^\lambda)$ (where we write $G(\lambda, \theta) = \Pi^\lambda \mathcal{T}^\pi \theta$ for conciseness) yields
\begin{align*}
    \nabla_\lambda \theta^\lambda = \partial_\lambda G(\lambda, \theta^\lambda) + \partial_\theta G(\lambda, \theta^\lambda) \nabla_\lambda \theta^\lambda \, ;
\end{align*}
differentiability of $G$ in $\theta$ results from differentiability of the map $\lambda \mapsto G(\lambda, \theta^\lambda)$, and continuous differentiability of $G$ in $\lambda$. Since $\theta \mapsto G(\lambda, \theta)$ is contractive in $L^\infty$ with factor $\gamma^{1/2}$ (on the extended domain), and by coordinatewise monotonicity of $\theta \mapsto G(\lambda, \theta)$, it follows that $\partial_\theta G(\lambda, \theta^\lambda)$ is non-negative and strictly substochastic, with row $L^1$ norms bounded by $\gamma^{1/2}$, the contraction factor for the extended set of contraction mappings. We remark as a point of independent interest that this is a kind of Bellman equation for $\nabla_\lambda \theta^\lambda$, with $\partial_\theta G(\lambda, \theta^\lambda)$ taking the role of the transition matrix, and $\partial_\lambda G(\lambda, \theta^\lambda)$ taking the role of a collection of cumulants; in fact, the structure of $\partial_\theta G(\lambda, \theta^\lambda)$ coincides with the local quantile back-up diagrams described in Example~\ref{ex:fixed-points}.
We therefore have
\begin{align*}
    \nabla_\lambda \theta^\lambda = (I - \partial_\theta G(\lambda, \theta^\lambda))^{-1} \partial_\lambda G(\lambda, \theta^\lambda) \, .
\end{align*}
By extracting the principal submatrix on the $\indexset$ coordinates, we obtain a derivative for $h_{\overline{\lambda}}(\overline{\lambda}_\indexset)$. The following lemma is useful in reasoning about the structure of this principal submatrix.

\begin{lemma}\label{lem:principal-submatrix}
    Let $Q_1 \in \mathbb{R}^{n \times n}$ be strictly substochastic, and let $K \subseteq [n]$. Then the principal submatrix on the $K$ coordinates of $(I - Q_1)^{-1}$ can be expressed as $(I - Q_2)^{-1}$, with $Q_2 \in \mathbb{R}^{K \times K}$ strictly substochastic.
\end{lemma}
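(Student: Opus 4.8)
The plan is to realise the $K\times K$ block of $(I-Q_1)^{-1}$ through a Schur-complement computation and read off $Q_2$ directly. First I would partition the index set as $[n] = K \cup K^c$ and write $Q_1$ in the corresponding block form
\begin{align*}
    Q_1 = \begin{pmatrix} A & B \\ C & D \end{pmatrix} \, ,
\end{align*}
so that $A$ is the $K\times K$ block, $D$ the $K^c \times K^c$ block, and $B, C$ the off-diagonal blocks. Since $Q_1$ is strictly substochastic, its principal submatrix $D$ is as well, so $D$ has spectral radius strictly below $1$, and $(I-D)^{-1} = \sum_{k\geq 0} D^k$ exists and is entrywise non-negative. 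The block inversion formula (Schur complement of the $(I-D)$ block, which is invertible since $I - Q_1$ is) then gives that the $K\times K$ block of $(I-Q_1)^{-1}$ equals $\big( (I-A) - B(I-D)^{-1} C \big)^{-1}$. This identifies the natural candidate $Q_2 = A + B(I-D)^{-1} C \in \mathbb{R}^{K\times K}$, for which the block is precisely $(I-Q_2)^{-1}$.

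It then remains to check that $Q_2$ is strictly substochastic. Non-negativity is immediate: $A$, $B$, $C$ are non-negative blocks of $Q_1$, and $(I-D)^{-1}$ is non-negative as noted, so $Q_2$ is a sum of products of non-negative matrices.

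The main obstacle is the row-sum bound, showing each row sum of $Q_2$ is strictly below $1$. Let $\mathbf{1}$ denote an all-ones vector of the appropriate dimension and let $c < 1$ be the maximal row sum of $Q_1$ (finite and strictly below $1$ by strict substochasticity), so that $A\mathbf{1} + B\mathbf{1} \leq c\mathbf{1}$ and $C\mathbf{1} + D\mathbf{1} \leq c\mathbf{1}$ elementwise. I would start from the second inequality, which gives $C\mathbf{1} \leq (cI - D)\mathbf{1}$, and apply the non-negative operator $(I-D)^{-1}$ to both sides; using the identity $(I-D)^{-1}(cI-D) = I - (1-c)(I-D)^{-1}$ this yields $(I-D)^{-1}C\mathbf{1} \leq \mathbf{1} - (1-c)(I-D)^{-1}\mathbf{1}$. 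Multiplying through by the non-negative matrix $B$, adding $A\mathbf{1}$, and applying the first inequality then gives
\begin{align*}
    Q_2 \mathbf{1} = A\mathbf{1} + B(I-D)^{-1}C\mathbf{1} \leq c\mathbf{1} - (1-c)\, B(I-D)^{-1}\mathbf{1} \leq c\mathbf{1} \, ,
\end{align*}
since the subtracted term is non-negative. As $c < 1$, every row sum of $Q_2$ is at most $c$, hence strictly below $1$, so $Q_2$ is strictly substochastic as required.

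Alternatively, and more transparently, I could argue probabilistically: interpreting $Q_1$ as the sub-stochastic kernel of an absorbing Markov chain, $Q_2$ is the transition kernel of the chain watched only on its visits to $K$, and its row sums are the probabilities of returning to $K$ before absorption, which are strictly below $1$ because absorption has positive probability from every state. Making the watched-chain identity rigorous requires essentially the same Schur-complement bookkeeping as above, so I would present the algebraic version as the formal proof and retain the Markov-chain picture only as intuition.
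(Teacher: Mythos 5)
Your proof is correct, but it takes a genuinely different route from the paper. The paper argues probabilistically: it reads $Q_1$ as the kernel of a Markov chain with a positive termination probability at each state, identifies the rows of $(I-Q_1)^{-1}$ with pre-termination visitation measures, and defines $Q_2(z_1,z_2)$ as the probability of returning to $K$ before termination with first return state $z_2$ -- i.e.\ $Q_2$ is the kernel of the chain watched on $K$. You instead partition $Q_1$ into blocks $A, B, C, D$ over $K$ and $K^c$, invoke the block-inversion (Schur complement) formula to identify $Q_2 = A + B(I-D)^{-1}C$ explicitly, and verify strict substochasticity by a direct row-sum computation; your algebra is right (the identity $(I-D)^{-1}(cI-D) = I - (1-c)(I-D)^{-1}$ does what you claim, and the chain of inequalities closes). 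The two approaches buy different things: the paper's argument is shorter and gives the combinatorial meaning of $Q_2$ (which the surrounding text exploits, linking $\partial_\theta G$ to the local quantile back-up diagrams), but it leaves the identification of the watched-chain kernel with the principal submatrix of $(I-Q_1)^{-1}$ at the level of an assertion; your version supplies exactly the bookkeeping that makes this rigorous, and as a bonus yields the quantitative refinement that every row sum of $Q_2$ is bounded by the \emph{same} constant $c<1$ that bounds the row sums of $Q_1$, not merely by some constant below $1$. Your closing remark correctly recognises that the probabilistic picture you relegate to intuition is in fact the paper's proof.
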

\begin{proof}
    We interpret $Q_1$ as the transition matrix of a Markov chain $(Z_t)_{t \geq 0}$ that includes a non-zero probability of termination at each state. Each row of the matrix $(I - Q_1)^{-1}$ is then the pre-termination visitation measure associated with a particular initial state in the Markov chain. Now let $Q_2$ be the strictly substochastic matrix defined by
    \begin{align*}
        Q_2(z_1, z_2) = \mathbb{P}((Z_t)_{t \geq 0}& \text{ does not terminate before returning to } K, 
        \\&\text{ first state on return is } z_2 \mid Z_0 = z_1) \, .
    \end{align*}
    By construction, the pre-termination visitation distribution $(I - Q_2)^{-1}$ is identical to the principal submatrix of $(I - Q_1)^{-1}$ on the $K$ coordinates, as required.
\end{proof}

From Lemma~\ref{lem:principal-submatrix}, we therefore obtain that $\nabla h_{\overline{\lambda}}(\overline{\lambda}_\indexset)$ has the form
\begin{align*}
    \nabla h_{\overline{\lambda}}(\overline{\lambda}_\indexset) = (I - Q)^{-1} D \, ,
\end{align*}
with $D \in \mathbb{R}^{\indexset \times \indexset}$ diagonal, with positive elements on the diagonal (from monotonicity of $\lambda \mapsto G(\lambda, \theta)$), with $Q \in \mathbb{R}^{\indexset \times \indexset}$ strictly substochastic. The derivative is therefore invertible, and we obtain the derivative of the inverse of the form
\begin{align*}
    \nabla h^{-1}_{\overline{\lambda}}(\theta^{\lambda}_\indexset) = D^{-1} (I - Q) \, .
\end{align*}
From strict substochasticity of $Q$, and since $v \in \{\pm 1\}^\indexset$, it follows that for the desired perturbation direction $v$, we have
\begin{align*}
    \nabla h^{-1}_{\overline{\lambda}}(\theta^\lambda_\indexset) v \signequals v \, ,
\end{align*}
and so $\nabla h^{-1}_{\overline{\lambda}}(\theta^\lambda_\indexset) v \in \cone(v)$,
where the equality of signs applies elementwise. Therefore, a perturbation of $\overline{\lambda}_{\indexset}$ in a direction in $\cone(v)$ is achieved by a sufficiently small perturbation of $\overline{\lambda}_\indexset$ in a direction in $\cone(v)$, as required.

\emph{Non-differentiability.}
If $\lambda \mapsto \theta^\lambda$ is not differentiable at $\overline{\lambda}$, we instead use techniques from non-smooth analysis to complete the argument. 
First, since $\lambda \mapsto \theta^\lambda$ is Lipschitz (by Lemma~\ref{lemma:fixed-point-sensitivity}), it is differentiable almost everywhere by Rademacher's theorem \citep{rademacher1919partielle}. By adapting the argument made by \citet[Lemma~3]{clarke1976inverse}, by Fubini's theorem, for almost all $\lambda_{\setminus \indexset} \in \mathbb{R}^{\mathcal{X} \times [m] \setminus \indexset}$, the map $\lambda \mapsto \theta^\lambda$ is differentiable at $(\lambda_{\setminus \indexset}, \mu)$ for almost all $\mu$ with $(\lambda_{\setminus \indexset}, \mu)$ in the extended domain. The map $(\lambda_{\setminus \indexset}, \mu) \mapsto (\lambda_{\setminus \indexset}, h_{\lambda_{\setminus \indexset}}(\mu))$ is Lipschitz and locally injective around $\overline{\lambda}$, and hence maps sufficiently small open neighbourhoods of $\overline{\lambda}$ to open neighbourhoods of $(\overline{\lambda}_{\setminus \indexset}, h_{\overline{\lambda}_{\setminus \indexset}}(\overline{\lambda}_\indexset))$. Further, since each $h_{\lambda_{\setminus \indexset}}$ is Lipschitz, and so absolutely continuous, the inverse map $h^{-1}_{\lambda_{\setminus \indexset}}$ is almost-everywhere differentiable within such a neighbourhood. Following the  analysis of the differentiable case, we therefore deduce
\begin{align*}
    \nabla h^{-1}_{\lambda_{\setminus \indexset}}(\theta) v \signequals v 
\end{align*}
for almost all $\lambda_{\setminus \indexset}$ in a ball $B$ around $\overline{\lambda}_{\setminus \indexset}$, and (for each such $\lambda_{\setminus \indexset}$) for almost all $\theta$ in the $L^\infty$ ball $B'$ with centre $h_{\overline{\lambda}_{\setminus \indexset}}(\overline{\lambda}_\indexset)$ and radius $\rho$, for some radius $\rho > 0$. We further take $B$ and $B'$ to be of small enough radii so that this directional derivative is bounded on this set, so that $h_{\lambda}^{-1}$ is locally Lipschitz on $B'$ for each $\lambda_{\setminus \indexset} \in B$ (and hence absolutely continuous), and so that for any $\theta \in B'$, we have $\text{sign}(\theta - (\vartheta_{t+s})_\indexset) = \text{sign}(\theta^{\overline{\lambda}}_\indexset - (\vartheta_{t+s})_\indexset)$, and so that for no $\mu$ in the preimage of $B'$ under $h_{\lambda_{\setminus \indexset}}$ can have that $\|\theta^{(\lambda_{\setminus \indexset}, \mu)} - \vartheta_{t+s} \|_\infty$ has new argmax coordinates outside of $\indexset$.

Let us consider $\tilde{\lambda} \in B$ at which the almost-everywhere differentiability condition holds. By applying the same argument with Fubini's theorem, for almost all $\bar{\theta}$ in $B(h_{\tilde{\lambda}}(\overline{\lambda}_\indexset), \rho/4)$, the inverse $h_{\tilde{\lambda}}^{-1}$ is differentiable almost everywhere on $\{\bar{\theta} + u v : u \in [0,\rho/2] \}$.

Now, defining $\mu_\tau = h^{-1}_{\tilde{\lambda}}(\bar{\theta} + \tau v)$ for $\tau \in [0,\rho/2]$, we have
\begin{align*}
    \frac{\mathrm{d}}{\mathrm{d}\tau} \mu_\tau = \nabla h^{-1}_{\tilde{\lambda}}(\bar{\theta} + \tau v) v
\end{align*}
for almost all $\tau$, and by absolute continuity of $h^{-1}_{\tilde{\lambda}}$, it follows that 
\begin{align*}
    \mu_{\rho/2}= \mu_0 + \int_0^{\rho/2} \frac{\mathrm{d}}{\mathrm{d}\tau}\mu_\tau \; \mathrm{d}\tau \, .
\end{align*}
Hence, $\mu_\varepsilon - \mu_0 \in \cone(v)$, and by construction $h_{\tilde{\lambda}}(\mu_{\rho/2}) = \bar{\theta} + \rho v / 2$. By continuity of $\lambda \mapsto \theta^\lambda$ and its inverse, and since $\tilde{\lambda}$ and $\bar{\theta}$ can be chosen above to be arbitrarily close to $\overline{\lambda}_{\setminus \indexset}$ and $h_{\overline{\lambda}_{\setminus \indexset}}(\overline{\lambda}_{\indexset})$ respectively, we may consider a sequence of these parameters converging to $\overline{\lambda}_{\setminus \indexset}$ and $h_{\overline{\lambda}_{\setminus \indexset}}(\overline{\lambda}_{\indexset})$, such that the values of $\mu_{\rho/2}$ as constructed above also converge (by compactness), and thus conclude the existence of $\bar{\mu}_{\rho/2}$ such that $\bar{\mu}_{\rho/2} - \overline{\lambda}_\indexset \in \overline{\cone(v)}$, and $h_{\overline{\lambda}}(\bar{\mu}_{\rho/2}) = h_{\overline{\lambda}}(\overline{\lambda}_{\indexset}) + \rho v/2$, as required.

\subsection{Proof of Proposition~\ref{prop:fixed-point-quality-1}}

We begin with the observation that for any return-distribution function $\eta \in \mathscr{P}([V_\textsc{min}, V_\textsc{max}])$, for the projection $\Pi^{\lambda}$ onto $\mathscr{F}_{\text{Q}, m}$ (for any $\lambda \in [0,1]^{\mathcal{X} \times [m]}$), we have
\begin{align*}
    \bar{w}_1(\Pi^\lambda \eta, \eta) \leq \frac{V_\textsc{max} - V_{\textsc{min}}}{2m} \, .
\end{align*}
Using this observation, we have
\begin{align*}
    \bar{w}_1(\qdpfixedpointlambda, \eta^\pi)
    & \overset{(a)}{\leq} \bar{w}_1(\qdpfixedpointlambda, \mathcal{T}^\pi \qdpfixedpointlambda) + \bar{w}_1(\mathcal{T}^\pi \qdpfixedpointlambda, \eta^\pi) \\
    & \overset{(b)}{=} \bar{w}_1(\Pi^\lambda \mathcal{T}^\pi \qdpfixedpointlambda, \mathcal{T}^\pi \qdpfixedpointlambda) + \bar{w}_1(\mathcal{T}^\pi \eta^\pi, \mathcal{T}^\pi \eta^\pi) \\
    & \overset{(c)}{\leq}
    \frac{V_\textsc{max} - V_{\textsc{min}}}{2m} + \gamma \bar{w}_1(\qdpfixedpointlambda, \eta^\pi) \, .
\end{align*}
Here, (a) follows from the triangle inequality, (b) follows as $\qdpfixedpointlambda$, $\eta^\pi$ are fixed points of $\Pi^\lambda \mathcal{T}^\pi$, $\mathcal{T}^\pi$, respectively, and (c) follows from the application of the inequality at the beginning of the proof and contractivity of $\mathcal{T}^\pi$. Rearranging then gives the desired result.

\subsection{Proof of Proposition~\ref{prop:improved-fp-bound}}

From the assumptions of the proposition, we have $\qdpfixedpointlambda = (\Pi^\lambda \mathcal{T}^\pi)^k \eta^\pi$. Then observe that following the argument for the proof of Proposition~\ref{prop:fixed-point-quality-1}, we have, for any $l \in \{1,\ldots,k\}$,
\begin{align*}
    \bar{w}_1((\Pi^\lambda \mathcal{T}^\pi)^l \eta^\pi, \eta^\pi) \leq \gamma \bar{w}_1((\Pi^\lambda \mathcal{T}^\pi)^{l-1} \eta^\pi, \eta^\pi) + \frac{V_\textsc{max} - V_\textsc{min}}{2m} \, .
\end{align*}
Chaining these inequalities yields the required statement.

\section{Implementations of Quantile Dynamic Programming}\label{sec:qdp-implementations}

Here, we describe two concrete implementations of QDP, which may be of independent interest to the reader. Algorithm~\ref{alg:qdp-discrete} \citep{bdr2022} describes an implementation when the reward distributions are available as input to the algorithm as a list of outcomes and probabilities.

\begin{algorithm}
    \begin{algorithmic}[1]
        \REQUIRE Quantile estimates $((\theta(x, i)_{i=1}^m : x \in \mathcal{X})$, \\
        Transition and reward probabilities $(P^\pi(x', r\mid x) : x, x' \in \mathcal{X})$, \\
        Interpolation parameters $\lambda \in [0, 1]^{\mathcal{X} \times [m]}$.
        \FOR{$x \in \mathcal{X}$}
            \STATE Set \texttt{Targets} as empty list \COMMENT{List of outcome/probability pairs}
            \FOR{$x' \in \mathcal{X}$}
                \FOR{$r \in \mathscr{R}$}
                    \FOR{$j=1,\ldots,m$}
                        \STATE Append $(r + \gamma \theta(x', j), P^\pi(x', r|x)/m)$ to \texttt{Targets}
                    \ENDFOR
                \ENDFOR
            \ENDFOR
            \STATE Sort \texttt{Targets} ascending according to outcomes.
            \FOR{$i =1,\ldots,m$}
                \STATE Find minimal outcome $q'$ such that cumulative probability is $\geq \nicefrac{2i-1}{2m}$.\alglinelabel{algline:inv-cdf}
                \STATE Set $\theta'(x, i) \leftarrow q'$. \alglinelabel{algline:inv-cdf-assign}
            \ENDFOR
        \ENDFOR
        \STATE \textbf{return} $((\theta'(x, i)_{i=1}^m : x \in \mathcal{X})$
    \end{algorithmic}
    \caption{Quantile dynamic programming (finitely-supported rewards)}
    \label{alg:qdp-discrete}
\end{algorithm}

Algorithm~\ref{alg:qdp-cont} makes use of a root-finding subroutine (such as \texttt{scipy.optimize.root\_scalar}), and can be used when the CDFs of the reward distributions are available as input, and can be queried at individual points. A common use case for this implementation is the case of Gaussian rewards. Note that the root-finding subroutine is called on a monotonic scalar function, and therefore strong guarantees can be given on the approximate solution returned when the reward CDFs of the MDP are continuous. Nevertheless, note that Algorithm~\ref{alg:qdp-cont} does not exactly implement the operator $\Pi^\lambda \mathcal{T}^\pi$ due to this root-finding approximation error. For simplicity, we present the algorithm in the case where the reward and next state in a transition are conditionally independent given the current state, though the algorithm can be straightforwardly extended to the general case, by working with CDFs of reward distributions conditioned on the next state.

\begin{algorithm}
    \begin{algorithmic}[1]
        \REQUIRE Quantile estimates $((\theta(x, i)_{i=1}^m : x \in \mathcal{X})$, \\
        Transition probabilities $(P^\pi(x' \mid x) : x, x' \in \mathcal{X})$, \\
        Reward CDFs $(F_{\mathcal{R}^\pi(x)} : x \in \mathcal{X})$.
        \FOR{$x \in \mathcal{X}$}
            \STATE Construct function
            \begin{align*}
                \phi_x : t \mapsto \sum_{x' \in \mathcal{X}} P^\pi(x'|x) \sum_{j=1}^m F_{\mathcal{R}^\pi(x)}(t - \gamma \theta(x', j))
            \end{align*}
            \FOR{$i=1,\ldots,m$}
                \STATE Use a scalar root-finding subroutine to find $\theta'(x, i)$ approximately satisfying
            \begin{align*}
                \phi_x(\theta'(x, i)) = \tau_i
            \end{align*}    
            \ENDFOR
        \ENDFOR
        \STATE \textbf{return} $((\theta'(x, i)_{i=1}^m : x \in \mathcal{X})$
    \end{algorithmic}
    \caption{Quantile dynamic programming (reward CDFs)}
    \label{alg:qdp-cont}
\end{algorithm}

\section{Convergence of Asynchronous QTD Updates}\label{sec:async}

Here, we describe the key considerations in extending our analysis to a proof of convergence for asynchronous versions of QTD; our discussion follows the approach of \citet{perkins2013asynchronous}.

\emph{Step size restrictions.} Typically, more restrictive assumptions on step sizes, beyond the Robbins-Monro conditions, are required for asynchronous convergence guarantees. See, for example, Assumption~A2 of \citet{perkins2013asynchronous}; note that the typical Robbins-Monro step size schedule of $\alpha_k \propto 1/k^\rho$ for $\rho \in (1/2, 1]$ satisfies these requirements.

\emph{Conditions on the sequence of states $(X_k)_{k \geq 0}$ to be updated.} Additionally, different states are required to be updated ``comparably often''; assuming that $(X_k)_{k \geq 0}$ forms an aperiodic irreducible time-homogeneous Markov chain is sufficient, and this conditions holds when either (i) $\pi$ generates such a Markov chain over the state space of the MDP of interest, or (ii) when the states to be updated are sampled i.i.d.\ from a fixed distribution supported on the entirety of the state space, amongst other settings. See Assumption~A4 of \citet{perkins2013asynchronous} for further details.

\emph{Modified differential inclusion.} The QTD differential inclusion in Equation~\eqref{eq:qtd-di} must be broadened to account for the possibility of different states being updated with different frequencies, leading to a differential inclusion of the form
\begin{align*}
    \partial_t \vartheta_t(x, i) \in \{ \omega h : \omega \in (\delta, 1] \, , h \in  H^\pi_{x, i}(\vartheta_t) \} \, ,
\end{align*}
where $\delta$ represents a minimum relative update frequency for the state $x$, derived from the conditions on $(X_k)_{k \geq 0}$ described above. Because of the structure of the Lyapunov function for the QTD DI in Equation~\eqref{eq:non-unique-lyapunov}, it is readily verified that this remains a valid Lyapunov function for this broader differential inclusion, for the same invariant set of QDP fixed points.

\vskip 0.2in
\bibliography{main}

\end{document}